\documentclass{article}

\usepackage{microtype}
\usepackage{graphicx}
\usepackage{subcaption}
\usepackage{caption}
\usepackage{booktabs} 
\usepackage{comment}

\usepackage{hyperref}

\usepackage[accepted]{icml2026}

\usepackage{amsmath}
\usepackage{amssymb}
\usepackage{mathtools}
\usepackage{dsfont}
\usepackage{amsthm}
\usepackage{algorithm}
\usepackage{graphicx} 

\usepackage[capitalize,noabbrev]{cleveref}

\theoremstyle{plain}
\newtheorem{theorem}{Theorem}[section]
\newtheorem{proposition}[theorem]{Proposition}
\newtheorem{lemma}[theorem]{Lemma}
\newtheorem{corollary}[theorem]{Corollary}
\theoremstyle{definition}

\theoremstyle{remark}

\renewcommand{\vec}[1]{%
  \mspace{2mu}%
  \underline{\mspace{-2mu}#1\mspace{-2mu}}%
  \mspace{2mu}%
}

\renewcommand{\vec}[1]{%
  \mspace{2mu}%
  \underline{\mspace{-2mu}#1\mspace{-2mu}}%
  \mspace{2mu}%
}

\icmltitlerunning{Semi-Supervised Hypothesis Testing by Betting on Predictions}

\begin{document}

\twocolumn[
\icmltitle{Semi-Supervised Hypothesis Testing by Betting on Predictions}



\icmlsetsymbol{equal}{*}

\begin{icmlauthorlist}
\icmlauthor{Yaniv Tenzer}{equal,cs}
\icmlauthor{Elad Tolochinsky}{equal,cs}
\icmlauthor{Yaniv Romano}{cs,ee}

\end{icmlauthorlist}

\icmlaffiliation{cs}{Department of Computer Science, Technion – Israel Institute of Technology}

\icmlaffiliation{ee}{Department of Electrical and Computer Engineering, Technion – Israel Institute of Technology}

\icmlcorrespondingauthor{Yaniv Tenzer}{yanivt@technion.ac.il}
\icmlcorrespondingauthor{Elad Tolochinsky}{elad.t@cs.technion.ac.il}

\icmlkeywords{Machine Learning, ICML}

\vskip 0.3in
]



\printAffiliationsAndNotice{\icmlEqualContribution} 
\begin{abstract}

We introduce a testing-by-betting framework that leverages predictions on unlabeled data to enhance the power of sequential hypothesis testing. Given limited samples from the joint distribution of $(X,Y)$, and additional unlabeled samples from the marginal of $X$, we ask how unlabeled data can be used to hypothesize about the distribution of $Y$, and the conditional distribution of $Y\mid X$. We introduce an e-statistic and use it to construct a sequential test. Under standard distributional assumptions---label shift or concept shift---we establish that the test is anytime valid. Furthermore, we show that for binary data, the e-statistic has non-trivial power. Crucially, our approach retains these properties even when the underlying predictions are inaccurate. 
Through simulations and applications to large language models evaluation, we demonstrate power gains over baseline approaches, including prediction-powered inference. These gains persist even with relatively limited unlabeled data and when predictions have low accuracy due to weak correlation between $X$ and $Y$.

\end{abstract}

\section{Introduction}
\label{sec:introduction}
Hypothesis testing for an unknown parameter is a foundational problem in statistics, with applications spanning scientific research and modern machine learning. As a concrete example, consider evaluating a newly trained large language model (LLM): one may ask whether the candidate model achieves higher accuracy than an existing baseline. Let $Y$ be a random variable whose distribution is parameterized by an unknown parameter $\theta_Y$; in this example, $\theta_Y$ corresponds to the candidate model's accuracy. Let $\theta_Y^{\text{null}}$ denote a reference null value, e.g., the baseline model's accuracy. We thus consider the one-sided hypothesis test
\[
\begin{aligned}
(\mathcal{P}_1)\quad & H_0:\ \theta_Y = \theta_Y^{\textsuperscript{null}}
\;\;\text{versus}\;\;
H_1:\ \theta_Y > \theta_Y^{\textsuperscript{null}} .
\end{aligned}
\]
A closely related problem is testing whether a \emph{conditional} distribution deviates from a specified reference null. For example, imagine a pharmaceutical company that is interested in testing whether the efficacy of a new treatment $Y$ improves for at least one subgroup of $X$ (e.g., gender), relative to a standard reference treatment. Suppose the conditional distribution $Y \mid X=x$ is parameterized by $\theta_{Y\mid X=x}$; in this example, $\theta_{Y\mid X=x}$ captures the treatment's efficacy within subgroup $X=x$. With this in place, we consider the following hypothesis test:
\[
\begin{aligned}
(\mathcal{P}_2)\quad  H_0:\ & \theta_{Y \mid X} = \theta_{Y \mid X}^{\textsuperscript{null}} \ \ \textup{for all $X=x$}
\\
H_1:\ &\theta_{Y \mid X} > \theta_{Y \mid X}^{\textsuperscript{null}} \ \ \textup{for at least one $X=x$}.
\end{aligned}
\]
We refer to $H_0$ and $H_1$ as the null and alternative hypothesis, respectively, and denote by $H_0(\mathcal{P}_1), H_0(\mathcal{P}_2)$ the null hypothesis of $\mathcal{P}_1, \mathcal{P}_2$, respectively. Similarly, $H_1(\mathcal{P}_1), H_1(\mathcal{P}_2)$ denote the alternative hypothesis of $\mathcal{P}_1, \mathcal{P}_2$, respectively.

Common testing procedures aim at testing the null hypothesis by collecting data and quantifying the probability of the observed data under the null. Conceptually, the quantified probability serves as evidence against the null. Two desired properties of any statistical test are \emph{validity} and \emph{power}. Formally, we say that a test is valid at level $\alpha\in (0,1)$ if the test erroneously rejects the null hypothesis, when it is actually true, with probability at most $\alpha$. This kind of erroneous rejection is known as type-I error. As mentioned, the second desired property is \emph{power}. Formally, we say that a test has power $\beta \in (0,1)$ if it rejects the null with probability $\beta$, when the alternative is true. A core challenge in statistical inference is to design tests that are both valid and powerful.

Traditional hypothesis testing procedures require the sample size to be specified in advance, before any data are collected. Consequently, the sample size cannot be adjusted as data are collected and evidence against the null is accumulated. Unfortunately, this can lead to inefficient data collection, with many samples being unnecessarily allocated to easy testing problems, or too few samples being allocated to more challenging settings, causing the test to terminate before obtaining sufficient evidence against the null. In contrast, sequential
testing procedures can allow the null hypothesis to be tested dynamically
at each stage of data collection, enabling the test to stop as
soon as sufficient evidence has accumulated. We refer
to such tests as ones that attain \emph{anytime valid} type-I error
control. In this context, a powerful test is a test that under the alternative quickly rejects the null.

Most recent online testing approaches are based on the general testing by betting framework, introduced in \cite{shafer2021testing}. Under this paradigm, a fictitious bettor sequentially bets against the null, with the bettor's wealth serving as a running measure of statistical evidence. At each step, the wealth increases or decreases according to a payoff function evaluated on fresh samples. To obtain a valid test, the payoff function must be chosen so that the resulting betting game is fair: under the null, the expected wealth of the bettor should not exceed one. When the alternative is true, a well-designed betting strategy can yield rapid growth of wealth, thereby providing high testing power. 

Unlike the classic sequential settings, whereby at each step we observe only paired $(X, Y)$ samples, here we consider the setting in which at each step $t$, we observe a batch of $n$ i.i.d labeled samples $D_t = \{(X_i, Y_i)\}_{i=1}^n$, and additional $N$ i.i.d unlabeled $X$-samples $\vec{\tilde{X}}_t=\{\tilde{X}_i\}_{i=1}^N$. Conceptually, we think of the acquisition of $X$-samples as cheaper than the acquisition of $Y$-samples.
We emphasize that this setting matches many modern applications. For example, consider again the task of LLM evaluation. 
As obtaining human annotations, $Y$, can be expensive, we typically resort to LLM annotations, $X$, which are commonly cheaper to obtain. In this setup, $Y$ samples are typically scarce, while $X$ samples are abundant. A crucial question is therefore: \emph{how can the extra unlabeled $X$-samples be used to enhance power, while still ensuring type-I error control?} 

\subsection*{Preview of Our Method and Main Contributions}
Our first key contribution—presented in section~\ref{sec: the method}—is a rigorous construction that leverages unlabeled data to enhance test power by \emph{betting on predictions}. At each step of the test, we fit a predictive model using fresh labeled data $D_t$ and apply it to assign predictions to unlabeled samples $\vec{\tilde{X}}_t$.
We then show how to construct two valid sequential tests—for $\mathcal{P}_1$ and $\mathcal{P}_2$—that utilize these predictions. 

Importantly, a priori, any sequential test that uses the observed data $D_t, \vec{\tilde{X}}_t$, may only be used to infer about the joint distribution of ($X,Y)$. However, we wish to infer either about $Y$ or about $Y \mid X$. Therefore, a fundamental question is: when can one draw conclusions about $Y$ and $Y|X$ using $D$ and $\vec{\tilde{X}}$ samples? We formalize identifiability conditions that facilitate such inference, and show that under these conditions, the resulting tests attain anytime type-I error control in finite samples, regardless of the accuracy of the predictive model.

Our second key contribution---presented in section~\ref{sec:consistency_results}---is a power analysis of  our test statistic. Specifically, we focus on the contingency tables setup, i.e., $X$ and $Y$ are binary variables. We note that this is a fundamental structure in statistical analysis that is the subject of ongoing research~\cite{turner2023exact} and is prevalent in many popular real-life scenarios such as clinical trials~\cite{polack2020safety}, A/B testing~\cite{kohavi2022b}, and bias analysis~\cite{obermeyer2019dissecting}.
Our analysis reveals that even inaccurate models suffice to yield non-trivial power test statistics. 

In simulations and real-world experiments, we demonstrate that our approach shows meaningful improvement in power even when $X$ and $Y$ have low correlation, and the number of unlabeled samples, $N$, is small.  
Experiments also demonstrate that our method can enhance power even when applied alongside strong baselines that utilize unlabeled data.  Code is available at \url{https://github.com/elad-tolo/betting_on_predictions}

\section{Background and related work}
\subsection{Testing by betting: e-value and e-process}
\label{sec:background}
An e-statistic, $e$, is any non-negative function of the data $D$, such that  $\mathbb{E}_{H_0}[e(D)] \leq 1$. A realization of an e-statistic will be referred to as an e-value. By Markov's inequality, $P_{H_0}(e(D)>1 /\ \alpha)\leq \alpha$. Therefore, the type-I error of the test $\mathds{1}\{e(D)\geq 1/\alpha\}$ is bounded by $\alpha$, and large e-values constitute evidence against the null hypothesis. An e-statistic, $e$, is said to have non-trivial power against an alternative $H_1$ if $\mathbb{E}_{H_1}[e(D)] > 1$.

 In a sequential setting, we observe a stream of data $D_1, D_2,\dots$, $D_t$ for $t \in \mathds{N}$. We define $D_{-t}=\bigcup_{i=1}^{t-1}D_i$ and let $\mathcal{F} = (\mathcal{F}_t)_{t \in \mathds{N}}$ be the natural filtration of the data. Conditional e-statistics are random variables $e(D_t, D_{-t})$, which satisfies $\mathds{E}_{H_0}[e(D_t, D_{-t})\mid \mathcal{F}_{t-1}]\leq1$, for all $t\in \mathds{N}$, where for $t=1$, the expectation is supposed to be read unconditionally. Intuitively, the conditional e-statistic at time $t$ measures the evidence in round $t$ against $H_0$ given past data. The cumulative product of these conditional e-statistics $S_t = \prod_{i=1}^te(D_i, D_{-i})$ is an \emph{e-process} and measures the total accumulated evidence against the null hypothesis. Indeed, by Ville's inequality, for any $\alpha>0$, $P_{H_0}(\exists t: \hspace{0.04in} S_t>1/\alpha)\leq \alpha$ \cite{ville1939etude}.  A sequential test can thus be defined by monitoring $S_t$ and rejecting the null if it exceeds $1/\alpha$. 
For a complete introduction to statistical testing using e-values, see~\cite{ramdas2025hypothesis}.

\subsection{Related work}
Our method is inspired by the popular construction of soft-rank e-value~\cite{ramdas2025hypothesis} and by the recent work of~\cite{grunwald2024anytime} that utilizes a similar construction to test conditional independence. Similar to these works, we sample additional data from the null to construct our e-statistic. However, unlike these works, our construction involves predictions derived from unlabeled data. 

Prediction-powered inference (PPI) is a recent work that tackles the challenge of using unlabeled data for inference~\cite{angelopoulos2023prediction,angelopoulos2023ppi++}. This semi-supervised framework constructs unbiased estimates and valid confidence intervals for statistical parameters of the distribution of $Y$, such as their mean or quantiles, using predictions assigned to the unlabeled $\tilde{X}$-s. Several recent works used this framework, including applications for risk-control~\cite{einbinder2025semi} and hypothesis testing~\cite{csillag2025prediction, kilian2026anytime}. The advantage of PPI is mainly evident when $N$ is large, and the predictive model is fairly accurate. In such cases, the variance of the estimate is reduced compared to an estimate that only uses the labeled data, which results in a more powerful test. In contrast, our approach can gain power even when $N$ is small, and the predictive model is inaccurate. Another key difference is that in PPI, bets are placed on labeled data, while in our approach, bets are placed directly on predictions. We refer the reader to Appendix~\ref{sec:ppi_test_definition} where we formally define a sequential test based on PPI and discuss these differences in detail.

\section{Our Method: Betting on Predictions}
\label{sec: the method}

The following notations will be useful in the sequel. We denote the joint distribution of $X, Y$ under the null with $P^{\textup{null}}_{XY}$, and the marginal distribution of $X$ by $P^{\textup{null}}_X$. We let $P^\textup{null}(D, \vec{\tilde{X}})$ be the distribution of the observed data under the null. As a first step towards a sequential test, we consider the offline case where we are given a single dataset $D$ of labeled data and a single unlabeled dataset $\vec{\tilde{X}}$.

\subsection{Batch Mode: Imputed E-Statistic}
\label{subsec: imputed_e_statistic}

 Let $\mathcal{A}$ be a learning algorithm. We run $\mathcal{A}$ to fit a predictive model on the labeled data $D$, and use the resulting model to predict to the unlabeled data points. We denote by $\mathcal{A}[D, \vec{\tilde{X}}] = (\mathcal{A}[D, \tilde{X}_1], \dots, \mathcal{A}[D, \tilde{X}_N])$ the vector of predictions. Importantly, the same  model is used to predict the labels of all unlabeled data points.
 
 Next, we introduce a novel e-statistic construction that gains evidence against the null from predicted labels:
\begin{equation}
\label{ML_e_value}
   e[D,\tilde{\vec{X}}] = \frac{K(\mathcal{A}[D, \vec{\tilde{X}}])}{\mathbb{E}_{P^\textup{null}(D, \vec{\tilde{X}})}[\mathcal{A}[D, \vec{\tilde{X}}])]},  
\end{equation}
where $K: \mathbb{R}^N \to \mathbb{R}^+$ is a non-negative function. Concretely, for a binary $\tilde{\vec{Y}}$, in our experiments we use $K(\vec{\tilde{y}}) = \sum_{i=1}^N \exp(\gamma \tilde{y}_i)$ for some hyper-parameter $\gamma \in \mathbb{R}^+$. 

Unfortunately, calculating~\eqref{ML_e_value} 
can be infeasible due to the expectation in the denominator. Luckily, we can bypass this by sampling $M$ independent data sets from the null, $(D^1,\tilde{\vec{X}}^1),\ldots, (D^M, \tilde{\vec{X}}^M)$. Denote $(D^0,\tilde{\vec{X}}^0) = (D, \tilde{\vec{X}})$ and $\vec{\tilde{Y}}^{i} = \mathcal{A}[D^i, \vec{\tilde{X}}^i]$. 
We define the finite-sample imputed e-statistic as follows:
\begin{align}
\label{eq: uncoditional_e_statistic_finite_sample}
 \breve{e}[D, \vec{\tilde{X}}] = \frac{(M+1)K(\tilde{\vec{Y}}^0)}{ K(\tilde{\vec{Y}}_0)+\sum_{i=1}^M K(\tilde{\vec{Y}}^i)}.   
\end{align}
Two notes are in place: firstly, $\breve{e}[D, \vec{\tilde{X}}]$ follows the structure of a soft-rank e-value \cite{ramdas2025hypothesis}. Indeed, if $K(\cdot)$ also preserves some notion of order, then $\breve{e}[D, \vec{\tilde{X}}]$ is a soft-rank e-value. We discuss this in detail in Section~\ref{sec:consistency_results}. Secondly, since $(D^1,\tilde{\vec{X}}^1), \dots, (D^M,\tilde{\vec{X}}^M)$ are sampled i.i.d from the null, under the null hypothesis, they are exchangeable with the observed data $(D^0,\tilde{\vec{X}}^0)$. Therefore, $\breve{e}[D, \vec{\tilde{X}}]$ is designed to test exchangeability of $(D^0,\tilde{\vec{X}}^0)$ with $(D^1,\tilde{\vec{X}}^1), \dots, (D^M,\tilde{\vec{X}}^M)$. However, our goal is to infer about $Y$ ($\mathcal{P}_1$) or about $Y\mid X$ ($\mathcal{P}_2$). Thus, we must derive conditions under which testing exchangeability is equivalent to testing $\mathcal{P}_1$ or $\mathcal{P}_2$. We next address this problem.

\subsection{Identifiability Conditions and Validity}
Starting with $\mathcal{P}_1$, we aim to identify conditions under which rejecting the equality in distribution hypothesis, $(D^0,\tilde{\vec{X}}^0) \overset{d}{=} (D^i,\tilde{\vec{X}}^i)$, is equivalent to rejecting $H_0(\mathcal{P}_1)$. Observe that: 
\[
P^\textup{null}(D, \vec{\tilde{X}}) = \prod_{i=1}^n P^\textup{null}_{XY}(x_i, y_i)\prod_{i=1}^N P^\textup{null}_X(\tilde{x}_i).
\]
We decompose $P_{XY}$ as follows: 
$P^{\textup{null}}_{XY}(x_i, y_i)=P^\textup{null}_Y(y_i)P^\textup{null}_{X\mid Y}(x_i| y_i)$. When $P_{X\mid Y}$ is fixed between the null and observed distributions, the above decomposition implies that a shift in $P^\textup{null}$ must be attributed to either a shift in $P^\textup{null}_Y$ or in $P^\textup{null}_X$. However, if $P_{X \mid Y}$ is fixed,  then a shift in $P^\textup{null}_X$ must also indicate a shift in $P^\textup{null}_Y$. Indeed, $P_X(x) = \int P_{X \mid Y}(x\mid Y=y)P_Y(y)dy$. Thus, if $P_Y$ is not shifted, and $P_{X \mid Y}$ is fixed, then so is $P_X$, which is a contradiction. We conclude that if $P_{X\mid Y}$ is fixed, then any observed deviation from $P^\textup{null}$ must indicate a shift in $P_Y$ and we can use the imputed e-statistics to test $\mathcal{P}_1$. The assumption of a fixed $P_{X \mid Y}$ is known as a \emph{label shift} setting.

Moving to $\mathcal{P}_2$, our goal is to infer about $P_{Y\mid X}$. We now decompose $P_{XY}$ as follows:
$P^{\textup{null}}_{XY}(x_i, y_i)=P^\textup{null}_X(x_i)P^\textup{null}_{Y\mid X}(y_i| x_i)$.
When $P_{X}$ is fixed, then any observed deviation from $P^\textup{null}$ trivially indicates a shift in $P_{Y\mid X}$ and thus we can use the imputed e-statistics to test $\mathcal{P}_2$.  
 We note that the assumption of a fixed $P_{X}$ is known as a \emph{concept shift} setting.

Next, equipped with the above identifiability conditions, we establish the validity of $e[D,\tilde{\vec{X}}]$ and $\breve{e}[D,\tilde{\vec{X}}]$.
The validity of $e[D,\tilde{\vec{X}}]$ is immediate; indeed, under label shift setting (concept shift), if $H_0(\mathcal{P}_1)$ ($H_0 (\mathcal{P}_2)$) holds, then $(D,\vec{\tilde{X}})\sim P^{\textup{null}}(D,\vec{\tilde{X}})$. Thus, $\mathbb{E}_{H_0(\mathcal{P}_1)}[e[D,\tilde{\vec{X}}]]=1$ and $\mathbb{E}_{H_0(\mathcal{P}_2)}[e[D,\tilde{\vec{X}}]]=1$. Therefore it remains to establish the validity of $\breve{e}[D,\tilde{\vec{X}}]$:
\begin{proposition} (Informal)
\label{validity_of_unconditional_finite_sample_e_statistic}
$\breve{e}[D, \tilde{\vec{X}}]$ is a valid e-statistic with respect to both $\mathcal{P}_1$ and $\mathcal{P}_2$.
\end{proposition}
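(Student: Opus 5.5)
The plan is to reduce the claim to an exchangeability argument, the standard one used for soft-rank e-values. Write $Z^i = (D^i,\tilde{\vec{X}}^i)$ for $i=0,\dots,M$, and set $K_i = K(\mathcal{A}[D^i,\tilde{\vec{X}}^i])$. Note that $\breve{e}[D,\tilde{\vec{X}}] = (M+1)K_0/\sum_{j=0}^M K_j$, with the convention $0/0 := 1$ (or any value at most $1$) when all $K_j$ vanish. The first step is to use the identifiability discussion preceding the statement. Under label shift, if $H_0(\mathcal{P}_1)$ holds then $P_Y = P^{\textup{null}}_Y$ and $P_{X\mid Y}=P^{\textup{null}}_{X\mid Y}$, so $P_{XY}=P^{\textup{null}}_{XY}$ and $P_X = P^{\textup{null}}_X$. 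Hence $Z^0 \sim P^{\textup{null}}(D,\tilde{\vec{X}})$. The concept-shift case with $H_0(\mathcal{P}_2)$ is identical, using the decomposition $P_X P_{Y\mid X}$. Since $Z^1,\dots,Z^M$ are drawn i.i.d.\ from $P^{\textup{null}}$ independently of $Z^0$, under either null the vector $(Z^0,\dots,Z^M)$ is i.i.d., and in particular exchangeable.

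The second step is to push exchangeability through the predictions. The model is refit on each $D^i$ by the same algorithm $\mathcal{A}$ and applied to $\tilde{\vec{X}}^i$, so $K_i = g(Z^i)$ for one fixed measurable map $g = K\circ\mathcal{A}$. I will assume $\mathcal{A}$ is deterministic, or that its internal randomness is drawn independently and identically for every $i$, in which case I absorb it into $Z^i$. Then $(K_0,\dots,K_M)$ is exchangeable. This is where I expect the only real care to be needed: the formal version must state that $\mathcal{A}$ is applied symmetrically to the real and the null-sampled datasets, and that $K\ge 0$.

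The final step is the symmetrization computation. By exchangeability, $\mathbb{E}[K_i/\sum_j K_j]$ is the same for every $i$. Summing over $i$ gives
\[
\sum_{i=0}^M \frac{K_i}{\sum_j K_j} = \mathds{1}\Bigl\{\sum_j K_j>0\Bigr\},
\]
so $\mathbb{E}[K_0/\sum_j K_j] \le 1/(M+1)$. Multiplying by $M+1$ yields $\mathbb{E}_{H_0}[\breve{e}]\le 1$. Nonnegativity is immediate from $K\ge 0$. The bound holds with equality when $\sum_j K_j>0$ almost surely, for instance for $K(\tilde{\vec{y}})=\sum_i \exp(\gamma\tilde y_i)$, and it holds for both $\mathcal{P}_1$ and $\mathcal{P}_2$ under their respective shift assumptions.
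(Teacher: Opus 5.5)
Your proposal is correct and is essentially the paper's proof: under label shift (resp.\ concept shift) the null forces $(D,\tilde{\vec{X}})\sim P^{\textup{null}}$, so the $M+1$ datasets are i.i.d.\ and hence exchangeable, and the $M+1$ symmetrized ratios then have equal means that sum to $M+1$. You are a bit more careful than the paper about randomized $\mathcal{A}$ (relevant for the Bayes classifier) and about a vanishing denominator, which the paper tacitly rules out by taking $K>0$. In that degenerate case, note that you need the exact identity $\mathbb{E}[K_0/\sum_j K_j]=P(\sum_j K_j>0)/(M+1)$, not just the upper bound $1/(M+1)$, to absorb the convention value and still conclude $\mathbb{E}_{H_0}[\breve{e}]\le 1$.
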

A formal statement and proof can be found in Appendix~\ref{appendix: validity_and_robustness}. Importantly, Proposition~\ref{validity_of_unconditional_finite_sample_e_statistic} holds regardless of the learning algorithm $\mathcal{A}$, the accuracy of the resulting model, the number of labeled samples, $n$, or unlabeled samples $N$. We next use this result to derive a robust sequential test.

\subsection{Sequential E-process}
\label{subsec: e-process}
 We begin by updating our previous notations to match the sequential nature of our setting. At step $t$, we observe a fresh batch of labeled and unlabeled data $(D_t, \vec{\tilde{X}}_t)$. Let $(D^1_t,\vec{\tilde{X}}^1_t),\ldots, (D^M_t, \vec{\tilde{X}}^M_t)$ be $M$ independent data sets drawn from the null at step $t$, and we denote $(D^0_t,\vec{\tilde{X}}^0_t) = (D_t, \vec{\tilde{X}}_t)$. Aligned with previous notations, let $\tilde{\vec{Y}}^i_t = \mathcal{A}[D^i_{t}, \vec{\tilde{X}}^i_t]$ be the predicted labels of $\vec{\tilde{X}}^i_t$ made by a model that is fitted on $D^i_{t}$. 
 
 We define the \emph{conditional} finite-sample e-statistic, $\breve{e}[D_t, \vec{\tilde{X}}_t]$, similarly to $\breve{e}[D, \vec{\tilde{X}}] ~\eqref{eq: uncoditional_e_statistic_finite_sample}$, however, now it depends on the historical data through $K(\cdot)$. To emphasize this, we index $K(\cdot)$ by $t$: $K_t(\cdot)$. Importantly, at time $t$, the choice of $K_t(\cdot)$ is based on the historical data, $D_{-t}$, merely. For example, $K_t(\cdot)$ might be parametrized by some parameter $\gamma_t$ that is tuned in an online manner. It is easy to show that under the same conditions as in Proposition ~\ref{validity_of_unconditional_finite_sample_e_statistic}, 
$\breve{e}[D_t, \vec{\tilde{X}}_t]$ is a valid conditional e-statistic.
\begin{proposition} (Informal)
\label{validity_of_conditional_finite_sample_e_statistic}
$\breve{e}[D_t, \vec{\tilde{X}}_t]$ is a valid conditional e-statistic with respect to both $\mathcal{P}_1$ and $\mathcal{P}_2$.
\end{proposition}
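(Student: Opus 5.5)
The plan is to reduce the conditional statement to the batch-mode Proposition~\ref{validity_of_unconditional_finite_sample_e_statistic} by conditioning on the past. Fix $t$ and condition on $\mathcal{F}_{t-1}$, i.e., on the history $D_{-t}$. By construction, $K_t(\cdot)$ is a measurable function of $D_{-t}$ only, so given $\mathcal{F}_{t-1}$ it is a fixed non-negative function $K:\mathbb{R}^N\to\mathbb{R}^+$. It remains to check two distributional facts. First, the fresh batch $(D_t,\vec{\tilde{X}}_t)$ is independent of $\mathcal{F}_{t-1}$, since batches are i.i.d.\ across steps. Second, the null draws $(D^1_t,\vec{\tilde{X}}^1_t),\dots,(D^M_t,\vec{\tilde{X}}^M_t)$ are drawn independently of everything else, with $P^{\textup{null}}(D,\vec{\tilde{X}})$ as their (conditional) distribution.

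Next I invoke the identifiability argument. Under $H_0(\mathcal{P}_1)$ with label shift, or $H_0(\mathcal{P}_2)$ with concept shift, the observed batch $(D^0_t,\vec{\tilde{X}}^0_t)$ also has distribution $P^{\textup{null}}(D,\vec{\tilde{X}})$. Hence, conditionally on $\mathcal{F}_{t-1}$, the $M+1$ tuples $(D^i_t,\vec{\tilde{X}}^i_t)_{i=0}^M$ are i.i.d.\ and in particular exchangeable. The algorithm $\mathcal{A}$ is applied identically to each tuple, so the imputed labels $\tilde{\vec{Y}}^0_t,\dots,\tilde{\vec{Y}}^M_t$ are exchangeable as well. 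So are the scalars $Z_i = K_t(\tilde{\vec{Y}}^i_t)\ge 0$. If $\mathcal{A}$ uses internal randomness, I require it to be independent across the $M+1$ applications, which preserves exchangeability.

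The core step is the soft-rank computation. Write $S=\sum_{j=0}^M Z_j$, which is symmetric in the $Z_j$. By exchangeability,
\[
\mathbb{E}\!\left[\tfrac{Z_0}{S}\mathds{1}\{S>0\}\,\middle|\,\mathcal{F}_{t-1}\right] = \mathbb{E}\!\left[\tfrac{Z_i}{S}\mathds{1}\{S>0\}\,\middle|\,\mathcal{F}_{t-1}\right]
\]
for every $i$. Summing over $i=0,\dots,M$ gives $\mathbb{E}[\mathds{1}\{S>0\}\mid\mathcal{F}_{t-1}]\le 1$. Multiplying by $M+1$ then yields $\mathbb{E}_{H_0}[\breve{e}[D_t,\vec{\tilde{X}}_t]\mid\mathcal{F}_{t-1}]\le 1$. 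To make this work, I adopt the convention $\breve{e}=0$ (or $1$; either is fine, but I choose $0$) when $S=0$. For $t=1$ the same argument runs unconditionally. This is exactly the conditional e-statistic definition from Section~\ref{sec:background}, so Ville's inequality then applies to $S_t$.

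The main obstacle is rigor in the conditioning step. I need the conditional law of the current round, given $\mathcal{F}_{t-1}$, to be genuinely the product null law even though $K_t$ is data-dependent. The concern is that the filtration should be taken to include the past null draws, and possibly the past unlabeled data. I will handle this by defining $\mathcal{F}_{t-1}$ as the $\sigma$-algebra generated by all rounds before $t$, covering observed data, null samples, and algorithm randomness. I then use a regular conditional distribution, or simply Fubini over the independent current-round variables, to freeze $K_t$. The exchangeability computation then goes through pointwise in the history.
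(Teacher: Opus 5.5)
Your proposal is correct and follows the paper's route: condition on the history so that $K_t$ is fixed, use label/concept-shift identifiability to get exchangeability of the $M+1$ current-round datasets under the null, and use the fact that the symmetric soft-rank terms sum to $M+1$. Your additional handling of the zero-denominator case, of independent internal randomness in $\mathcal{A}$, and of including past null draws in the filtration makes explicit points the paper leaves implicit.
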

A formal statement along with the proof is in Appendix~\ref{sec: validity and robusrness of e-process}. 

Next, let $S_t = \prod_t \breve{e}[D_t, \vec{\tilde{X}}_t]$. Assuming a label shift setting (concept shift), under $H_0(\mathcal{P}_1) (H_0(\mathcal{P}_2))$, $S_t$ is an e-process. Thus, under both settings, by Ville's inequality, we get type-I error control:
\begin{corollary}
\label{cor: S_t_as_s_martingale}
For any $\alpha > 0$, $P_{H_0(\mathcal{P}_i)}(\exists t: \hspace{0.04in} S_t>1/\alpha)\leq \alpha, \text{where } i\in\{ 1, 2\}$.
\end{corollary}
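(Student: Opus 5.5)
The plan is to show that $(S_t)_{t\ge 0}$, with $S_0=1$, is a nonnegative supermartingale with respect to the filtration $\mathcal{F}=(\mathcal{F}_t)$ under $H_0(\mathcal{P}_i)$, and then to apply Ville's inequality. For $\mathcal{F}_t$ I take the sigma-algebra generated by the observed data $(D_s,\vec{\tilde{X}}_s)_{s\le t}$ together with the auxiliary null draws $(D^j_s,\vec{\tilde{X}}^j_s)_{s\le t,\,1\le j\le M}$. Including the auxiliary draws costs nothing, since they are generated independently at each step.

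\textbf{Step 1: nonnegativity and adaptedness.} Each factor $\breve{e}[D_t,\vec{\tilde{X}}_t]$ is nonnegative because $K_t\ge 0$. I adopt the convention $0/0:=1$ for the event where all $K_t$ values vanish; this keeps the factor well defined and at most $M+1$. Each factor is $\mathcal{F}_t$-measurable because $K_t$ is chosen from $D_{-t}$ alone, i.e.\ it is $\mathcal{F}_{t-1}$-measurable. Hence $S_t$ is adapted and nonnegative.

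\textbf{Step 2: the conditional e-statistic property.} Here I invoke Proposition~\ref{validity_of_conditional_finite_sample_e_statistic}, which gives $\mathbb{E}_{H_0(\mathcal{P}_i)}[\breve{e}[D_t,\vec{\tilde{X}}_t]\mid\mathcal{F}_{t-1}]\le 1$. The underlying reason is worth recording, since it is where the identifiability conditions enter. Under label shift (respectively concept shift), $H_0(\mathcal{P}_1)$ (respectively $H_0(\mathcal{P}_2)$) forces the fresh batch to satisfy $(D_t,\vec{\tilde{X}}_t)\sim P^{\textup{null}}$. Given $\mathcal{F}_{t-1}$, the fresh batch is independent of the past and of the $M$ auxiliary draws, so the $M+1$ tuples are i.i.d., hence exchangeable, and $K_t$ is fixed. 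The values $K_t(\tilde{\vec Y}^j_t)$ are then exchangeable. The ratio $(M+1)K_t(\tilde{\vec Y}^0_t)/\sum_j K_t(\tilde{\vec Y}^j_t)$ therefore has the same conditional expectation as its symmetrized average over $j=0,\dots,M$, and that average equals exactly $1$ off the all-zero event and $1$ by convention on it.

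\textbf{Step 3: supermartingale and conclusion.} Combining the two steps,
\[
\mathbb{E}[S_t\mid\mathcal{F}_{t-1}]=S_{t-1}\,\mathbb{E}\bigl[\breve{e}[D_t,\vec{\tilde{X}}_t]\mid\mathcal{F}_{t-1}\bigr]\le S_{t-1},
\]
so $S_t$ is a nonnegative supermartingale with $S_0=1$. Ville's inequality then yields $P_{H_0(\mathcal{P}_i)}(\exists t:S_t\ge 1/\alpha)\le\alpha\,\mathbb{E}[S_0]=\alpha$, which implies the strict-inequality statement of Corollary~\ref{cor: S_t_as_s_martingale}.

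\textbf{Main obstacle.} The delicate point is Step 2, specifically the filtration bookkeeping. The exchangeability argument needs the null draws at step $t$ to be conditionally independent of $\mathcal{F}_{t-1}$ and identically distributed with the fresh data. It also needs $K_t$ to depend only on the past, not on the auxiliary samples of the current round. I would make both requirements explicit in the definition of $\mathcal{F}_t$ before citing Proposition~\ref{validity_of_conditional_finite_sample_e_statistic}.
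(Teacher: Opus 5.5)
Your proposal is correct and follows the same route as the paper: Proposition~\ref{validity_of_conditional_finite_sample_e_statistic} makes each factor a conditional e-statistic under $H_0(\mathcal{P}_i)$, so $S_t$ is a nonnegative (super)martingale with $S_0=1$, and Ville's inequality gives the bound. Your explicit inclusion of the auxiliary null draws in $\mathcal{F}_t$, which is what makes $S_t$ adapted, and your $0/0:=1$ convention are sensible bookkeeping refinements that the paper leaves implicit.
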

Crucially, as before, Proposition~\ref{validity_of_conditional_finite_sample_e_statistic} and Corollary~\ref{cor: S_t_as_s_martingale} hold regardless of the choice of $\mathcal{A}$, $N$, $n$, and the accuracy of predictions. 

Finally, as we show in Appendix~\ref{appendix: composite null label-shift} and Appendix~\ref{appendix: composite null concept-shift}, when $X$ and $Y$ are binary variables, $\mathcal{P}_1$ and  $\mathcal{P}_2$ can be extended to composite null testing, while the test remains valid.  

\subsection{Robustness}
\label{sec:robusntess}

In practice, $P^\textup{null}(D, \vec{\tilde{X}})$ is often unknown, and thus samples are drawn from some approximated distribution $\hat{P}^\textup{null}_t(D, \vec{\tilde{X}})$. Note that the practitioner is allowed to update her approximation of $P^\textup{null}(D, \vec{\tilde{X}})$ at each step (for example, if given access to more recent data, etc). Let $\hat{P}^\textup{null}_t(D, \vec{\tilde{X}})$ be the approximated distribution used at the $t$-th step, and let the approximated e-statistic at time $t$ be:
\begin{equation}
\label{ML_e_value_approximated}
   \hat{e}[D_t, \vec{\tilde{X}}_t] = \frac{K(\tilde{\vec{Y}}_t)}{\mathbb{E}_{\hat{P}^\textup{null}_t}[K(\tilde{\vec{Y}}_t)]},  
\end{equation}
Let $(D_1, \vec{\tilde{X}}_1) \overset{d}{=}\ldots\overset{d}{=}(D_t, \vec{\tilde{X}}_t)$ 
be independent identically distributed copies of $(D, \vec{\tilde{X}})$. 
We let $P^\textup{null}_{\otimes_t}$ be their joint distribution under the null: $P^\textup{null}_{\otimes_t}((D_1, \vec{\tilde{X}}_1), \ldots, (D_t, \vec{\tilde{X}}_t)) = \prod_{i=1}^t P^\textup{null}_i(D_i, \vec{\tilde{X}}_i)$ and define $\hat{P}^\textup{null}_{\otimes_t}$ analogously. The following result states that at each time step $t$, using the approximated e-statistic inflates the type-I error by the total-variation distance between $P^\textup{null}_{\otimes_t}$ and $\hat{P}^\textup{null}_{\otimes_t}$. The proof can be found in Appendix~\ref{sec: validity and robusrness of e-process}.

\begin{theorem}
\label{worst_case_bound_on_rejection_rate}
Let $K(\cdot)$ be a strictly positive score function. Then, for any $t \in \mathds{N}$:
\begin{equation*}
P\left (\exists t' \leq t: \prod_{i=1}^{t'}  \hat{e}[D_i, \vec{\tilde{X}}_i] \geq \frac{1}{\alpha}\right)\leq \alpha + d_{TV}(P^\textup{null}_{\otimes_t}, \hat{P}^\textup{null}_{\otimes_t}).
\end{equation*}    
\end{theorem}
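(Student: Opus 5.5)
The plan is a change of measure: move the rejection event from the true null to the approximated null, where Ville's inequality applies directly, and pay the total-variation distance for the move. Write $\mathcal{D}_t = ((D_1,\vec{\tilde{X}}_1),\ldots,(D_t,\vec{\tilde{X}}_t))$ and consider the event
\[
A_t = \Big\{\exists t'\leq t:\ \prod_{i=1}^{t'} \hat{e}[D_i,\vec{\tilde{X}}_i]\geq 1/\alpha\Big\}.
\]
This is a measurable function of $\mathcal{D}_t$ alone. The functions $K_i$ and the approximations $\hat{P}^\textup{null}_i$ depend only on past data, and the denominators $\mathbb{E}_{\hat{P}^\textup{null}_i}[K(\tilde{\vec{Y}}_i)]$ are deterministic given the past. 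Strict positivity of $K$ makes every denominator strictly positive, so each $\hat{e}$ is well defined and nonnegative. By the definition of total variation,
\[
P^\textup{null}_{\otimes_t}(A_t)\leq \hat{P}^\textup{null}_{\otimes_t}(A_t) + d_{TV}(P^\textup{null}_{\otimes_t},\hat{P}^\textup{null}_{\otimes_t}).
\]
It therefore suffices to show $\hat{P}^\textup{null}_{\otimes_t}(A_t)\leq \alpha$.

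Under $\hat{P}^\textup{null}_{\otimes_t}$ the batches are independent, and batch $i$ has law $\hat{P}^\textup{null}_i$. The definition \eqref{ML_e_value_approximated} then gives
\[
\mathbb{E}_{\hat{P}^\textup{null}_{\otimes_t}}\big[\hat{e}[D_i,\vec{\tilde{X}}_i]\mid \mathcal{F}_{i-1}\big] = \frac{\mathbb{E}_{\hat{P}^\textup{null}_i}[K(\tilde{\vec{Y}}_i)]}{\mathbb{E}_{\hat{P}^\textup{null}_i}[K(\tilde{\vec{Y}}_i)]} = 1 .
\]
Here I treat $K$ (or $K_i$) and $\hat{P}^\textup{null}_i$ as $\mathcal{F}_{i-1}$-measurable and use independence of the fresh batch from the past. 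The step I expect to need the most care is making this conditioning precise when $\hat{P}^\textup{null}_i$ is chosen adaptively. In that case the product measure $\hat{P}^\textup{null}_{\otimes_t}$ has to be read as the sequential kernel composition $\prod_i \hat{P}^\textup{null}_i(\cdot\mid\mathcal{F}_{i-1})$. The TV comparison in the first step is unaffected, since it only uses that $A_t$ is one fixed event on the path space.

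Given this conditional identity, the products $S_{t'}=\prod_{i\leq t'}\hat{e}[D_i,\vec{\tilde{X}}_i]$ form a nonnegative martingale with $S_0=1$ under $\hat{P}^\textup{null}_{\otimes_t}$. Ville's inequality, as recalled in Section~\ref{sec:background}, gives $\hat{P}^\textup{null}_{\otimes_t}(\exists t'\leq t: S_{t'}\geq 1/\alpha)\leq \alpha$. The non-strict threshold is fine because Ville's inequality holds with $\geq$. Combining this with the TV step, and noting that under $H_0$ the data have law $P^\textup{null}_{\otimes_t}$, yields the stated bound.
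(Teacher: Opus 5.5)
Your proposal is correct and follows the paper's proof. You bound the true-null probability of the rejection event by its probability under $\hat{P}^\textup{null}_{\otimes_t}$ plus the total-variation distance, then observe that the product of the $\hat{e}$'s is a nonnegative martingale under the approximated null, so Ville's inequality gives the $\alpha$ term. Your remark that $\hat{P}^\textup{null}_{\otimes_t}$ must be read as a sequential kernel composition when the approximations are chosen adaptively is a precision the paper leaves implicit.
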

The above theorem is non-trivial when the excess rejection rate,  $d_{TV}(P^\textup{null}_{\otimes_t}, \hat{P}^\textup{null}_{\otimes_t})$, is sufficiently small. Deriving a general upper bound for the excess rejection rate is an open question. Nevertheless, in this work we focus on binary variables $X,Y$ and it can be shown that if we have access to sufficient number of samples from the null, $d_{TV}(P^\textup{null}_{\otimes_t}, \hat{P}^\textup{null}_{\otimes_t})$ can be made small with high probability. We discuss this further in Appendix~\ref{sec:non_trivial_tv_distance}. Note that a qualitatively similar worst-case bound on rejection rate appears in \cite{grunwald2024anytime}, specifically in the context of conditional independence testing using e-processes. 

Until now, we have focused on the validity of the imputed e-statistic as well as on the test's validity and its robustness. Note that we were able to obtain these results with minimal assumptions about the score function $K(\cdot)$. Concretely, our only assumption is that $K(\cdot)$ is a positive function. In addition, we did not commit to any specific learning algorithm $\mathcal{A}$. Next, we focus on the power of the imputed e-statistic, and explore conditions under which it has non-trivial power against the alternative. 

\section{Power Analysis in the Binary Case}
\label{sec:consistency_results}
We consider the case where $X$ and $Y$ are Bernoulli variables and show that under both label and concept shift, $e[D,\vec{\tilde{X}}]$~\eqref{ML_e_value}, $\breve{e}[D,\vec{\tilde{X}}]$ and the conditional e-statistic $\breve{e}[D_t,\vec{\tilde{X}}_t]$~\eqref{eq: uncoditional_e_statistic_finite_sample} are  powered against the alternative. For concreteness, as a score function we take $K(\vec{y})= \sum_{i=1}^N \psi(\tilde{y}_i)$, where $\psi(\cdot)$ is a positive increasing function (e.g. $\psi(y_i) = \exp(\gamma \tilde{y}_i)$). Note that under this choice of $K(\cdot)$, $\breve{e}[D,\vec{\tilde{X}}]$ is now a soft-rank e-statistic, as $K$ is monotone increasing and thus order preserving~\cite{ramdas2025hypothesis}; see Appendix~\ref{sec: appendix soft-rank e-value}. 

\subsection{Non-Trivial Power under Label-Shift}
\label{subsec: power_LS}

We focus on $\mathcal{P}_1$, which, as discussed, under the label-shift regime can be tested with our imputed e-statistic. Recall that in this setting, $P_{X\mid Y}$ is fixed. We next show that with a particular choice of the learning algorithm $\mathcal{A}$, the imputed e-statistic has non-trivial power. Concretely, for $\tau \in (0,1)$, we consider the threshold classifier:
\begin{equation}\label{eq:threshold_classifier}
\mathcal{A}_{\tau}[D, \tilde{x}] = \mathds{1}\{\hat{P}(Y=1\mid\tilde{x})>\tau\},
\end{equation}
where
\begin{equation}
\label{the_threshold_classifier}
\hat{P}(Y=1\mid\tilde{x}) = \frac{\hat{P}_D(Y=1)\hat{P}(\tilde{x} \mid Y=1)}{\sum\limits_{y \in \{ 0, 1\}} \hat{P}_D(Y=y)\hat{P}(\tilde{x} \mid Y=y)}.
\end{equation}
In~\eqref{the_threshold_classifier} above $\hat{P}_D(Y=1)$ is the empirical mean of $Y$ in $D$, and $\hat{P}(X|Y=y)$ are the empirical estimates of $P(X \mid Y=y)$ calculated using the additional null data.
Note that since $P_{X\mid Y}$ is fixed, by Slutsky's  \cite{slutsky1925stochastische}, \eqref{the_threshold_classifier} yields a consistent estimator of $P(Y=1|\tilde{x})$ under the alternative.

Recall that an e-statistic $e(D)$ has non-trivial power against alternative distribution $Q$ if $\mathbb{E}_{Q}[e(D)]>1$. With this in place we establish the following result: 
\begin{theorem} (Informal)
\label{non_trivial_power_label_shift}
Take $\mathcal{A}_{\tau}[D, \tilde{x}]$ as in~\eqref{eq:threshold_classifier}. Then, under the label-shift regime, 
$e[D,\tilde{X}], \breve{e}[D, \vec{\tilde{X}}]$, and $\breve{e}[D_t, \vec{\tilde{X}}_t]$ all have non trivial power against $H_1(\mathcal{P}_1)$.
\end{theorem}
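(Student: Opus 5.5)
The idea is to reduce everything to a stochastic-dominance statement about a single prediction. With $K(\vec{y})=\sum_{i=1}^N\psi(\tilde y_i)$ and binary predictions, $\psi(\tilde y)=\psi(0)+(\psi(1)-\psi(0))\tilde y$. By exchangeability of the unlabeled points given $D$,
\[
\mathbb{E}[K(\tilde{\vec Y})]=N\psi(0)+N(\psi(1)-\psi(0))\,\mathbb{P}(\mathcal{A}_\tau[D,\tilde X_1]=1).
\]
Since $\psi$ is strictly increasing, it suffices to show that the probability $q$ that a single imputed label equals one is strictly larger under any alternative $Q\in H_1(\mathcal{P}_1)$ (with $P_{X\mid Y}$ fixed and $\theta_Y:=P(Y=1)>\theta_Y^{\text{null}}$) than under the null. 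Given this, $\mathbb{E}_Q[e]=\mathbb{E}_Q[K]/\mathbb{E}_{P^{\text{null}}}[K]>1$ follows immediately for the oracle statistic \eqref{ML_e_value}.

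\textbf{Step 1 (monotone coupling).} Write
\[
q(\theta)=\sum_{x\in\{0,1\}}P_\theta(\tilde X=x)\,\mathbb{P}_\theta\bigl(\hat P(Y=1\mid x)>\tau\bigr).
\]
\begin{itemize}
\item The posterior estimate \eqref{the_threshold_classifier} is increasing in $\hat P_D(Y=1)$ for each fixed $x$ and fixed $\hat P(x\mid Y)$. The latter are computed from independent null data, so their law is the same under $H_0$ and $H_1$.
\item $n\hat P_D(Y=1)\sim\mathrm{Bin}(n,\theta)$ is stochastically increasing in $\theta$.
\item Hence $\mathbb{P}_\theta(\hat P(Y=1\mid x)>\tau)$ is nondecreasing in $\theta$.
\item Under label shift, $P_\theta(\tilde X=1)=\theta p_1+(1-\theta)p_0$ with $p_y=P(X=1\mid Y=y)$. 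This moves mass toward the $x$ with larger likelihood ratio, which is also the $x$ with the larger posterior, and hence the larger acceptance probability.
\end{itemize}
Combining the two effects, via a coupling of $(\tilde X,\hat P_D)$ across $\theta$, gives $q(\theta)\ge q(\theta^{\text{null}})$.

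\textbf{Step 2 (strictness).} Strict inequality needs a non-degeneracy condition; the formal statement should impose it. Either $p_1\neq p_0$, or $\tau$ lies in a range where the event $\{\hat P(Y=1\mid x)>\tau\}$ has a boundary point that $\mathrm{Bin}(n,\theta)$ charges with probability in $(0,1)$. In that case the binomial cdf is strictly decreasing in $\theta$, which gives strictness. This is the step I expect to be the main obstacle: making the joint monotonicity over the two sources of randomness rigorous, and pinning down the exact condition on $\tau,n,p_0,p_1$ that yields a strict inequality rather than equality.

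\textbf{Step 3 (finite-sample and sequential versions).} For $\breve e$, I would note that with $M$ null copies, $\breve e$ is a soft-rank e-statistic with increasing $K$ (Appendix~\ref{sec: appendix soft-rank e-value}).
\begin{itemize}
\item Condition on the null copies. The map $k\mapsto (M+1)k/(k+S)$, with $S=\sum_{i\ge1}K(\tilde{\vec Y}^i)$, is increasing.
\item Because $K(\tilde{\vec Y}^0)$ under $Q$ stochastically dominates its null law (a sum of exchangeable indicators with larger marginal; I would establish dominance via the coupling from Step 1 applied conditionally on $D$), we get $\mathbb{E}_Q[\breve e\mid S]\ge\mathbb{E}_{P^{\text{null}}}[\breve e\mid S]$.
\item Integrating, the right-hand side has expectation exactly $1$ by the exchangeability argument behind Proposition~\ref{validity_of_unconditional_finite_sample_e_statistic}.
\item Strictness transfers from Step 2 on a positive-probability set of $S$.
\end{itemize}
The conditional statistic $\breve e[D_t,\vec{\tilde X}_t]$ is handled identically conditional on $\mathcal{F}_{t-1}$, since $K_t$ is $\mathcal{F}_{t-1}$-measurable and fresh batches are independent of the past. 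This gives $\mathbb{E}_Q[\breve e[D_t,\vec{\tilde X}_t]\mid\mathcal{F}_{t-1}]>1$.
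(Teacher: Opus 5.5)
Your overall structure matches the paper's. The threshold classifier becomes $\mathds{1}\{\hat P_D(Y=1)>c_x\}$ with $c_x=\tau\hat\tau_x/(\tau\hat\tau_x+(1-\tau)(1-\hat\tau_x))$, which the null-data estimates fix. Dominance comes from binomial monotonicity of $\hat P_D(Y=1)$ together with the shift of $P(\tilde X=1)$. The statistic $\breve e$ is handled by conditioning on the null copies and using exchangeability, and the sequential case by conditioning on $\mathcal{F}_{t-1}$. Using the increasing $j=0$ term directly, instead of the paper's decreasing $j\ge 1$ terms plus $\sum_j\breve e^j=M+1$, is a harmless shortcut.

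\textbf{The gap is Step~2.} It carries the entire content of ``non-trivial power'', and you leave it open and partly misstate it. No condition on $\tau$ or $n$ is needed. Suppose $\tau\in(0,1)$, $\hat\tau_x\in(0,1)$ (estimated conditionals strictly positive) and $0<\theta^{\textup{null}}<\theta<1$. Then $c_x\in(0,1)$, so the event $\{\hat P_D(Y=1)>c_x\}$ contains $\hat P_D(Y=1)=1$ and excludes $\hat P_D(Y=1)=0$. It is therefore a nontrivial upper set of the binomial support, and $a_x(\theta):=P_\theta(\hat P_D(Y=1)>c_x)$ is strictly increasing for both values of $x$. Write $\pi(\theta)=P_\theta(\tilde X=1)$ and take $p_0<p_1$. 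Then
\[
q(\theta)=a_0(\theta)+\pi(\theta)\bigl(a_1(\theta)-a_0(\theta)\bigr)\ \ge\ \pi(\theta^{\textup{null}})a_1(\theta)+\bigl(1-\pi(\theta^{\textup{null}})\bigr)a_0(\theta)\ >\ q(\theta^{\textup{null}}).
\]
The first inequality is the mass shift and uses $a_1\ge a_0$; the second is strict binomial monotonicity. This is exactly how the paper obtains strictness: its function $h$ is non-constant because $\hat\tau_x\in(0,1)$ (Proposition~\ref{prop_sum_of_FOSD_A[D,x]_Label_shift}). Your first disjunct, $p_1\neq p_0$, is neither needed nor sufficient on its own: if $a_0=a_1$, moving mass between the two values of $\tilde X$ changes nothing.

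\textbf{Dominance of the sum.} For $\breve e$ you need first-order dominance of $\sum_i\mathcal{A}[D,\tilde X_i]$. ``Exchangeable indicators with a larger marginal'' does not give this. For example, two perfectly correlated Bernoulli$(0.6)$ indicators do not dominate two independent Bernoulli$(0.5)$ ones, since their sum is $0$ with probability $0.4>0.25$. What works is the coupling you allude to. Given $D$, the indicators are i.i.d., so the sum is $\mathrm{Bin}(N,r(D))$ with $r(D)=\mathcal{A}[D,0]+\pi(\theta)(\mathcal{A}[D,1]-\mathcal{A}[D,0])$. A monotone coupling of $\hat P_D(Y=1)$ then orders $r$. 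The paper does the equivalent by conditioning on $N_1=\#\{i:\tilde X_i=1\}$ (Proposition~\ref{prop_sum_of_FOSD_A[D,X]_Label_shift}).

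\textbf{An unstated hypothesis.} Both this step and your Step~1 need $\mathcal{A}[D,1]\ge\mathcal{A}[D,0]$. That is, the \emph{estimated} conditionals must satisfy $\hat P(X=1\mid Y=0)\le\hat P(X=1\mid Y=1)$, in the same direction as the true ones. Your phrase ``which is also the $x$ with the larger posterior'' assumes this silently, and so does the paper's step $P(S_{k+1}\le t)\le P(S_k\le t)$. It should be stated as an explicit hypothesis.
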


A formal statement and the proof can be found in Appendix~\ref{sec: power_of_imputed_statistics_LS}.

\subsection{Power of Imputed E-Statistics under Concept Shift}
\label{subsec: power_CS}
We next focus on $\mathcal{P}_2$, which, as was shown, under the concept shift regime, can be tested with our imputed e-statistic. Recall that in this setting, $P_X$ is fixed. 
Here $\mathcal{A}$ classifies a new observation $\tilde{x}$ by drawing a sample from the empirical posterior probability:
\begin{align}
\label{the_posterior_classifier}
    \mathcal{A}[D, \tilde{x}] \sim Ber(\hat{P}(Y=1\mid\tilde{x})),
\end{align}
where the posterior probabilities $P(Y=1\mid \tilde{x})$ are estimated from $D$. We refer to this classifier as the \emph{Bayes'} classifier. Note that with the specification of $K$ as above, establishing non-trivial power result requires further assumption about the alternative distributions. Indeed, as $K(\cdot)$ is monotone increasing, we expect the e-statistic to have power against right-shifted alternatives, i.e., alternatives for which $\theta_{Y \mid X} \geq \theta_{Y \mid X}^{\textsuperscript{null}}$, for all $x$. Thus, we consider the following subset of $H_1(\mathcal{P}_2)$: 
\begin{align*}
 &\theta_{Y \mid X} > \theta_{Y \mid X}^{\textsuperscript{null}} \ \ \textup{for at least one $X=x$, and} \\
&  \theta_{Y \mid X} = \theta_{Y \mid X}^{\textsuperscript{null}} \ \ \textup{for every other $x$}.
\end{align*}
We denote this subset with $H_1(\mathcal{P}_2^\textup{+})$. With this in place, we establish the following result:
\begin{theorem}(Informal)
\label{non_trivial_power_cs}
Take $\mathcal{A}[D, \tilde{x}]$ as above. Then, under the concept shift regime,
$e[D,\tilde{X}], \breve{e}[D, \vec{\tilde{X}}]$, and $\breve{e}[D_t, \vec{\tilde{X}}_t]$ all have non trivial power against $H_1(\mathcal{P}_2^+)$.  
\end{theorem}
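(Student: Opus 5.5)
Under concept shift $P_X$ is the same under the null and under the alternative, so the only thing that changes is $\theta_{Y\mid X}$. The plan is to show that, under $H_1(\mathcal{P}_2^+)$, every prediction $\tilde Y_i=\mathcal{A}[D,\tilde X_i]$ is stochastically larger than its null counterpart. Since $K(\vec{y})=\sum_i\psi(y_i)$ with $\psi$ increasing, it then suffices to show
\[
\mathbb{E}_{Q}\big[\psi(\tilde Y_1)\big] > \mathbb{E}_{P^{\textup{null}}}\big[\psi(\tilde Y_1)\big],
\]
where $Q$ is the alternative. Because $\tilde Y_1$ is binary, $\psi(\tilde Y_1)=\psi(0)+(\psi(1)-\psi(0))\tilde Y_1$ with $\psi(1)>\psi(0)$. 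The claim therefore reduces to $Q(\tilde Y_1=1)>P^{\textup{null}}(\tilde Y_1=1)$. By linearity of $K$ and exchangeability of the $\tilde X_i$, this gives $\mathbb{E}_Q[K]>\mathbb{E}_{\textup{null}}[K]$, which is exactly $\mathbb{E}_Q[e[D,\vec{\tilde X}]]>1$.

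\textbf{Step 1 (the key computation).} For the Bayes classifier, conditioning on $D$ and averaging over the Bernoulli draw gives
\[
P(\tilde Y_1=1)=\mathbb{E}_D\Big[\sum_{x\in\{0,1\}}P_X(x)\,\hat P_D(Y=1\mid x)\Big].
\]
I take $\hat P_D(Y=1\mid x)$ to be the empirical conditional frequency within the cell $X=x$, set to some fixed default when the cell is empty; this default must be the same under both distributions. Conditional on the cell counts, the numerator is Binomial with success probability $\theta_{Y\mid X=x}$, so the empirical frequency is conditionally unbiased. The cell counts depend only on $P_X$, which is common to $Q$ and the null. Hence
\[
P(\tilde Y_1=1)=\sum_x P_X(x)\big[\theta_{Y\mid X=x}\,(1-P(n_x=0))+c\,P(n_x=0)\big],
\]
where $n_x$ is the number of labeled samples with $X=x$ and $c$ is the default. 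The difference between $Q$ and the null is then
\[
\sum_x P_X(x)\,(1-(1-P_X(x))^n)\,\big(\theta^Q_{Y\mid x}-\theta^{\textup{null}}_{Y\mid x}\big).
\]
Under $H_1(\mathcal{P}_2^+)$ this is strictly positive, assuming every $P_X(x)>0$, which is a non-degeneracy condition I will add to the formal statement. This settles the claim for $e$.

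\textbf{Step 2 ($\breve e$).} Under $Q$, $K(\tilde{\vec Y}^0)$ stochastically dominates $K(\tilde{\vec Y}^i)$ for the null copies $i\geq1$. Domination of the means alone is not enough here, so I would establish it via a coupling. Couple the cell counts and the underlying uniforms so that each $Y_j$ under $Q$ is at least its null counterpart. Then $\hat P_D(Y=1\mid x)$ increases pointwise, and so do the Bernoulli predictions drawn with shared uniforms. This gives $\tilde{\vec Y}^0\geq\tilde{\vec Y}^{\textup{null}}$ coordinatewise, and hence $K^0\ge K^{\textup{null}}$ almost surely, with strict inequality on an event of positive probability.

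Now write $\breve e=(M+1)K^0/(K^0+\sum_{i\ge1}K^i)$. This map is increasing in $K^0$, so
\[
\mathbb{E}_Q[\breve e] > \mathbb{E}[\breve e \text{ with } K^0 \text{ replaced by a null copy}]=1,
\]
where the last equality is the exchangeability identity behind Proposition~\ref{validity_of_unconditional_finite_sample_e_statistic}. Strictness uses strict monotonicity, which holds because $\sum_{i\ge1}K^i>0$, together with the positive-probability event above.

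\textbf{Step 3 (the conditional version).} The argument for $\breve e[D_t,\vec{\tilde X}_t]$ is identical conditional on $\mathcal F_{t-1}$. Here $K_t$ is $\mathcal F_{t-1}$-measurable and I assume $\psi_t$ is increasing, e.g.\ $\gamma_t>0$. The fresh batch is independent of the past, so Step 2 applies verbatim.

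\textbf{Main obstacle.} I expect the hardest part to be constructing the monotone coupling in Step 2, so that strict power transfers from $e$ to the nonlinear ratio $\breve e$, and in particular handling empty cells and keeping the default prediction the same across the two distributions. If the coupling proves awkward, I would fall back on the binary structure directly. The alternative changes only $\theta_{Y\mid x}$, and $\hat P_D(Y=1\mid x)$ is monotone in Binomial counts, and Binomials are stochastically ordered in their success parameter.
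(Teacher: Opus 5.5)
Your argument is correct, and it reaches the result more directly than the paper does.

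\textbf{The paper's route.} It builds a chain of first-order stochastic dominance lemmas: a mixture-of-Bernoulli lemma; FOSD of the empirical cell proportions, via a uniform coupling conditional on the cell counts followed by mixing over the common count distribution; FOSD of a single prediction; and FOSD of the sum of predictions as a mixture of binomials, conditioning on the labeled $X$'s. For $\breve e$ it argues through the null indices. Since $K^0$ sits in the denominator of $\breve e^j$ and $s\mapsto C_0/(g(s)+C_1)$ is decreasing, $\mathbb{E}[\breve e^j]<1$ for every $j\ge 1$. Then $\sum_{j=0}^M\breve e^j=M+1$, together with symmetry over $j\ge 1$, forces $\mathbb{E}[\breve e^0]>1$.

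\textbf{Your route.} You settle the population statistic by an exact mean computation. Conditional unbiasedness of the cell frequencies suffices here because $K$ is linear in binary predictions. For $\breve e$ you use a single pathwise coupling: shared labeled and unlabeled $X$'s (legitimate precisely because $P_X$ is common), and shared uniforms for both the labels and the Bernoulli prediction draws. This gives $\tilde Y^0_i\ge\tilde Y^{\textup{null}}_i$ for every $i$. You then compare $\breve e^0$ directly with its null-copy version, using monotonicity of $s\mapsto(M+1)s/(s+C)$ and the exchangeability identity.

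\textbf{What each buys.} Your route yields an explicit gap, $\mathbb{E}_Q[K]-\mathbb{E}_{\textup{null}}[K]=N(\psi(1)-\psi(0))\sum_x P_X(x)(1-(1-P_X(x))^n)(\theta^Q_{Y\mid x}-\theta^{\textup{null}}_{Y\mid x})$. It also makes the needed conditions visible:
\begin{itemize}
\item $P_X(x^*)>0$ on a shifted cell;
\item a common empty-cell default;
\item $M\ge 1$;
\item $\gamma_t>0$, so that each $K_t$ is strictly increasing.
\end{itemize}
The paper's modular lemmas parallel its label-shift proof. They are reused, with inequalities reversed, for the composite-null validity results, though your coupling would deliver those just as easily.

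The one step you should write out is the positive-probability event behind strictness. Take a labeled point in cell $x^*$ whose uniform lies in $(\theta^{\textup{null}}_{Y\mid x^*},\theta^Q_{Y\mid x^*}]$; this makes the alternative's estimate for cell $x^*$ strictly larger. Combine it with $\tilde X_1=x^*$ and a prediction uniform falling between the two estimates.
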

The formal statement and the proof can be found in  Appendix~\ref{sec:power_of_imputed_statistics_CS}. Finally, note that designing powerful tests with respect to a given alternative requires an adequate choice of $K(\cdot)$. Indeed, we empirically demonstrate this in Appendix~\ref{sec:non_monotone_hypothesis}. There, we show how an inappropriate choice of $K$ might lead to poor results.

\section{Simulations}
\label{sec:simulations}
 Focusing on the binary setting where $X$ and $Y$ are Bernoulli variables, we empirically evaluate the benefit of our imputed e-process compared to popular baselines. We consider different regimes, in which we vary the number of unlabeled points $N$ and levels of correlation between $X$ and $Y$.

\subsection{Baselines}
\label{sec:baselines}

\paragraph{Likelihood Ratio Test} We compare our method to the LR e-process \cite{wasserman2020universal} 
. We denote by $e^Y_{\text{LR}}, e^X_{\text{LR}}$, and $e^{Y|X}_{\text{LR}}$ the sequential likelihood ratio e-statistics with respect to the marginal distribution of $Y$, $X$, and the conditional distribution $Y|X$, respectively. Under the label-shift setting, $P_X$ and $P_Y$ may shift; thus, we instantiate both $e^Y_{\text{LR}}$ and $e^X_{\text{LR}}$. For concept shift, $P_X$ is fixed, rendering $e^{X}_{\text{LR}}$ irrelevant. However, the $H_1(\mathcal{P}_2^+)$ implies that there exists at least one $x$ such that $\theta_{Y\mid X=x} > \theta^\textup{null}_{Y\mid X=x}$, while for other values $\theta_{Y\mid X=x} = \theta^\textup{null}_{Y\mid X=x}$. Consequently, $\theta_Y > \theta^\textup{null}_Y$, and we therefore consider both $e^Y_{\text{LR}}$ and $e^{Y|X}_{\text{LR}}$.

\paragraph{PPI} To compare our method against a test that leverages both labeled and unlabeled data, we employ PPI to design a sequential test for the marginal of $Y$, we denote this e-process $e_{\text{PPI}}$. Importantly, when instantiating the PPI test, we optimize the test's hyper-parameters to guarantee exponential growth of the e-process. See~\autoref{sec:ppi_test_definition} for a complete definition of the test along with proof of validity and additional direct comparisons to our method.

\subsection{Unified e-process}

Our goal is not to replace existing methods, but to leverage additional unlabeled data to strengthen them. Furthermore, given that different methods offer different benefits, depending on the setting, and that their ``ranking'' in terms of power can change between steps, it is generally impossible to know a priori which method will be most powerful. Therefore, we propose a combined process designed to maximize the growth-rate of the resulting e-process.

Formally, let $e_{1,t}, \dots, e_{k,t}$ be sequential e-variables, it is easy to verify that any convex combination of those e-variables is a sequential e-variable. We define the combined e-variable:
\begin{equation}
\label{eq:convex_combination}
\text{conv}(e_{1,t}, \dots, e_{k,t}, \vec{a}_t) := \sum_{i=1}^k a_{i,t}e_{i,t},
\end{equation}

Thus, at time $t$, let $\vec{a}_t \in \mathbb{R}^k$ such that $\sum_{i=1}^k a_{t, i} =1$ and $a_{t,i} \ge 0$ for $i \in \{1, \dots, k\}$. Note that $\text{conv}(e_{1,t}, \dots, e_{k,t}, \vec{a}_t)$  is a sequential e-variable as long as $a_t$ respects the filtration. Following~\citet{waudby2025universal}, we optimally set $a_t$ with the Exponentiated Gradient algorithm~\cite{kivinen1997exponentiated} which is a universal portfolio selection algorithm~\cite{helmbold1998line}.
\begin{figure*}[t]
\centering
\begin{subfigure}[t]{0.7\textwidth}
    \centering
    \includegraphics[width=\textwidth]{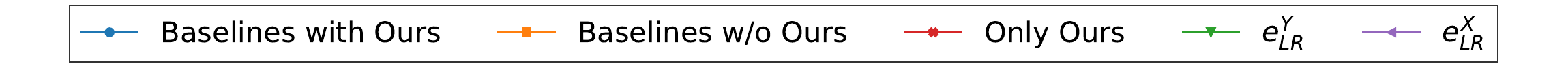}
\end{subfigure}
\\
\begin{subfigure}[t]{0.24\textwidth}
    \centering
    \includegraphics[width=\textwidth]{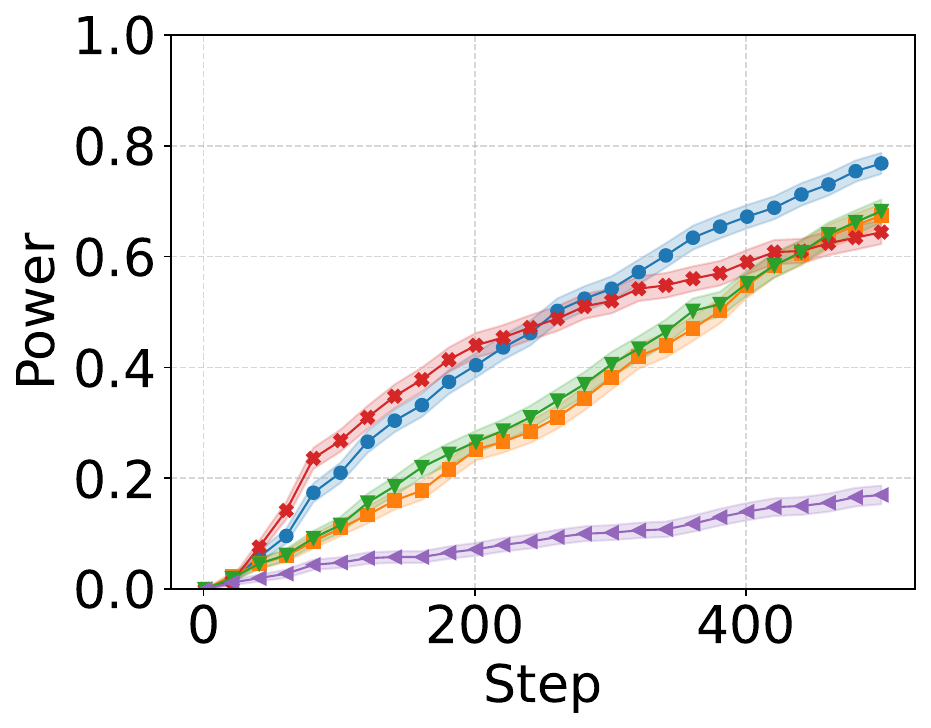}
    \subcaption{}    \label{fig:label_shift_low_unlabeled_medium_corr}
\end{subfigure}
\begin{subfigure}[t]{0.24\textwidth}
    \centering
    \includegraphics[width=\textwidth]{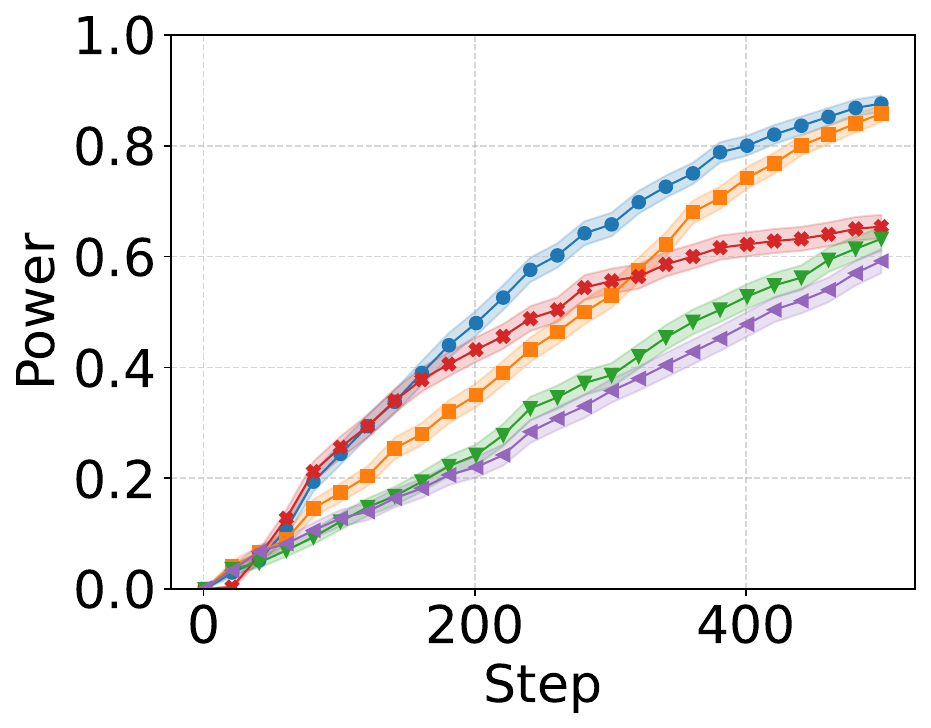}
    \subcaption{}
    \label{fig:label_shift_high_unlabeled_medium_corr}
\end{subfigure}
\begin{subfigure}[t]{0.24\textwidth}
    \centering
    \includegraphics[width=\textwidth]{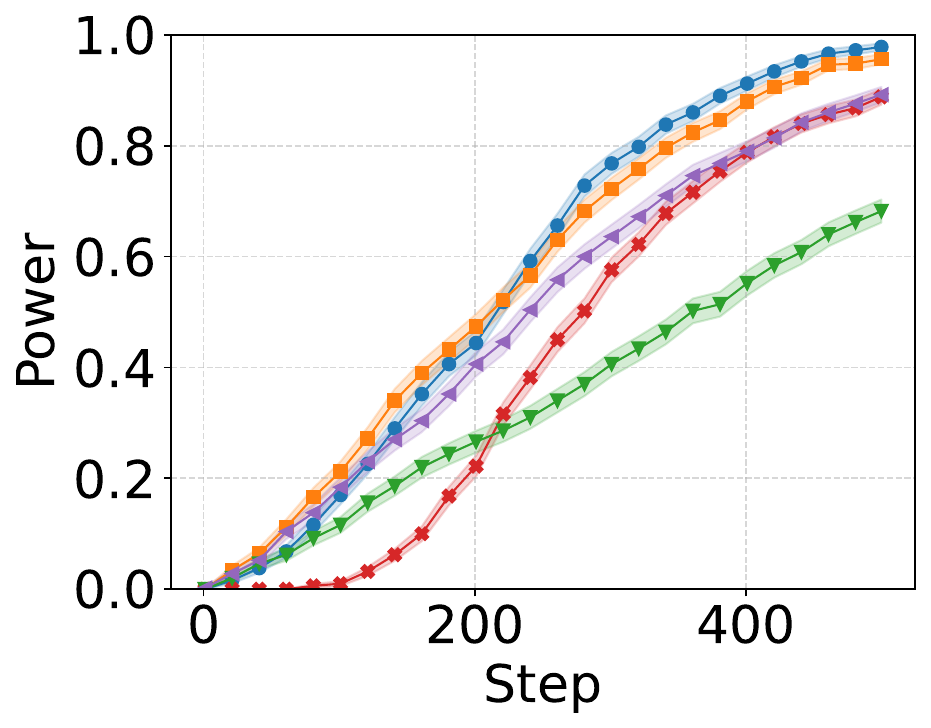}
    \subcaption{} \label{fig:label_shift_low_unlabeled_high_corr}
\end{subfigure}
\begin{subfigure}[t]{0.24\textwidth}
    \centering
    \includegraphics[width=\textwidth]{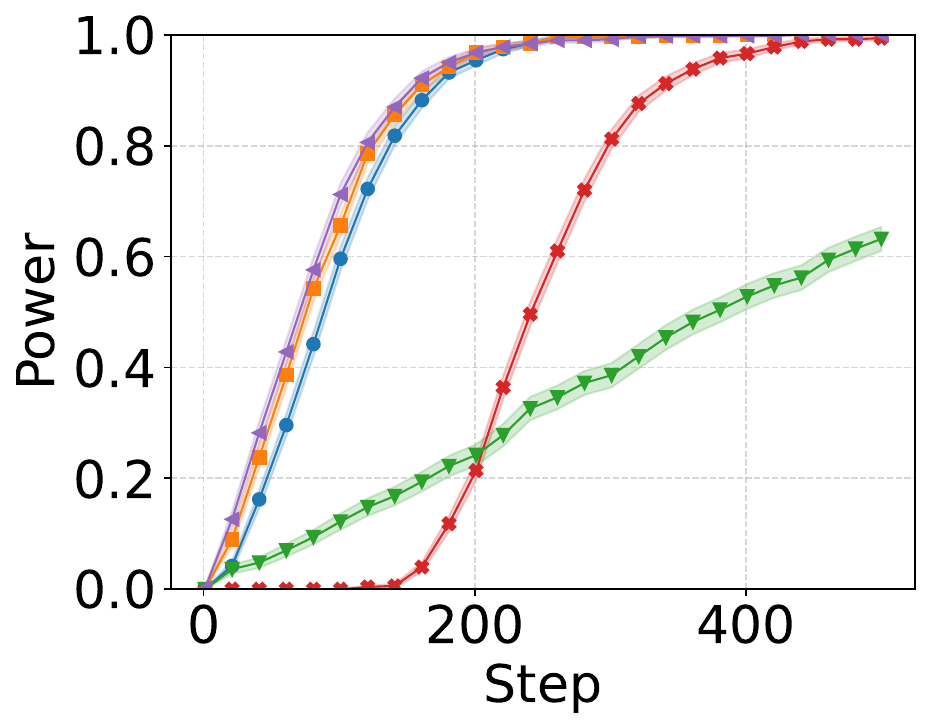}
    \subcaption{}
    \label{fig:label_shift_high_unlabeled_high_corr}
\end{subfigure}
\caption{Simulation results for the label shift setting. In all settings, the signal strength of the labeled data is fixed and the signal strength of the unlabeled data varies between plots: (a) weak ($N=30$, low correlation), (b) medium ($N=135$, low correlation), (c) medium-high ($N=30$, high correlation), (d) strong ($N=135$, high correlation).}
\label{fig:label_shift}
\end{figure*}

\subsection{Experimental Setup}
\label{sec:sumulations_experimental_setup}

To instantiate the imputed e-process, we employ the function $K(\vec{\tilde{y}}) = \sum_{i=1}^N \exp(\gamma \tilde{y}_i)$. We note that the parameter $\gamma$ may vary between steps, provided it respects the filtration. Concretely, we set $\gamma$ to optimize the e-power of the imputed e-process using online learning; specifically, we use AdaGrad \cite{duchi2011adaptive}. 

To set $M$, the number of exchangeable datasets drawn from the null, we empirically investigate the impact of $M$ on the power of the test. We observed that power generally increases asymptotically with $M$, thus we set $M=128$ in all simulations and experiments. The full details can be found in Appendix~\ref{sec:hyperparameter-tuning}.

In all settings, $X$ and $Y$ are Bernoulli variables. To simulate settings where data scarcity is a concern, and the use of unlabeled data can be attractive, at each step, we draw $n=15$ labeled samples and $N$ unlabeled samples from a distribution that is relatively close to the null distribution.  
We simulate each sequential test for 500 steps, and we report the average rejection rate at every step, along with the standard error. We report the average over 500 independent realizations of the data.

\subsection{Label-Shift}
\label{sec:simulation_label_shift}
\begin{figure}
\centering
\begin{subfigure}[t]{0.49\textwidth}
    \centering
    \includegraphics[width=\textwidth]{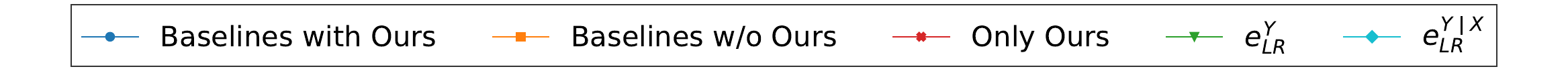}
\end{subfigure}
\\
\begin{subfigure}[t]{0.23\textwidth}
    \centering
    \includegraphics[width=\textwidth]{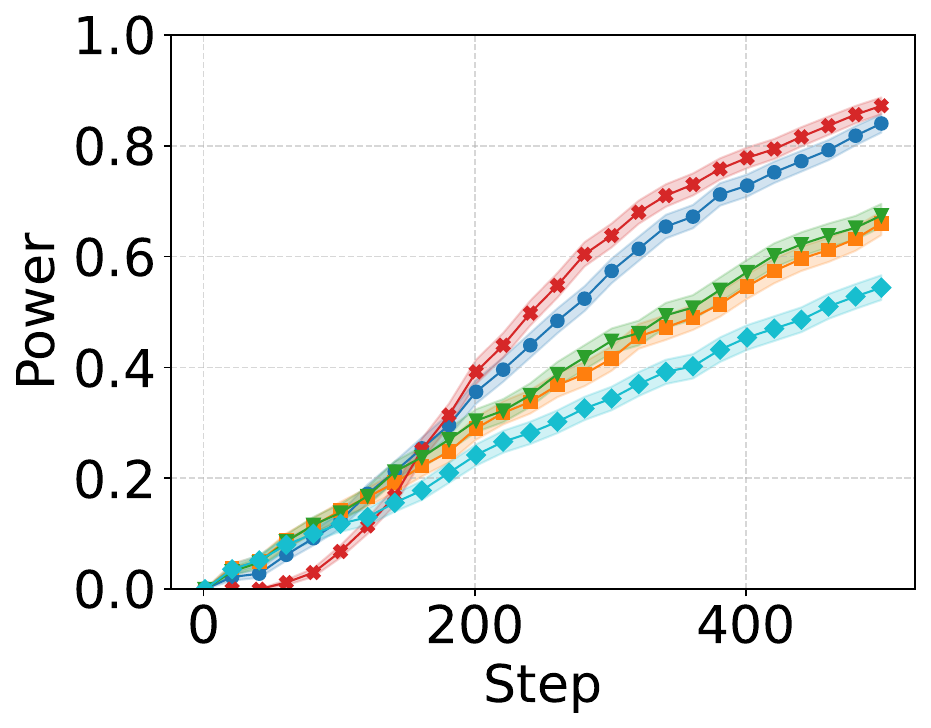}
    \subcaption{low correlation}
\end{subfigure}
\begin{subfigure}[t]{0.23\textwidth}
    \centering
    \includegraphics[width=\textwidth]{plots/simulations/concept_shift/ntest_135_corr_low_correlation.pdf}
    \subcaption{high correlation}
    \label{fig:sub_a}
\end{subfigure}
\caption {Simulation results for the concept shift setting.}
\label{fig:concept_shift}
\end{figure}

\begin{figure*}[t]
\centering
\begin{subfigure}[t]{0.8\textwidth}
    \centering
    \includegraphics[width=\textwidth]{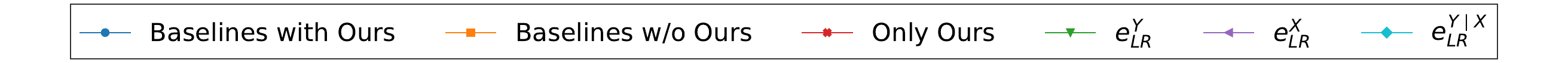}
\end{subfigure}
\\
\begin{subfigure}[t]{0.3\textwidth}
    \centering
    \includegraphics[width=\textwidth]{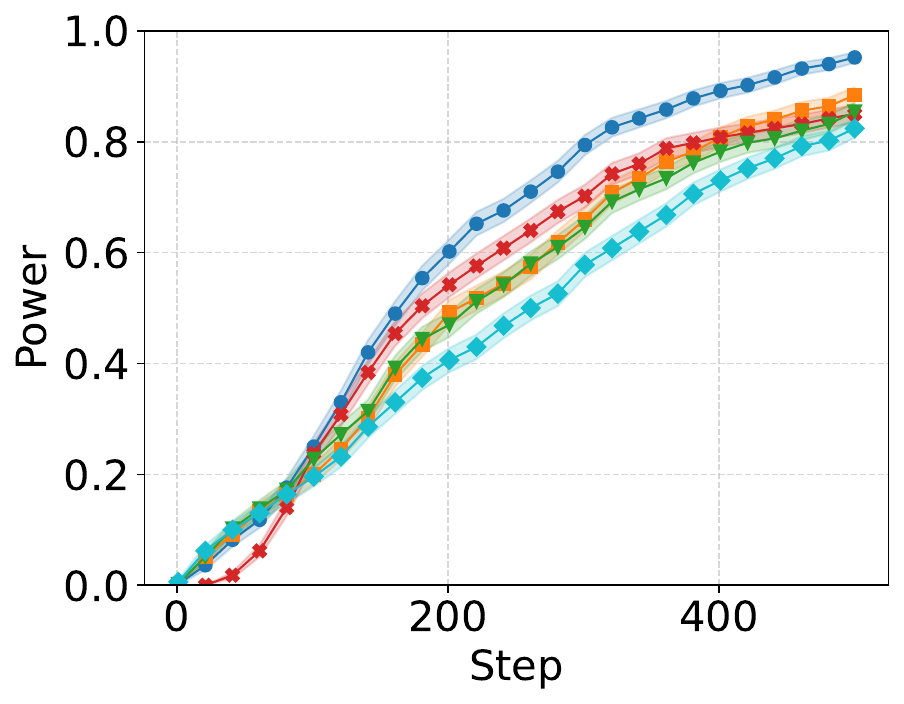}
    \subcaption{}    \label{fig:real_data_concept_shift_categories}
\end{subfigure}
\begin{subfigure}[t]{0.3\textwidth}
    \centering
    \includegraphics[width=\textwidth]{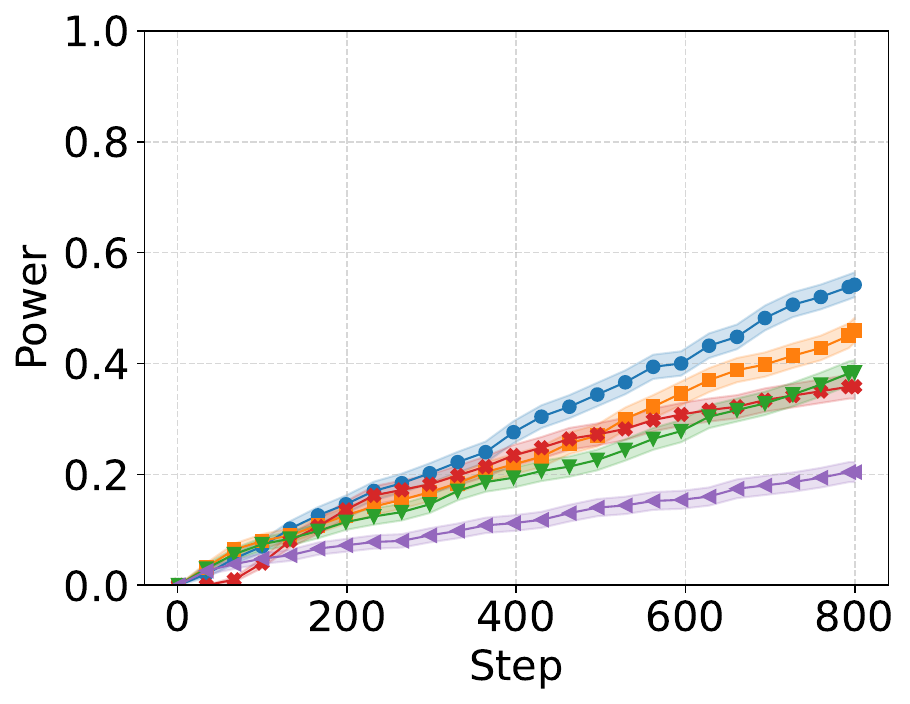}
    \subcaption{}
    \label{fig:real_data_label_shift_low_unlabeled}
\end{subfigure}
\begin{subfigure}[t]{0.3\textwidth}
    \centering
    \includegraphics[width=\textwidth]{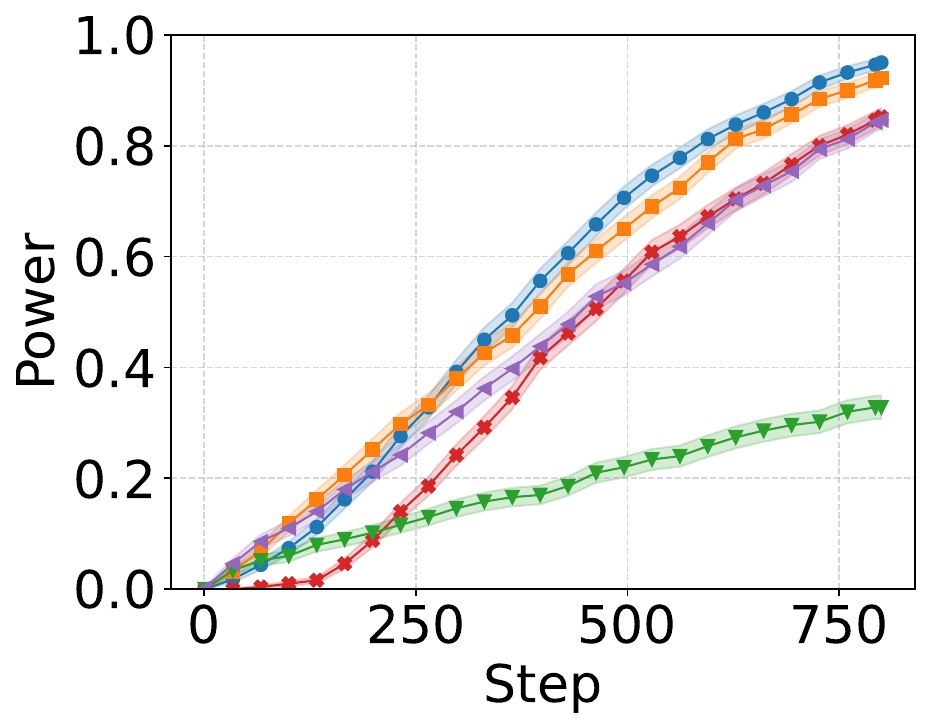}
    \subcaption{}
    \label{fig:real_data_label_shift_high_unlabeled}
\end{subfigure}
\caption{Power as a function of step results for online testing of improvement of a new LLM compared to an existing LLM. We use math benchmarks to evaluate the models. In all settings $Y$ is the accuracy of the model and $X$ varies between experiments: (a) $X$ is the identifier of the math question, (b) $X$ is the accuracy according to an LLM judge and we assume access to few unlabeled samples, and (c) where $X$ is the accuracy according to an LLM judge and we assume access to many unlabeled samples.}
\label{fig:real_data}
\end{figure*}

For the null hypothesis, we set $\theta_Y^{\textsuperscript{null}}=0.5$ and for the alternative, we use $\theta_Y=0.52$. We use the threshold classifier~\eqref{eq:threshold_classifier} for both the imputed and the PPI e-processes. Under the label shift regime, a shift in $P_X$ can be a powerful indicator of a shift in $P_Y$, and the strength of the signal in $X$ depends on $N$ and on the correlation between $X$ and $Y$. Thus, to study the benefit of our method, we simulate different signal strengths of $X$ by setting $N=30$ and $N=135$ and setting the correlations to $0.3$ and $0.7$. 

For each configuration, we plot the power as a function of the step number $t$ of the convex combination of all baselines: $\text{conv}(e^Y_{\text{LR}}, e^X_{\text{LR}}, e_{\text{PPI}})$ and the power of the convex combination of the baselines with our imputed e-process: $\text{conv}(e^Y_{\text{LR}}, e^X_{\text{LR}}, e_{\text{PPI}}, \breve{e}_t)$. Note that any difference in power between the two curves is attributed to our imputed e-process. In addition, to gain intuition about which distribution shift ($P_X$ or $P_Y$) is the most dominant in each setting, we plot both $e^Y_{\text{LR}}$ and $e^X_{\text{LR}}$. Finally, we plot the power curve of the imputed e-process to examine its performance under different settings. In Appendix~\ref{sec:comparing_to_ppi} we also thoroughly compare to PPI and empirically show that in challenging settings, where the correlation between $X$ and $Y$ is low, our method gains higher power. 

Figure~\ref{fig:label_shift} presents the results. As can be seen, when the correlation is low (\cref{fig:label_shift_low_unlabeled_medium_corr,fig:label_shift_high_unlabeled_medium_corr}), our imputed process demonstrates significant power gains. When unlabeled data is scarce (\cref{fig:label_shift_low_unlabeled_medium_corr}), the convex combination containing our method (blue curve) achieves significantly higher power than the convex combination of the baselines (orange curve). Furthermore, we can observe that for most of the timesteps, the power of our method (red curve) is significantly higher than the power of the most powerful single baseline (green curve). This is due to the scarcity of unlabeled points and low correlation, which render baselines ineffective. When the number of unlabeled points is large (\cref{fig:label_shift_high_unlabeled_medium_corr}), the gain from our test diminishes, although it remains significant. This is due to the abundance of unlabeled samples, which greatly improves $e^{X}_{\text{LR}}$.

When the correlation is high, and the number of unlabeled points is low (\cref{fig:label_shift_low_unlabeled_high_corr}), we observe that $e^X_{\text{LR}}$ is the most powerful baseline, as expected. It is also evident that after sufficient steps, our process (i) performs on par with $e^X_{\text{LR}}$; and (ii) is capable of utilizing evidence against the null that has not been used by other baselines, thus increasing the power of the combined process. Finally, when both the correlation and the number of unlabeled samples are high (\cref{fig:label_shift_high_unlabeled_high_corr}), $e^X_{\text{LR}}$ is the most powerful baseline and its power rapidly grows to one. To conclude, we see that our method is useful even in challenging settings where the correlation is low or $N$ is small. 

\subsection{Concept-Shift}
\label{sec:simulations_concept_shift}
We repeat the same simulation protocol with the following changes: First, under the concept shift regime the appropriate baselines are $e^Y_{\text{LR}}$ and $e^{Y\mid X}_{\text{LR}}$, thus we plot $\text{conv}(e^Y_{\text{LR}}, e^{Y\mid X}_{\text{LR}}, e_{\text{PPI}})$ and $\text{conv}(e^Y_{\text{LR}}, e^{Y\mid X}_{\text{LR}}, e_{\text{PPI}}, \breve{e}_t)$. Additionally, since we assume we can freely sample unlabeled data from the null, and in this setting $P_X$ is fixed between the null and the alternative, we only consider large $N$, concretely $N=135$.

Figure~\ref{fig:concept_shift} displays the results. Comparing the power of the combined e-process that includes our method (blue curve) with the one that does not include our method (orange curve), we can see that, regardless of the level of correlation, our imputed e-process provides a significant gain in power. In line with the label-shift results, the relative gain is highest when the correlation is low. In both settings, our test (red curve) emerges as the strongest method. This occurs because, under concept shift, our test is the only test that bets on unlabeled data against the conditional null $Y \mid X$. 

\section{Real-World Application: LLM Evaluation}

Suppose we have two LLMs: $A$ and $B$, and our goal is to test if model $B$ is more accurate than $A$. We define $Y_A$ and $Y_B$ to be binary random variables indicating a correct answer by the models $A$ and $B$, respectively. For example, for LLM instruction-following tasks, $Y$ can indicate if the LLM's output is helpful; for LLM alignment tasks, $Y$ can indicate if the response is toxic; and for math questions, $Y$ can indicate if the answer is correct. Concretely, we demonstrate our method using math datasets, where $Y$ indicates if the answer to a math question is correct. To implement our method, we assume that we have many samples of $Y_A$, to form the null datasets. This assumption is realistic in cases where model $A$ has been deployed for some time and model $B$ is a new model that we wish to test. 

We show how we can use our method with different types of unlabeled data. For example, we can generate synthetic answers using an LLM to compare models $A$ and $B$. 

In all experiments, we form a dataset for which the accuracy of models $A$ and $B$ is close. The reason is that any reasonable hypothesis test would quickly reject the null when the performance of the two models greatly differs; arguably, in such settings, existing tests are already powerful, and thus the use of unlabeled data becomes less attractive.

Naturally, in real-world applications, we often have limited data, but our method requires sampling $M$ datasets at each step of the test. To implement the test, we leverage \autoref{worst_case_bound_on_rejection_rate} and learn the null distribution from the data. We report the type-I error in section \ref{sec:llm_eval_validity} that validates the robustness of the test.

\subsection{Data and Models}

For the models, we use distillations of DeepSeek-R1~\cite{deepseekai2025deepseekr1incentivizingreasoningcapability} into different families and sizes. Specifically, we use Llama3.1 8B~\cite{grattafiori2024llama} as model $A$ and Qwen2.5 7B~\cite{qwen2.5} as model $B$. To generate synthetic answers, we use Qwen2.5 14B. For the datasets, we use a union of three popular math datasets: GSM8K~\cite{cobbe2021gsm8k}, MATH~\cite{hendrycksmath2021}, and  AQUA-RAT~\cite{ling2017program}. In Appendix~\ref{sec:llm_eval_setup} we provide the details of each dataset.

\subsection{Concept Shift: Testing Conditional Improvement}
When evaluating models using multiple datasets, our goal is to test if there is at least one dataset for which model $B$ is better than model $A$. Thus, we define $X$ to be the dataset identifier of the question. In the following experiments, we use GSM8K and AQUA-RAT datasets, so $X=1$ if the question came from AQUA-RAT and $X=0$ if the question came from GSM8K. 

When sampling questions to evaluate the models, we use the same distribution over $X$, therefore this setting falls under the concept shift regime. We use the same configuration as in Section~\ref{sec:simulations_concept_shift} with $n=10$ and $N=90$, and repeat the same evaluation protocol as in Section~\ref{sec:simulations}. Section~\ref{sec:llm_eval_setup} contains a full description of the setup.

\cref{fig:real_data_concept_shift_categories} depicts the results. Note that the correlation between the accuracy of the model and the dataset identifier is low (0.2), but we still see that our method (red curve) gains higher power compared to the strongest baseline (green curve). In addition, the combination that includes our method (blue curve) is more powerful compared to the combination of all other baselines (orange curve). 

\subsection{Label shift: Using LLM-based Annotations}
 To minimize the use of human annotations, we employ LLM-judge annotations. We define $X_A$ to be the indicator of the event that model $A$'s answer is correct according to the judge. That is, $X_A =1$ if the answer generated by model $A$ is equal to the answer generated by the judge. We define $X_B$ similarly. 
 
 We note that this scenario fits the label-shift setting as long as the accuracy of the judge is the same for outputs generated by $A$ and $B$; i.e., $X\mid Y$ is fixed under the null and alternative. We use the same configuration as in Section~\ref{sec:simulation_label_shift} with $n=10$ and $N \in \{10, 60\}$. We simulate each test for 800 steps and repeat the same evaluation protocol as inSection~\ref{sec:simulations}. We refer the reader to Section~\ref{sec:llm_eval_setup} for a description of the parameters we used. 
 
The results appear in~\cref{fig:real_data_label_shift_low_unlabeled,fig:real_data_label_shift_high_unlabeled}. When only a few unlabeled points are available (\cref{fig:real_data_label_shift_low_unlabeled}), our approach contributes to the power of the combined test. Note that this setting is challenging, as evidenced by the low power obtained by all tests. In contrast, when the number of unlabeled points $N$ increases (\cref{fig:real_data_label_shift_high_unlabeled}), our imputed e-process diminishes but remains significant.

\section{In-Context Learning with Multi Dimensional Mixed Type Data}
\label{sec:multivarite_experiment}
Next, we evaluate our method in a more realistic setting where (i) predictions are obtained using a foundation model, and (ii) the covariates $X$ are multi-dimensional, comprising a mix of continuous and multiclass features.
\begin{figure}
\centering
    \centering
    \includegraphics[width=0.4\textwidth]{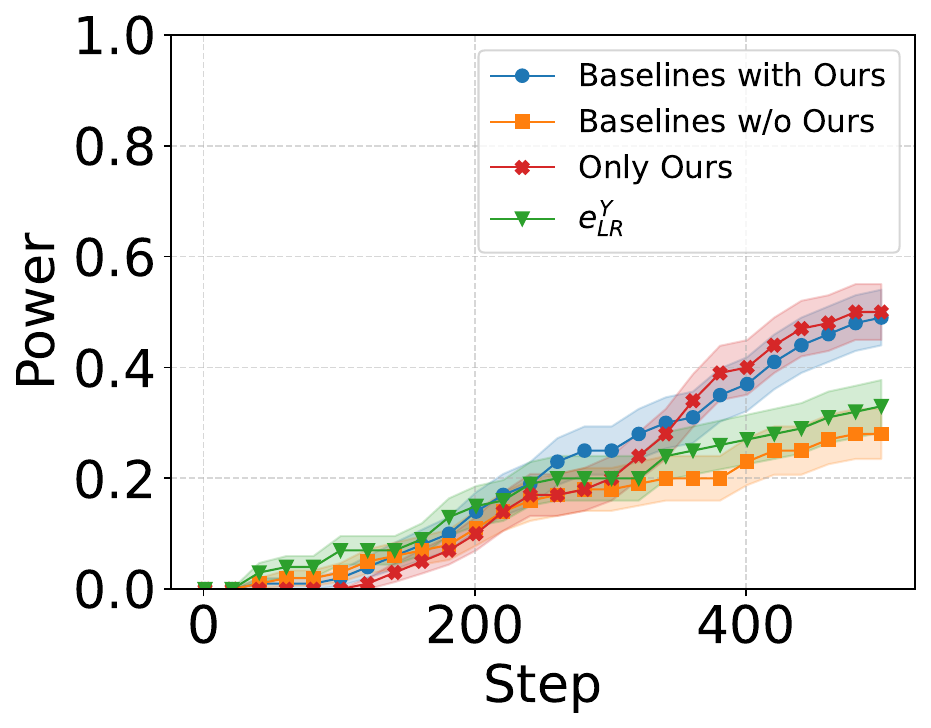}
\caption{Power as a function of step results for online testing of increase in the procurement of private health insurance.}
\label{fig:multi_variate_health_care}
\end{figure}

We use a pretrained foundation model and provide it with the labeled data directly within its context window. We then then use it to generate predictions for a new batch of unlabeled samples. In this way, our method can benefit from the expressive power of foundation models while maintaining a low computational overhead. Indeed, naively training a foundation model at each step is practically infeasible.

To asses this approach, we consider the problem of  evaluating the procurement of private health insurance using California census data (2017–2019). We define the covariate $X$ as a feature vector containing a mix of continuous and multiclass variables (e.g. sex, nativity, income, see full details at~\autoref{appendix:multi_variat}). The target variable $Y$ is a binary indicator of private healthcare coverage. We test the hypothesis that the prevalence of private health care increased between 2017 and 2019.
Our analysis shows that the distribution of $X|Y$ is fixed between those years, thus satisfying the label shift assumption. We compare our method to $e^Y_{\text{LR}}$ and $e_{\text{PPI}}$. Note that since $X$ is mixed-typed, multi-dimensional variable, modeling its distribution is highly non-trivial. As such, we do not compare to $e^X_{\text{LR}}$. As before, we plot the power of the convex combination of all baselines $\text{conv}(e^Y_{\text{LR}}, e_{\text{PPI}})$ and the power of the convex combination of the baselines with our imputed e-process: $\text{conv}(e^Y_{\text{LR}}, e_{\text{PPI}}, \breve{e}_t)$.

At each step, we draw $n=50$ labeled points and $N=100$ unlabeled points. Predictions are obtained with TabPFN~\cite{DBLP:conf/iclr/Hollmann0EH23}, a state-of-the-art tabular foundation model. We repeat the experiment using $100$ different realizations of the data. By pairing in-context inference with a small number of null datasets, $M=2$, we leverage the high expressivity of a foundation model while maintaining a strictly low computational overhead. 

The results are depicted in~\autoref{fig:multi_variate_health_care}. As can be seen, our method outperforms PPI and the likelihood ratio test (LRT). Furthermore, when combining our method with PPI and LRT we gain a significant increase in power compared to the combination of PPI and LRT.

\section{Conclusions and Future Work}
We introduced a sequential hypothesis testing procedure that leverages predictions on unlabeled data to enhance the power of statistical tests. Designed for modern settings where labeled data are scarce but covariates' data
are abundant, we developed an imputed e-statistic that incorporates signals from potentially inaccurate predictions. We discuss identifiability conditions and establish guarantees for anytime finite-sample type-I error control. Focusing on binary data, we proved a non-trivial power property of our construction. Through empirical evaluation, we demonstrated the power gains over sequential LR tests and the PPI approach.

One limitation of our method is the requirement to fit the model $\mathcal{A}$ on each batch of labeled data. However, this is feasible for binary data or in situations that involve in-context learning. Another limitation is the need to sample from the null. To alleviate this, we provided a robustness result that was also validated empirically.

A natural future direction is to analyze the growth rate and expected stopping time of the method, which does not follow directly from our non-trivial power guarantees. Beyond its theoretical interest, such an analysis would yield practical guidance for choosing $\mathcal{A}$, $M$, and $K(\cdot)$ in real applications.

\section*{Impact Statement}
This paper presents work whose goal is to advance the field of machine learning and statistics. There are many potential societal consequences of our work, none of which we feel must be specifically highlighted here.

\section*{Acknowledgments}
This research was supported the European Union (ERC, SafetyBounds, 101163414). Views and opinions expressed are however those of the authors only and do not necessarily reflect those of the European Union or the European Research Council Executive Agency. Neither the European Union nor the granting authority can be held responsible for them.

\bibliography{example_paper}
\bibliographystyle{icml2026}

\newpage
\appendix
\onecolumn

\section{Background}
\subsection{First-order stochastic dominance (FOSD).}
The notion of stochastic order and FOSD in particular are at the core of this work. Formally, let $X$ and $Y$ be real-valued random variables with cumulative distribution functions
$F_X(t)=\mathbb{P}(X\le t)$ and $F_Y(t)=\mathbb{P}(Y\le t)$. We say that \emph{$Y$ first-order stochastically dominates $X$} (denoted $X \le_{\mathrm{FOSD}} Y$) if
\[
F_X(t)\;\ge\;F_Y(t)\qquad \text{for all } t\in\mathbb{R}.
\]
A \emph{necessary and sufficient} for FOSD. Equivalently, $X \le_{\mathrm{FOSD}} Y$ if and only if
\[
\mathbb{E}[\varphi(X)] \;\le\; \mathbb{E}[\varphi(Y)]
\]
for every bounded nondecreasing function $\varphi:\mathbb{R}\to\mathbb{R}$ \cite{shaked2007stochastic}.

\paragraph{Strict FOSD and equivalent characterizations.}
We say that $Y$ \emph{strictly} first-order stochastically dominates $X$, denoted
$X <_{\mathrm{FOSD}} Y$, if $F_X(t)\ge F_Y(t)$ for all $t$ and the inequality is strict for at least one $t$ (equivalently, on a set of positive measure). In this case,
\[
\mathbb{E}[\varphi(X)] \;<\; \mathbb{E}[\varphi(Y)]
\]
for every strictly increasing function $\varphi$ for which the expectations exist. 

Many members of the exponential family of distributions 
are FOSD ordered. Popular examples include: the Bernoulli and Binomial (that are ordered with respect to the probability of success parameter); the Poisson distribution (that is ordered with respect to the rate parameter); the exponential distribution, and the normal distribution with a fixed variance term (that is ordered with respect to its location parameter).

\subsection{Soft-rank E-value}
\label{sec: appendix soft-rank e-value}
A core concept we use is that of a \emph{soft rank} e-value \cite{ramdas2025hypothesis}. Similarly to many modern testing procedures, it relies on the assumption that one can draw $M$ exchangeable samples from the null. These exchangeable samples are used to estimate the distribution of the test statistic under the null, and the null is rejected if the observed statistic is larger than the $(1-\alpha)$ quantile of the empirical distribution. 

Formally, consider testing some hypothesis $\mathcal{P}$ and let $L_0$ be the test statistic calculated from the original data. Further, let $L_1,\ldots,L_M$, be the test statistics computed with the additional $M$ null samples. Note that under the null, they are exchangeable together with $L_0$. Recall that $L_0,\ldots,L_M$ are called exchangeable if their joint distribution equals that of $(L_{\sigma(0)}, \ldots, L_{\sigma(M)})$ for any permutation $\sigma$ of $\{0,\ldots,M\}$. 

To build a soft-rank e-value, we first transform $L_0,L_1,\ldots,L_B$ into some nonnegative scores $R_0,R_1,\ldots,R_B$ while preserving their order and exchangeability. Concretely, 
\begin{equation}
\label{soft_e_value}
E = \frac{(M+1)R_0}{\sum_{i=0}^M R_i},    
\end{equation}    
where by convention $E = 1$ if $\sum_{i=1}^{M+1}R_i = 0$. It is easy to see that under the null $E$ is a valid e-value. As its name suggests, it is closely related to the standard rank-based p-value defined as $P = \frac{\sum_{i=1}^M \mathds{1}_{\{L_0\geq L_i\}}}{M+1}$. Since $R_i$ preserve ordering
\[
    P = \frac{\sum_{i=1}^M \mathds{1}_{\{L_0\geq L_i\}}}{M+1}\leq\frac{\sum_{i=0}^M R_i/R_0}{M+1} = \frac{1}{E}.
\]
Since $P$ quantifies the rank of $L_0$ amongst $L_0,\ldots,L_B$, $1/E$ can be seen as a smoothed notion of rank, or "soft-rank". 

\section{Proofs}
\subsection{Proofs for Subsection ~\ref{subsec: imputed_e_statistic}}
\label{appendix: validity_and_robustness}

\paragraph{Proof of Proposition ~\ref{validity_of_unconditional_finite_sample_e_statistic}}

\begin{proposition}
\label{validity_of_unconditional_finite_sample_e_statistic_formal}
Let $(D^1,\tilde{\vec{X}}^1)\ldots, (D^M, \tilde{\vec{X}}^M)$ be  drawn independently from the null distribution, and assume $(D^i,\tilde{\vec{X}}^i), \hspace{0.03in} 1\leq i \leq M$ are independent of the observed $(D, \vec{\tilde{X}})$. Then under label shift (concept shift) setting, $\breve{e}[D, \tilde{\vec{X}}]]$ is a valid e-statistic with respect to $\mathcal{P}_1(\mathcal{P}_2)$:
$\mathbb{E}_{H_0}[\breve{e}[D, \tilde{\vec{X}}]]=1$, where the expectation is taken with respect to the data $(D, \tilde{\vec{X}})$ and $(D^1,\tilde{\vec{X}}^1)\ldots, (D^M, \tilde{\vec{X}}^M)$. 
\end{proposition}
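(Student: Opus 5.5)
The plan is the standard exchangeability argument for soft-rank e-values. First, we use the identifiability discussion to show that under $H_0(\mathcal{P}_1)$ with label shift, or under $H_0(\mathcal{P}_2)$ with concept shift, the observed data satisfy $(D,\vec{\tilde{X}})\sim P^{\textup{null}}(D,\vec{\tilde{X}})$.

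For label shift, $P_{X\mid Y}$ is fixed and $P_Y=P_Y^{\textup{null}}$, so $P_{XY}=P^{\textup{null}}_{XY}$. Integrating out $Y$ then gives $P_X=P_X^{\textup{null}}$. For concept shift, $P_X$ is fixed and $P_{Y\mid X}=P^{\textup{null}}_{Y\mid X}$, so again $P_{XY}=P^{\textup{null}}_{XY}$. In both cases the product form of $P^{\textup{null}}(D,\vec{\tilde{X}})$ holds. Combined with the assumed independence from the null copies, this makes $(D^0,\vec{\tilde{X}}^0),(D^1,\vec{\tilde{X}}^1),\dots,(D^M,\vec{\tilde{X}}^M)$ i.i.d., hence exchangeable.

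Next, we pass exchangeability to the scores. Set $R_i=K(\mathcal{A}[D^i,\vec{\tilde{X}}^i])$ for $i=0,\dots,M$, using the same algorithm $\mathcal{A}$ and the same function $K$ for every index. Each $R_i$ is the same measurable map applied to the $i$-th dataset, so $(R_0,\dots,R_M)$ is exchangeable. If $\mathcal{A}$ uses internal randomization, for instance the Bayes classifier, we take independent randomness for each copy, and the i.i.d. structure is preserved. Since $K\ge 0$, we have $\breve{e}=(M+1)R_0/\sum_{j=0}^M R_j$. We adopt the convention $\breve{e}=1$ when the sum is zero; this case cannot occur if $K$ is strictly positive.

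The key computation goes as follows. Let $S=\sum_j R_j$, which is invariant under permutations of the indices. By exchangeability, $\mathbb{E}[R_i/S\,\mathds{1}\{S>0\}]$ is the same for every $i$. Summing over $i$ gives
\[
(M+1)\,\mathbb{E}\!\left[\frac{R_0}{S}\mathds{1}\{S>0\}\right]=\mathbb{E}\!\left[\sum_{i=0}^M\frac{R_i}{S}\mathds{1}\{S>0\}\right]=P(S>0).
\]
Adding the contribution $P(S=0)$ from the convention yields $\mathbb{E}_{H_0}[\breve{e}]=1$. All terms lie in $[0,M+1]$, so the expectations are finite and no integrability issues arise.

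The main subtlety is the first step. We must state clearly that exchangeability of the datasets follows from the null together with the shift assumption. It does not follow from the null alone, since $H_0$ constrains only $P_Y$ or $P_{Y\mid X}$. We must also handle any internal randomness of $\mathcal{A}$ carefully so that the score vector is exchangeable; the rest of the argument is routine.
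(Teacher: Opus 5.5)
Your proposal is correct and takes essentially the same route as the paper. Both arguments first use the label-shift (concept-shift) assumption to reduce to exchangeability of the $M+1$ datasets, then use that the normalized scores have equal expectations and sum to $M+1$. Your treatment of internal randomness in $\mathcal{A}$ and of the degenerate case $S=0$ makes explicit two details that the paper's proof leaves implicit.
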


\begin{proof}
Define $(D^0,\vec{\tilde{X}}^0) = (D, \vec{\tilde{X}})$. We first show that $\breve{e}[D, \vec{\tilde{X}}]$ is a valid e-statistic for exchangeability: if $(D^0,\vec{\tilde{X}}^0),(D^1,\vec{\tilde{X}}^1)\ldots, (D^M, \vec{\tilde{X}}^M)$ are exchangeable, then:
\begin{align*}
    &\breve{e}^j[D^j, \vec{\tilde{X}}^j] = \frac{(M+1)K(\mathcal{A}[D^j, \vec{\tilde{X}}^j])}{\left(K(\mathcal{A}[D^0,\vec{\tilde{X}}^0]) + \sum_{i=1}^MK(\mathcal{A}[D^i,\vec{\tilde{X}}^i])\right)}, \\\ &j=0,\ldots, M
\end{align*}
have the same expected value as $\breve{e}[D, \vec{\tilde{X}}]$. Since $\sum_j \breve{e}^j[D^j, \vec{X}^j] = M+1$, this implies $\mathbb{E}[\breve{e}[D, \vec{\tilde{X}}]]=1$.

Next, under the label-shift setting and given that $(D^1,\vec{\tilde{X}}^1)\ldots, (D^M, \vec{\tilde{X}}^M)$ are i.i.d and $(D^0,\vec{\tilde{X}}^0)$ is independent of $(D^1,\vec{\tilde{X}}^1)\ldots, (D^M, \vec{\tilde{X}}^M)$, we have that if $P^{\textup{null}}_Y = P_Y$ then $(D^0,\vec{\tilde{X}}^0) \overset{d}{=} (D^i,\vec{\tilde{X}}^i)$ for $i \in \{1, \dots, M \}$. Therefore under the label-shift setting testing $H_0(\mathcal{P}_0)$ is equivalent to testing exchangeability.

Following a similar argument we get that under concept shift, testing $H_0(\mathcal{P_2})$ is equivalent to testing exchangeability, which concludes the proof.

\end{proof}
\subsection{Proofs for Subsection ~\ref{subsec: e-process}}
\label{sec: validity and robusrness of e-process}
\paragraph{Proof of Proposition ~\ref{validity_of_conditional_finite_sample_e_statistic}}

\begin{proposition}
\label{validity_of_conditional_finite_sample_e_statistic_formal}
Let $(D^1_t,\tilde{\vec{X}}^1_t)\ldots, (D^M_t, \tilde{\vec{X}}^M_t)$ be  drawn independently from the null distribution, and assume $(D^i_t,\tilde{\vec{X}}^i_t)$ are independent of the observed $(D_t, \vec{\tilde{X}}_t), \forall 1\leq i \leq M$. Then under label shift (concept shift) setting, $\breve{e}[D_t, \vec{\tilde{X}}_t]$ is a valid conditional e-statistic with respect to $\mathcal{P}_1 (\mathcal{P}_2): \mathbb{E}_{H_0}[\breve{e}[D_t, \vec{\tilde{X}}_t]\mid \mathcal{F}_{t-1}]=1$, where the expectation is taken with respect to the fresh data $(D_t, \vec{\tilde{X}}_t)$ and $(D_t^1,\vec{\tilde{X}}_t^1)\ldots, (D_t^M, \vec{\tilde{X}}_t^M)$.
\end{proposition}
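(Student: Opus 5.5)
The plan is to reduce the conditional statement to the unconditional Proposition~\ref{validity_of_unconditional_finite_sample_e_statistic_formal} by conditioning on $\mathcal{F}_{t-1}$. The two facts that make this work are that the score $K_t(\cdot)$ is $\mathcal{F}_{t-1}$-measurable and that the fresh batch is independent of the past.

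First, I fix a realization of the history $D_{-t}$, and hence of $\mathcal{F}_{t-1}$. Conditional on this history, $K_t$ is a deterministic non-negative function $\mathbb{R}^N\to\mathbb{R}^+$. This holds because its parameters, such as $\gamma_t$, are chosen from $D_{-t}$ only. The learning algorithm $\mathcal{A}$ is fitted only on the fresh batch $D^i_t$, so it is also a fixed map, possibly with its own independent randomization.

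Second, I argue that under $H_0(\mathcal{P}_1)$ with label shift, or $H_0(\mathcal{P}_2)$ with concept shift, the observed fresh batch $(D_t,\vec{\tilde X}_t)$ is drawn from $P^{\textup{null}}(D,\vec{\tilde X})$. This uses the identifiability discussion preceding Proposition~\ref{validity_of_unconditional_finite_sample_e_statistic}. Since batches are drawn fresh at each step, the batch is independent of $\mathcal{F}_{t-1}$. The $M$ null copies $(D^i_t,\vec{\tilde X}^i_t)$ are drawn independently from the null, independently of both the observed batch and the history. Hence, conditionally on $\mathcal{F}_{t-1}$, the $M+1$ datasets $(D^0_t,\vec{\tilde X}^0_t),\dots,(D^M_t,\vec{\tilde X}^M_t)$ are i.i.d., and in particular exchangeable. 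Applying the fixed map $K_t\circ\mathcal{A}$ coordinatewise preserves exchangeability, so the scores $R_j = K_t(\tilde{\vec Y}^j_t)$ are exchangeable given $\mathcal{F}_{t-1}$.

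Third, I repeat the symmetrization argument from the proof of Proposition~\ref{validity_of_unconditional_finite_sample_e_statistic_formal}, now conditionally. Define
\[
\breve e^j_t=\frac{(M+1)R_j}{\sum_{i=0}^M R_i}, \qquad j=0,\dots,M.
\]
By conditional exchangeability, all $\breve e^j_t$ have the same conditional expectation given $\mathcal{F}_{t-1}$. Their sum is identically $M+1$, using the convention $E=1$ when the denominator vanishes, or strict positivity of $K_t$. Therefore $\mathbb{E}_{H_0}[\breve e[D_t,\vec{\tilde X}_t]\mid\mathcal{F}_{t-1}]=1$. Each term lies in $[0,M+1]$, so integrability is automatic. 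For $t=1$ the same argument applies unconditionally.

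The main obstacle is making the conditioning rigorous. It requires a clean statement that the fresh data and the null copies are jointly independent of $\mathcal{F}_{t-1}$, and that conditional exchangeability passes through the history-dependent score $K_t$. I would handle this with a regular conditional distribution, or simply by Fubini over the history: the inner expectation equals $1$ for every fixed history, hence also after integrating over it.
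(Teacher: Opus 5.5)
Your proposal is correct and follows essentially the same route as the paper. Both arguments fix $K_t$ by conditioning on the history, use exchangeability under $H_0$ of the observed batch with the $M$ null copies, and conclude from the identity $\sum_{j=0}^M \breve e^j = M+1$. You also spell out several points the paper leaves implicit: the independence of the fresh data from $\mathcal{F}_{t-1}$, the zero-denominator convention, and integrability.
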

\begin{proof}
As before, we will show that $\breve{e}[D_t, \vec{\tilde{X}}_t]$ is a valid conditional e-statistic for testing exchangability, and the validity for testing $H_0(\mathcal{P}_1)$ and $H_0(\mathcal{P}_2)$ will immediately follow.  First, observe that given the historical data $D_{-t}$, $K_t$ is fixed. Next define $(D_t^0,\vec{\tilde{X}}_t^0) = (D_t, \vec{\tilde{X}}_t)$. Note that under the null, $(D_t^0,\vec{\tilde{X}}_t^0),(D_t^1,\vec{\tilde{X}}_t^1)\ldots, (D_t^M, \vec{\tilde{X}}_t^M)$ are exchangeable. Therefore
{\footnotesize \begin{align*}
    \breve{e}^j[D_{t}^j, X^j_t]& = \frac{(M+1)K(\mathcal{A}[D^j_{t}, \vec{\tilde{X}}^j_t])}{\left(K(\mathcal{A}[D^0_{t},\vec{\tilde{X}}^0_t]) + \sum_{i=1}^MK(\mathcal{A}[D_{t}^i,\vec{\tilde{X}}_t^i])\right)}, \\
    & j=0,\ldots, M
\end{align*}}
all have the same expected value as  $\breve{e}[D_t, \vec{\tilde{X}}_t]$.

Since $\sum_j \breve{e}^j[D_t^j, \vec{\tilde{X}}^j_t] = M+1$, this implies $\mathbb{E}[\breve{e}[D_t, \vec{\tilde{X}}_t]]=1$.

Thus, $\breve{e}[D_t, \vec{\tilde{X}}_t]$ is a valid conditional e-statistic and for testing exchangability and under the labels-shift (concept-shift) it is valid for testing $H_0(\mathcal{P}_1)$ and $H_0(\mathcal{P}_2)$.
\end{proof}

\paragraph{Proof of Theorem ~\ref{worst_case_bound_on_rejection_rate}:}
\begin{proof}
Fix $T \in \mathds{N}$ and $\alpha \in (0,1)$, and define the process $\hat{S}_t(\{(D_1,\vec{\tilde{X}}_1),\ldots, (D_t, \vec{\tilde{X}}_t)\})= \prod_{i=1}^t \hat{e}^i[D_i, \vec{\tilde{X}}_i], \hspace{0.04in} 1\leq t\leq T$. For $t>T$, set $\hat{e}^t[D_t, X_t]=1$, so that $\hat{S}_t=\hat{S}_T$ for $t>T$. The condition $K(\cdot)>0$ ensures that this e-value is well-defined. 
If $\left((D_1, \vec{\tilde{X}}_1),\ldots, (D_t, \vec{\tilde{X}}_t) \right)$ has distribution $\hat{L}^\textup{null}_{\otimes}$, then the process $(\hat{S}_t)_{t \in \mathds{N}}$ is a non-negative martingale with respect to the filtration $\mathcal{F}_t = \sigma\left( (D_1 \vec{\tilde{X}}_1),\ldots, (D_t, \vec{\tilde{X}}_t) \right)$, because
\begin{align*}
    \mathbb{E}[\hat{S}_t \mid (D_1, \vec{\tilde{X}}_1),\ldots, (D_{t-1}, \vec{\tilde{X}}_{t-1})]  =& \hat{S}_{t-1}\mathbb{E}[\hat{e}^t[D_t, \vec{\tilde{X}}_t]] \\=& \hat{S}_{t-1}\int_{D_t, \vec{\tilde{X}}_t} \frac{K_t(\mathcal{A}[D_{t}, \vec{\tilde{X}}_t])}{\int K_t(\mathcal{A}[D_{t}, \vec{\tilde{X}}_t])d\hat{P}^\textup{null}_t(D, \vec{\tilde{X}})}d\hat{P}^\textup{null}_t(D,\vec{\tilde{X}}) \\=& \hat{S}_{t-1}.
\end{align*}
Consider the event:
\begin{equation*}
    \begin{aligned}
         \mathcal{C} = \{(D_1, \vec{\tilde{x}}_1),\ldots, (D_t, \vec{\tilde{x}}_t): \hspace{0.04in} \exists t\leq T: \hspace{0.04in} \hat{S}_t((D_1, \vec{\tilde{x}}_1),\ldots, (D_t, \vec{\tilde{x}}_t)) \geq \frac{1}{\alpha}\}.
    \end{aligned}
\end{equation*}
The type-I error is defined as: 
$P_{(D_1, \vec{X}_1)\otimes\ldots (D_T, \vec{X}_T)\sim P^\textup{null}_{\otimes_T}}(\exists t\leq T: \hat{S}_t\geq \frac{1}{\alpha}) = P^\textup{null}_{\otimes_T}( \mathcal{C})$.

By the definition of TV-distance, we have that:

\[
      P^\textup{null}_{\otimes_T}( \mathcal{C})\leq \hat{P}^\textup{null}_{\otimes_T}( \mathcal{C}) + d_{TV}(P^\textup{null}_{\otimes_T}, \hat{P}^\textup{null}_{\otimes_T})
\]

By Ville's inequality, 

\begin{align*}
\hat{P}^\textup{null}_{\otimes_T}( \mathcal{C}) &= P_{(D_1, \vec{X}_1)\otimes\ldots (D_T, \vec{X}_T)\sim \hat{L}^\textup{null}_{\otimes_T}}(\exists t \leq T: \hspace{0.04in} \hat{S}_t\geq \frac{1}{\alpha})  \leq \alpha.
\end{align*}

Therefore
\begin{align*}
      P_{(D_1, \vec{\tilde{X}}_1)\otimes\ldots (D_T, \vec{\tilde{X}}_T)\sim P^\textup{null}_{\otimes_T}}(\exists t\leq T: S_t\geq \frac{1}{\alpha})\leq &\alpha + d_{TV}(P^\textup{null}_{\otimes_T}, \hat{P}^\textup{null}_{\otimes_T}),
\end{align*}

as required.
\end{proof}

\subsection{Non Trivial Bound for TV Distance}
\label{sec:non_trivial_tv_distance}

\begin{theorem}
Let $(D_1, \vec{\tilde{X}}_1) \overset{d}{=}\ldots\overset{d}{=}(D_t, \vec{\tilde{X}}_t)$ 
be independent identically distributed copies of $(D, \vec{\tilde{X}})$. 
We let $P^\textup{null}_{\otimes_t}$ be their joint distribution under the null: $P^\textup{null}_{\otimes_t}((D_1, \vec{\tilde{X}}_1), \ldots, (D_t, \vec{\tilde{X}}_t)) = \prod_{i=1}^t P^\textup{null}_i(D_i, \vec{\tilde{X}}_i)$. Assume that at every step $t$, we can sample $M_1$ labeled samples and $M_2$ unlabeled samples from the null.  Let $\hat{P}^\textup{null}_t(D, \vec{\tilde{X}})$ be the approximated distribution that is constructed using $tM_1$ samples of $(X,Y)$ and $tM_2$ samples of $X$.
We define $\hat{P}^\textup{null}_{\otimes_t}$ analogously to $P^\textup{null}_{\otimes_t}$.
Then, for any $\delta \in (0,1)$, with probability at least $1-\delta$, the total variation distance satisfies:
\begin{equation*}
    d_{TV}(P^\textup{null}_{\otimes_t}, \hat{P}^\textup{null}_{\otimes_t}) \leq 2\sqrt{t} \left( \sqrt{\frac{n^2 K_{XY}}{2M_1}} + \sqrt{\frac{N^2 K_X}{2M_2}} \right)
\end{equation*}
Where the confidence constants are:
\begin{align*}
    K_{XY} &= 4 \ln 2 + \ln(2t/\delta) \\
    K_X &= 2 \ln 2 + \ln(2t/\delta)
\end{align*}
\end{theorem}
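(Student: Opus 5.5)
We bound the total variation distance between the two product measures by a sum over time steps, then reduce each step to the empirical estimation of two small categorical distributions and apply a concentration bound.

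\emph{Reduction to a single batch.} Because total variation is subadditive over product measures,
\[
d_{TV}(P^\textup{null}_{\otimes_t}, \hat{P}^\textup{null}_{\otimes_t}) \le \sum_{i=1}^t d_{TV}(P^\textup{null}, \hat{P}^\textup{null}_i).
\]
Here every factor of $P^\textup{null}_{\otimes_t}$ is the same null law $P^\textup{null}(D,\vec{\tilde X})$. The paper's target bound grows like $\sqrt{t}$ rather than $t$, so we would instead use Hellinger or Pinsker tensorization:
\[
d_{TV}(P^{\otimes}, Q^{\otimes}) \le \sqrt{\tfrac12\sum_i \mathrm{KL}} \quad\text{or}\quad d_{TV} \le \sqrt{2}\,H\bigl(P^{\otimes},Q^{\otimes}\bigr), \qquad H^2\bigl(P^{\otimes},Q^{\otimes}\bigr)\le \sum_i H^2(P,Q).
\]
Using $H^2 \le d_{TV}$ for each factor then gives
\[
d_{TV}(P^\textup{null}_{\otimes_t}, \hat{P}^\textup{null}_{\otimes_t}) \le \sqrt{2\sum_{i=1}^t d_{TV}(P^\textup{null}, \hat P^\textup{null}_i)}.
\]
This produces a $\sqrt{t}$ factor, but with a square root on each per-step term. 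Comparing with the target, the cleaner route is a bound of the form $\sqrt{t}\cdot \max_i \epsilon_i$, where $\epsilon_i$ is the per-step bound. I would take whichever tensorization yields exactly $2\sqrt{t}\,(\cdot)$, and expect a Pinsker-type argument on the product to be what the authors intend.

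\emph{Single batch.} The batch law is $P_{XY}^{\otimes n}\otimes P_X^{\otimes N}$, and the approximation is $\hat P_{XY}^{\otimes n}\otimes \hat P_X^{\otimes N}$, where $\hat P_{XY}$ and $\hat P_X$ are the empirical distributions built from the $iM_1$ and $iM_2$ null samples. Subadditivity over the $n+N$ coordinates gives
\[
d_{TV} \le n\, d_{TV}(P_{XY},\hat P_{XY}) + N\, d_{TV}(P_X,\hat P_X).
\]
This matches the $n^2$ and $N^2$ factors inside the square roots of the statement, since $n\sqrt{K/M}=\sqrt{n^2K/M}$.

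\emph{Concentration.} $(X,Y)$ takes $k=4$ values and $X$ takes $k=2$ values. For the empirical law of $m$ samples over $k$ categories, Bretagnolle--Huber--Carol (or the Weissman et al.\ $L_1$ inequality) gives
\[
P\bigl(\|P-\hat P\|_1 \ge \epsilon\bigr)\le 2^k e^{-m\epsilon^2/2}.
\]
Hence, with probability $1-\delta'$,
\[
d_{TV} \le \sqrt{\bigl(k\ln 2+\ln(1/\delta')\bigr)/(2m)}.
\]
Choosing $\delta'=\delta/(2t)$, so that a union bound covers both distributions and all $t$ steps, gives exactly $K_{XY}=4\ln2+\ln(2t/\delta)$ and $K_X=2\ln2+\ln(2t/\delta)$. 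Using $m\ge M_1$ and $m \ge M_2$ (fewer samples gives a weaker bound), each step's error is at most
\[
\epsilon := \sqrt{n^2K_{XY}/(2M_1)}+\sqrt{N^2K_X/(2M_2)}
\]
on the good event.

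\emph{Combining and the main obstacle.} On the intersection of the good events, the per-step distances are uniformly bounded by $\epsilon$. The remaining work is to aggregate these into $2\sqrt{t}\,\epsilon$ rather than the trivial $t\epsilon$. This aggregation is the main obstacle, since subadditivity alone gives linear growth in $t$. I would first try bounding the product-measure distance via $\chi^2$ or Hellinger tensorization around the per-coordinate deviations, treating the errors as fluctuation terms that add in quadrature. Failing that, I would note that the approximation at step $i$ uses $iM_1$ samples, so its error is about $\epsilon/\sqrt{i}$. Then
\[
\sum_{i=1}^t \frac{\epsilon}{\sqrt{i}} \le 2\sqrt{t}\,\epsilon.
\]
This second argument is likely the intended one, since it explains both the constant $2$ and the $\sqrt{t}$.
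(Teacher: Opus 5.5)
Your final argument is exactly the paper's proof: subadditivity of $d_{TV}$ over the $t$ steps and over the $n+N$ coordinates of each batch, Weissman's inequality with a union bound at level $\delta/(2t)$ over the $2t$ events, and the fact that step $i$ uses $iM_1$ and $iM_2$ samples, so its error is exactly $\epsilon/\sqrt{i}$, summed via $\sum_{i=1}^t i^{-1/2}\le 2\sqrt{t}$. You can drop the Hellinger/Pinsker detour and the intermediate uniform bound via $m\ge M_1$: plain subadditivity already suffices, because the $\sqrt{t}$ comes entirely from the growing sample sizes rather than from any quadrature-type tensorization, so there is no aggregation obstacle.
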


\begin{proof}
The proof proceeds in three steps: (1) decomposing the total variation distance of the sequence into a sum of step-wise errors, (2) bounding the parameter estimation error at each step using concentration inequalities, and (3) summing these errors over the full time horizon.

\subsection*{Step 1: Decomposition of Joint Total Variation}
We wish to bound $d_{TV}(P^\textup{null}_{\otimes_t}, \hat{P}^\textup{null}_{\otimes_t})$. Under the true null and the estimated null, the steps are independent: $P^\textup{null}_{\otimes_t} = \prod_{i=1}^t P^\textup{null}$ and  $\hat{P}^\textup{null}_{\otimes_t} = \prod_{i=1}^t \hat{P}_i^\textup{null}$. Thus, using the sub-additivity property of Total Variation distance for sequential distributions, we have:
\begin{equation*}
    d_{TV}(P^\textup{null}_{\otimes_t}, \hat{P}^\textup{null}_{\otimes_t}) \leq \sum_{i=1}^t  d_{TV}\left( P^\textup{null}(D_i, \vec{\tilde{X}}_i), \hat{P}^\textup{null}(D_i, \vec{\tilde{X}}_i) \right).
\end{equation*}

At step $i$, the dataset consists of $n$ i.i.d. pairs drawn from $P^{\textup{null}}_{XY}$ and $N$ i.i.d. samples drawn from $P^{\textup{null}}_X$.As before, by the sub-additivity of TV for product measures, the distance for the single step $i$ is bounded by:
\begin{equation*}
    d_{TV}\left( P^{\textup{null}}, \hat{P}^\textup{null} \right) \leq n \cdot d_{TV}(P^{\textup{null}}_{XY}, \hat{P}^\textup{null}_{XY,i}) + N \cdot d_{TV}(P^{\textup{null}}_{X}, \hat{P}^\textup{null}_{X,i}),
\end{equation*}

where $\hat{P}^\textup{null}_{XY,i}$ and $\hat{P}^\textup{null}_{X,i}$ represents the empirical parameters estimated from the data at step $i$.

\subsection*{Step 2: Concentration of Parameter Estimates}
Next, we bound the distance between the true parameters $P^{\textup{null}}_{XY}$ and the empirical estimate $\hat{P}^\textup{null}_{XY,i}$ derived from $m$ samples. For a discrete distribution over $k$ categories, Weissman's inequality states:
\begin{equation*}
    P\left( d_{TV}(P^{\textup{null}}_{XY}, \hat{P}^{\textup{null}}_{XY,i}) \geq \epsilon \right) \leq (2^k - 2) e^{-2m\epsilon^2} \leq 2^k e^{-2m\epsilon^2}
\end{equation*}
We apply this to our two estimators at every step $i \in \{1, \dots, t\}$.

At step $i$, the estimators $\hat{P}^{\textup{null}}_{XY}$  and $\hat{P}^{\textup{null}}_{X}$ are built from: $m_{XY, i} =  i \cdot M_1$ and $m_{X, i} = i \cdot M_2$ respectively.

We require the bound to hold for all $t$ steps and for both estimators ($XY$ and $X$) simultaneously. There are $2t$ total events. To ensure a global failure probability of at most $\delta$, we set the per-event failure probability to $\delta' = \frac{\delta}{2t}$.

Solving for $\epsilon$ in Weissman's inequality ($2^k e^{-2m\epsilon^2} = \delta'$):
\begin{equation*}
    \epsilon \leq \sqrt{\frac{k \ln 2 + \ln(1/\delta')}{2m}}
\end{equation*}
Substituting our specific sample sizes and alphabet sizes ($k=4$ for $XY$, $k=2$ for $X$):
\begin{equation*}
    \epsilon_{XY,i} \leq \sqrt{\frac{4 \ln 2 + \ln(2t/\delta)}{2iM_1}} \quad \text{and} \quad \epsilon_{X,i} \leq \sqrt{\frac{2 \ln 2 + \ln(2t/\delta)}{2iM_2}}
\end{equation*}
Let $K_{XY} = 4 \ln 2 + \ln(2t/\delta)$ and $K_X = 2 \ln 2 + \ln(2t/\delta)$.

\subsection*{Step 3: Summation and Final Bound}
We substitute these high-probability error bounds back into the decomposition from Step 1.
\begin{equation*}
    d_{TV}(P^\textup{null}_{\otimes_t}, \hat{P}^\textup{null}_{\otimes_t}) \leq \sum_{i=1}^t \left[ n \cdot \sqrt{\frac{K_{XY}}{2iM_1}} + N \cdot \sqrt{\frac{K_X}{2iM_2}} \right]
\end{equation*}
Factor out the terms that do not depend on $i$:
\begin{equation*}
    d_{TV}(P^\textup{null}_{\otimes_t}, \hat{P}^\textup{null}_{\otimes_t}) \leq \left( n\sqrt{\frac{K_{XY}}{2M_1}} + N\sqrt{\frac{K_X}{M_2}} \right) \sum_{i=1}^t \frac{1}{\sqrt{i}}
\end{equation*}
Using the integral bound for the harmonic series $\sum_{i=1}^t \frac{1}{\sqrt{i}} \leq \int_0^t x^{-1/2} dx = 2\sqrt{t}$:
\begin{equation*}
    d_{TV}(P^\textup{null}_{\otimes_t}, \hat{P}^\textup{null}_{\otimes_t}) \leq 2\sqrt{t} \left( \sqrt{\frac{n^2 K_{XY}}{2M_1}} + \sqrt{\frac{N^2 K_X}{2M_2}} \right)
\end{equation*}

\end{proof}

\subsection{Proofs for Subsection ~\ref{subsec: power_LS}}
\label{sec: power_of_imputed_statistics_LS}
In this section, we will prove the following:

\begin{theorem}
\label{non_trivial_power_label_shift_fromal}
Take $\mathcal{A}_{\tau}[D, \tilde{x}]$ as in~\eqref{eq:threshold_classifier} and assume that $P(X=1|Y=0)<P(X=1|Y=1)$. Then, under the label-shift regime, $\mathbb{E}_{H_1(\mathcal{P}_1)}[e[D,\tilde{X}]]>1$. Furthermore, letting $(D^1,\vec{\tilde{X}}^1),\ldots, (D^M, \vec{\tilde{X}}^M)$ be M i.i.d data sets, drawn from the null, independently of $(D, \vec{\tilde{X}})$. Then under the alternative, both $\mathbb{E}_{H_1(\mathcal{P}_1)}[\breve{e}[D, \vec{\tilde{X}}]]>1$, and $\mathbb{E}_{H_1(\mathcal{P}_1)}[\breve{e}[D_t, \vec{\tilde{X}}_t]]>1$,
where the expectation is taken with respect to $(D, \vec{\tilde{X}})$ and  $(D^1,\vec{\tilde{X}}^1)\ldots, (D^M, \vec{\tilde{X}}^M)$.     
\end{theorem}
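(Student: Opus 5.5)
Under the null, each triple $(D,\vec{\tilde{X}})$ is drawn with $\theta_Y=\theta_Y^{\textup{null}}$; under the alternative, $\theta_Y>\theta_Y^{\textup{null}}$ and $P_{X\mid Y}$ is unchanged. The plan is to reduce all three claims to one stochastic-dominance statement. Since $K(\vec{y})=\sum_{i=1}^N\psi(\tilde y_i)$ with $\psi$ increasing and predictions are binary, $K(\tilde{\vec{Y}})=N\psi(0)+(\psi(1)-\psi(0))\,S$, where $S=\sum_{i}\mathcal{A}_\tau[D,\tilde X_i]$ counts positive predictions. Hence it suffices to show that the law of $S$ under $H_1(\mathcal{P}_1)$ strictly first-order stochastically dominates its law under the null.

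\textbf{Monotonicity of the prediction count.} Under label shift, $P(X=1)=P(X=1\mid Y=0)+\theta_Y\,\Delta$ with $\Delta=P(X=1\mid Y=1)-P(X=1\mid Y=0)>0$ by assumption. So the unlabeled $\tilde X_i$ are i.i.d.\ Bernoulli with a parameter strictly increasing in $\theta_Y$. The empirical mean $\hat P_D(Y=1)$ is Binomial$(n,\theta_Y)/n$, which is FOSD-increasing in $\theta_Y$. The likelihoods $\hat P(\tilde x\mid Y)$ come from independent null data and are therefore unaffected.

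For fixed $\tilde x$, the posterior in \eqref{the_threshold_classifier} is nondecreasing in $\hat P_D(Y=1)$, so $\mathcal{A}_\tau$ is nondecreasing in $\hat P_D(Y=1)$. I also need $\mathcal{A}_\tau[D,1]\ge\mathcal{A}_\tau[D,0]$, which requires the estimated likelihood ratio to preserve the ordering $\hat P(X=1\mid Y=1)\ge\hat P(X=1\mid Y=0)$. I would condition on this event (or treat the null estimates as the true $P_{X\mid Y}$, where it holds by assumption). On that event, $S=\sum_i g(\hat P_D,\tilde X_i)$ with $g$ coordinatewise nondecreasing.

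Then I couple: realize $\hat P_D$ and the $\tilde X_i$ monotonically via uniforms, so that the alternative values dominate the null values pointwise. This gives $S^{H_1}\ge S^{H_0}$ almost surely. Strictness requires a positive-probability event where $S$ strictly increases. For instance, the event that some $\tilde X_i$ flips from $0$ to $1$ while $g(\hat P_D,1)=1$ and $g(\hat P_D,0)=0$. This needs $\tau$ to lie strictly between the posteriors at $x=0$ and $x=1$ for some attainable value of $\hat P_D$, which I would state as the nondegeneracy condition on $\tau$. Alternatively, flips of $\hat P_D$ across the threshold can supply strictness.

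\textbf{Consequences.} For $e$, the denominator is the null expectation, so $\mathbb{E}_{H_1}[e]=\mathbb{E}_{H_1}[K]/\mathbb{E}_{H_0}[K]>1$. For $\breve e$, write it as $(M+1)K_0/(K_0+R)$ with $R=\sum_{i\ge1}K(\tilde{\vec Y}^i)$ independent of $K_0$ and distributed identically under both hypotheses. The map $k\mapsto (M+1)k/(k+R)$ is strictly increasing for $R>0$, which holds because $\psi>0$. Taking the expectation over $K_0$ conditional on $R$, strict FOSD gives $\mathbb{E}_{H_1}[\breve e\mid R]>\mathbb{E}_{H_0}[\breve e\mid R]$. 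Integrating over $R$ and using $\mathbb{E}_{H_0}[\breve e]=1$ from Proposition~\ref{validity_of_unconditional_finite_sample_e_statistic_formal} yields $\mathbb{E}_{H_1}[\breve e]>1$. The conditional statistic $\breve e[D_t,\vec{\tilde X}_t]$ follows identically once we condition on $\mathcal{F}_{t-1}$, since $K_t$ is then fixed and fresh data are independent of the past.

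\textbf{Main obstacle.} The hard part is strictness and monotonicity of the classifier in $\tilde x$. This depends on the estimated null likelihoods and on $\tau$, and I expect to need an explicit nondegeneracy condition on $\tau$ relative to the posterior values.
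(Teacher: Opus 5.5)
Your argument is correct and shares the paper's skeleton: reduce $K$ to the count $S$ of positive predictions, prove strict FOSD of $S$, then derive the three claims. The two key steps, however, are argued differently.

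\textbf{Dominance step.} The paper rewrites $\mathcal{A}_\tau[D,x]=\mathds{1}\{\hat P_D(Y=1)>\kappa_x\}$ with a fixed threshold $\kappa_x\in(0,1)$ determined by $\tau$ and the estimated likelihoods. It then proves strict FOSD of the count for each fixed $\vec{\tilde x}$, and mixes over the binomial number of $\tilde X_i=1$ using that $k\mapsto P(S_k\le t)$ is non-increasing. Your single monotone coupling gives $S^{H_1}\ge S^{H_0}$ a.s., and hence strict inequality of expectations for any strictly increasing map, in one line.

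\textbf{The $\breve e$ step.} The paper shows each null share $\breve e^j$, $j\ge 1$, has mean below $1$ and then invokes $\sum_j\breve e^j=M+1$. You use directly that $(M+1)K_0/(K_0+R)$ is increasing in $K_0$ given $R$, which is shorter and equally valid.

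\textbf{Ordering assumption.} Your route makes explicit the requirement $\mathcal{A}_\tau[D,1]\ge\mathcal{A}_\tau[D,0]$, i.e.\ that the estimated likelihoods preserve $P(X=1|Y=0)<P(X=1|Y=1)$. The paper needs this too but uses it silently. Its expansion of $P(S_{k+1}\le t)$ as a sum over $0\le m\le t$ is valid only if $\mathcal{A}[D_0,1]-\mathcal{A}[D_0,0]\ge 0$.

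\textbf{Strictness.} The obstacle you anticipate does not arise, and no condition on $\tau$ beyond $\tau\in(0,1)$ is needed. Under your coupling, with probability $(\theta_Y-\theta_Y^{\textup{null}})^n>0$ all $n$ uniforms land in $(\theta_Y^{\textup{null}},\theta_Y]$. On that event $\hat P_D(Y=1)$ is $0$ under the null and $1$ under the alternative. With positive estimated likelihoods the posterior is then $0$ and $1$ respectively, so all $N$ predictions flip from $0$ to $1$ whatever $\vec{\tilde X}$ is. This is the same fact the paper relies on when it notes that its thresholds lie in $(0,1)$, so its step function is non-constant. You can therefore commit to the ``$\hat P_D$ crosses the threshold'' option and drop the extra nondegeneracy assumption.
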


Recall that we take $A_{\gamma}[D, x] = \mathds{1}\{\hat{P}(Y=1\mid X=x)>\gamma\}$, with $\hat{P}(Y=1\mid X=x)$ estimated as follows:
\begin{equation*}
\hat{P}(Y=1\mid X=x) = \frac{\hat{P}_D(Y=1)\hat{P}(x \mid Y=1)}{\hat{P}_D(Y=1)\hat{p}(x \mid Y=1)+\hat{P}_D(Y=0)\hat{P}(x \mid Y=0)}. 
\end{equation*}
In the above expression, $\hat{P}_D(Y=1)$ denotes the empirical proportion of $P(Y=1)$ in $D$, and all conditional probabilities $P(X \mid Y=y)$ are estimated using past null data. This is possible since under the label-shift regime, $P(X \mid Y=y)$ is fixed.

To derive the results, it would be convenient to represent $\mathcal{A}_{\gamma}[D, x]$ as follows. Define:
\begin{equation*}
\hat{\tau}_0 = \frac{\hat{P}(X=0|Y=0)}{\hat{P}(X=0|Y=0) + \hat{P}(X=0|Y=1)}, \hspace{0.07in}  \hat{\tau}_1 = \frac{\hat{P}(X=1|Y=0)}{\hat{P}(X=1|Y=0) + \hat{P}(X=1|Y=1)},
\end{equation*}
It is easy to verify that for  $x \in \{0,1\}$ and any $\gamma \in (0,1)$:
\[
\mathcal{A}_{\gamma}[D,x] = \mathds{1}\left\{\hat{P}_{D}(Y=1)>\frac{\gamma\hat \tau_x}{\gamma\hat \tau_x + (1-\gamma)(1-\hat{\tau}_x)}\right\}.
\] 
Importantly, at each step t, $\frac{\gamma\hat \tau_x}{\gamma\hat \tau_x + (1-\gamma)(1-\hat{\tau}_x)}$ is constant as both $\gamma$ and $\hat{\tau}_x$ are fixed. With this representation in place, we first show that the population imputed e-statistic, $e[D,\vec{\tilde{X}}]$, is powered against the alternative. Toward this result, the following propositions will be useful:
\vspace{0.2in}

\begin{proposition}
\label{prop_sum_of_FOSD_A[D,x]_Label_shift}
Let $\mathcal{A}$ be as defined above. Let $D_0, D_1$ be datasets that consist of $n$ points drawn from $P^\textup{0}_{XY}, P^\textup{1}_{XY}$, respectively: $D_0\sim \prod_{i=1}^n P^\textup{0}_{XY}, D_1\sim \prod_{i=1}^n P^\textup{1}_{XY}$, where $P^\textup{0}_{XY} = P_Y(;p_0)P(X|y), P^\textup{1}_{XY} = P_Y(;p_1)P(X|y)$, with $p_0<p_1$. Assume a non-degenerate setting (i.e., $0<p_0<p_1<1$) and $P(X=1|Y=0)<P(X=1|Y=1)$. Then, for any fixed binary vector $\vec{\tilde{x}} \in \{0,1\}^n$, under the label-shift regime 
\[
     \sum_{i=1}^N\mathcal{A}[D_0, \tilde{x}_i]<_{FOSD}  \sum_{i=1}^N\mathcal{A}[D_1, \tilde{x}_i].
\]
\end{proposition}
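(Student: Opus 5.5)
The plan is to reduce the statement to stochastic monotonicity of a binomial proportion. By the representation just derived, for each $x\in\{0,1\}$,
\[
\mathcal{A}[D,x]=\mathds{1}\{\hat{P}_D(Y=1)>c_x\},\qquad c_x:=\frac{\gamma\hat\tau_x}{\gamma\hat\tau_x+(1-\gamma)(1-\hat\tau_x)}.
\]
The constants $c_0,c_1$ depend only on $\gamma$ and on the conditional estimates $\hat P(X\mid Y)$. Those estimates are computed from null data that are independent of $D$. So, conditionally on them (and for the fixed vector $\vec{\tilde{x}}$), the whole sum depends on $D$ only through $\hat p:=\hat P_D(Y=1)$. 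Writing $N_x=\#\{i:\tilde x_i=x\}$, we get
\[
\sum_{i=1}^N\mathcal{A}[D,\tilde{x}_i]=g(\hat p),\qquad g(u):=N_0\mathds{1}\{u>c_0\}+N_1\mathds{1}\{u>c_1\}.
\]
This $g$ is a nondecreasing step function that does not depend on whether $D$ came from $P^0_{XY}$ or $P^1_{XY}$. The label-shift assumption is what makes this work: $P(X\mid Y)$ is common to both, so the same $\hat\tau_x$, and hence the same $c_x$, are used in both cases.

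Under $P^j_{XY}$ we have $n\hat p\sim\mathrm{Bin}(n,p_j)$. The first key step is the standard fact that $\mathrm{Bin}(n,p_0)<_{FOSD}\mathrm{Bin}(n,p_1)$ for $p_0<p_1$. I would prove it by coupling: set $Y_i^{(j)}=\mathds{1}\{U_i\le p_j\}$ with $U_i$ i.i.d. uniform, which gives $\hat p^{(0)}\le\hat p^{(1)}$ pointwise. Because $g$ is nondecreasing, the same coupling gives $g(\hat p^{(0)})\le g(\hat p^{(1)})$ almost surely. This yields the weak dominance $\sum_i\mathcal{A}[D_0,\tilde x_i]\le_{FOSD}\sum_i\mathcal{A}[D_1,\tilde x_i]$. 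The result is consistent with the characterization via bounded nondecreasing $\varphi$ recalled in the appendix.

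The second step, which I expect to be the main obstacle, is strictness. We need some level $s$ with $P(g(\hat p^{(1)})\ge s)>P(g(\hat p^{(0)})\ge s)$. Pick any $x$ with $N_x\ge1$; this is possible since $N\ge1$. Take $s$ to be the smallest value of $g$ strictly above $g(c_x)$. Then $\{g(\hat p)\ge s\}$ is an event of the form $\{\hat p>c\}$ for some threshold $c\in\{c_0,c_1\}$. For $c\in[0,1)$, the map $p\mapsto P(\mathrm{Bin}(n,p)>nc)$ is strictly increasing on $(0,1)$. This holds because it is a nonconstant binomial tail: its derivative is $n\binom{n-1}{k}p^k(1-p)^{n-1-k}>0$, where $k=\lfloor nc\rfloor$. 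Non-degeneracy $0<p_0<p_1<1$ is used exactly here.

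It therefore remains to check that the relevant thresholds lie in $(0,1)$. For this I would argue that $\hat\tau_x\in(0,1)$ whenever the estimated conditionals are non-degenerate. I would invoke the assumption $P(X=1\mid Y=0)<P(X=1\mid Y=1)$ (hence strictly inside $(0,1)$) and assume the estimates inherit this, which is reasonable since they come from null data. Then $\gamma\in(0,1)$ gives $c_x\in(0,1)$, so $g$ genuinely jumps within $[0,1]$ and the inequality is strict. If the estimates could be degenerate, I would state this as a mild additional condition, or condition on the event that they are not degenerate.
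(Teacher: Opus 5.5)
Your proposal is correct and follows essentially the same route as the paper: write the sum as $N_0\mathcal{A}[D,0]+N_1\mathcal{A}[D,1]$, i.e.\ a nondecreasing step function of $\hat{P}_D(Y=1)$ whose thresholds are fixed by the null-data estimates (the same under $P^0_{XY}$ and $P^1_{XY}$ by label shift), and then push the binomial FOSD of $\hat{P}_D(Y=1)$ through it. Your strictness step (an explicit threshold $c\in[0,1)$ combined with the strictly increasing binomial tail) and your use of the transformed thresholds $c_x$ instead of $\hat\tau_x$ are more careful than the paper's ``$h$ is nonconstant, hence strict'' step. Your caveat about degenerate estimates is also warranted, since the paper simply asserts $\hat\tau_x\in(0,1)$ and the ordering $P(X=1\mid Y=0)<P(X=1\mid Y=1)$ alone does not keep the conditionals strictly inside $(0,1)$, contrary to your parenthetical.
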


\begin{proof}
Let $\hat{P}_{D_0}(Y=1), \hat{P}_{D_1}(Y=1)$ be the empirical means in $D_0, D_1$, respectively. Note that for any fixed $n$, $\hat{P}_{D_0}(Y=1)<_{FOSD}\hat{P}_{D_1}(Y=1)$. Next,
let $\hat{\tau}_0, \hat{\tau}_1 \in (0,1)$ as above, and for non-negative constants $c_0, c_1$ define:
\[
    h(u; (\tau_0, \tau_1), (c_0, c_1)) = c_0\mathds{1}\{u>\hat{\tau}_0\} + c_1\mathds{1}\{u>\hat{\tau}_1\}.
\]

Clearly, for $u \in (0,1)$, $h(u)$ is non-decreasing, and since $\hat{\tau}_0, \hat{\tau}_1 \in (0,1)$, then assuming $c_0+c_1>0$, $h(u)$ is not constant. Therefore,
\begin{equation*}
\begin{aligned}
    h(\hat{P}_{D_0}(Y=1)) &= c_0\mathds{1}\{\hat{P}_{D_0}(Y=1)>\hat{\tau}_0\} + c_1\mathds{1}\{\hat{P}_{D_0}(Y=1)>\hat{\tau}_1\}\\ &<_{FOSD}
    c_0\mathds{1}\{\hat{P}_{D_1}(Y=1)>\hat{\tau}_0\} + c_1\mathds{1}\{\hat{P}_{D_1}(Y=1)>\hat{\tau}_1\}\\ & = h(\hat{P}_{D_1}(Y=1)).    
\end{aligned}
\end{equation*}
Therefore,
\[
    c_0\mathcal{A}[D_0, 0] + c_1\mathcal{A}[D_0, 1]<_{FOSD}  c_0\mathcal{A}[D_1, 0] + c_1\mathcal{A}[D_1, 1]
\]
Finally note that for any fixed $\tilde{x} \in \{0,1\}^N$:
\begin{equation*}
\begin{aligned}
     \sum_{i=1}^N\mathcal{A}[D_0, \tilde{x}_i] = N_0 \mathcal{A}[D_0, 0] + N_1A[D_0, 1], \\ \sum_{i=1}^N\mathcal{A}[D_1, \tilde{x}_i] = N_0 \mathcal{A}[D_1, 0] + N_1\mathcal{A}[D_1, 1],
\end{aligned}
\end{equation*}
where $N_0 = \#\{i, x_i=0\}, \hspace{0.06in} N_1 = N-N_0$. Thus, we arrived at the desired result.
\end{proof}

\begin{proposition}
\label{prop_sum_of_FOSD_A[D,X]_Label_shift}
Let $\mathcal{A}$ be as defined above. Let $D_0, D_1$ be datasets that consist of $n$ points drawn from $P^\textup{0}_{XY}, P^\textup{1}_{XY}$, respectively: $D_0\sim \prod_{i=1}^n P^\textup{0}_{XY}, D_1\sim \prod_{i=1}^n P^\textup{1}_{XY}$, where $P^\textup{0}_{XY} = P_Y(;p_0)P(X|y), P^\textup{1}_{XY} = P_Y(;p_1)P(X|y)$, with $p_0<p_1$. Assume (i) a non-degenerate setting: $0<p_0< p_1<1$ and $P(X=1|Y=0)<P(X=1|Y=1)$. Let $\vec{\tilde{X}}^0 \sim \prod_{i=1}^N P^\textup{0}_X, \vec{\tilde{X}}^1 \sim \prod_{i=1}^N P^\textup{1}_X$. Then:
\[
     \sum_{i=1}^N\mathcal{A}[D_0, \tilde{X}^0_i]<_{FOSD}  \sum_{i=1}^N\mathcal{A}[D_1, \tilde{X}^1_i].
\]
\end{proposition}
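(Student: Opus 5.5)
The plan is to reduce this statement to Proposition~\ref{prop_sum_of_FOSD_A[D,x]_Label_shift}, which handles a fixed vector $\vec{\tilde x}$, by conditioning on the number of ones among the unlabeled points. The extra randomness in $\vec{\tilde X}$ is then handled by a monotone coupling argument.

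\textbf{Step 1 (reduce to counts).} Since the same fitted model is applied to every unlabeled point, we can write
\[
\sum_{i=1}^N\mathcal{A}[D,\tilde X_i]=N\,\mathcal{A}[D,0]+N_1\bigl(\mathcal{A}[D,1]-\mathcal{A}[D,0]\bigr)=:S(D,N_1),
\]
where $N_1=\#\{i:\tilde X_i=1\}$. Here $N_1$ is independent of $D$, with $N_1^{j}\sim \mathrm{Bin}(N,q_j)$ and $q_j=P^{j}_X(X=1)$. Under label shift,
\[
q_j=p_j P(X=1\mid Y=1)+(1-p_j)P(X=1\mid Y=0).
\]
Together with $P(X=1\mid Y=0)<P(X=1\mid Y=1)$ and $p_0<p_1$, this gives $q_0<q_1$, hence $N_1^0\le_{FOSD}N_1^1$.

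\textbf{Step 2 (pointwise monotonicity in $N_1$).} I will show $\mathcal{A}[D,1]\ge\mathcal{A}[D,0]$ for every $D$. Using the threshold representation $\mathcal{A}_\gamma[D,x]=\mathds{1}\{\hat P_D(Y=1)>t_x\}$, the threshold $t_x=\gamma\hat\tau_x/(\gamma\hat\tau_x+(1-\gamma)(1-\hat\tau_x))$ is increasing in $\hat\tau_x$. So it suffices that $\hat\tau_1\le\hat\tau_0$. This is equivalent to
\[
\hat P(X=1\mid Y=0)\hat P(X=0\mid Y=1)\le \hat P(X=0\mid Y=0)\hat P(X=1\mid Y=1),
\]
i.e.\ $\hat P(X=1\mid Y=0)\le\hat P(X=1\mid Y=1)$. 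This holds whenever the null-based estimates respect the true ordering assumed in the statement. I expect this to be the main delicate point: it is a property of the (fixed, null-estimated) $\hat\tau$'s rather than of the data. If it cannot be guaranteed, I would add it explicitly as the working assumption, since the $\hat\tau_x$ are treated as constants at each step anyway. Given it, $k\mapsto S(D,k)$ is nondecreasing for every $D$.

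\textbf{Step 3 (chain the dominances).} Fix a strictly increasing bounded $\varphi$ and set $g(k)=\mathbb{E}\varphi(S(D_1,k))$. By independence of $N_1$ and $D$,
\[
\mathbb{E}\varphi(S(D_0,N_1^0))=\sum_k P(N_1^0=k)\,\mathbb{E}\varphi(S(D_0,k)).
\]
For each fixed $k$, $S(D,k)$ is exactly the sum over a fixed vector $\vec{\tilde x}$ with $k$ ones. Proposition~\ref{prop_sum_of_FOSD_A[D,x]_Label_shift} therefore gives $\mathbb{E}\varphi(S(D_0,k))<g(k)$ for each $k$; this requires $N\ge1$ so that the coefficients $c_0=N-k$, $c_1=k$ are not both zero. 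Hence
\[
\mathbb{E}\varphi(S(D_0,N_1^0))<\sum_k P(N_1^0=k)\,g(k).
\]
By Step 2, $g$ is nondecreasing in $k$, so Step 1 ($N_1^0\le_{FOSD}N_1^1$) yields
\[
\sum_k P(N_1^0=k)\,g(k)\le\sum_k P(N_1^1=k)\,g(k)=\mathbb{E}\varphi(S(D_1,N_1^1)).
\]
Combining the two displays gives $\mathbb{E}\varphi(S(D_0,N_1^0))<\mathbb{E}\varphi(S(D_1,N_1^1))$.

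The same computation with a nondecreasing bounded $\varphi$ gives the weak inequality. Applying it to the indicators $\mathds{1}\{\cdot>s\}$ gives CDF domination, and the strict inequality already obtained for strictly increasing $\varphi$ rules out equality of the two distributions. This establishes the strict FOSD relation stated in the proposition.
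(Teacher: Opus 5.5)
Your proof is correct and is essentially the paper's argument: you decompose by the number $N_1$ of ones among the unlabeled points, get $q_0<q_1$ from label shift, apply the preceding fixed-vector proposition for each $k$, and use monotonicity in $k$. The only differences are that you do the two swaps in the opposite order and handle strictness a little more cleanly via a strictly increasing $\varphi$.

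The condition you isolate in Step 2 is exactly what the paper uses silently. When it writes $P(S_{k+1}\le t)$ as a sum over $0\le m\le t$ only, that is valid only if $\mathcal{A}[D_0,1]\ge\mathcal{A}[D_0,0]$. The condition is also genuinely needed: take true $P(X=1\mid Y=0)=0.2$ and $P(X=1\mid Y=1)=0.8$, but reversed null estimates $0.99$ and $0.01$, with $\tau=1/2$, $n=20$, $N=1$. Then the probability that the sum equals $1$ drops from $0.5$ at $p_0=0.5$ to about $0.44$ at $p_1=0.6$, so you are right to state the condition explicitly.
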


\begin{proof}
First note that for a fixed $N$: 
\[
     \sum_{i=1}^N\mathcal{A}[D_0, \tilde{X}^0_i] = (N-N_{1,0}) \mathcal{A}[D_0, 0] + N_{1,0}\mathcal{A}[D_0, 1],
\]
where $N_{1,0}= \#\{i: \tilde{X}^0_i=1\}$ and $N_{1,0} \sim Bin(N, p_0)$. Similarly,
\[
     \sum_{i=1}^N\mathcal{A}[D_1, \tilde{X}^1_i] = (N-N_{1,1}) \mathcal{A}[D_1, 0] + N_{1,1}\mathcal{A}[D_1, 1],
\]
where $N_{1,1}= \#\{i: \tilde{X}^1_i=1\}$ and $N_{1,1} \sim Bin(N, p_1)$. Furthermore, let $p_0 = P(X^0=1), p_1 = P(X^1 = 1)$, we have
\begin{equation*}
   \begin{aligned}
     p_0&=P(\tilde{X}^0=1)\\&=P(Y^\textup{null}=0)P(\tilde{X}^0=1|Y^\textup{null}=0) + P(Y^\textup{null}=1)P(\tilde{X}^0=1|Y^\textup{null}=1)\\
     &<   P(Y^\textup{alt}=0)P(\tilde{X}^0=1|Y^\textup{null}=0) + P(Y^\textup{alt}=1)P(\tilde{X}^0=1|Y^\textup{null}=1)\\
     &=   P(Y^\textup{alt}=0)P(\tilde{X}^1=1|Y^\textup{alt}=0) + P(Y^\textup{alt}=1)P(\tilde{X}^1=1|Y^\textup{alt}=1)\\ &= P(\tilde{X}^1=1)\\ &= p_1,
    \end{aligned}
\end{equation*}
where we used the fact that under the label-shift regime $P(X=x|Y=y)$ are fixed, and the assumption that $P(X=1|Y=0)<P(X=1|Y=1)$. Therefore,
\begin{equation*}
        \begin{aligned}
        P(\sum_{i=1}^N\mathcal{A}[D_0, \tilde{X}^0_i]\leq t) = &P((N-N_{1,0}) \mathcal{A}[D_0, 0] + N_{1,0}\mathcal{A}[D_0, 1]\leq t)\\
        =& \sum_{k=0}^N P(N_{1,0}=k)P((N-k)\mathcal{A}[D_0, 0] + k\mathcal{A}[D_0, 1]\leq t)\\
        \geq& \sum_{k=0}^N P(N_{1,1}=k)P((N-k)\mathcal{A}[D_0, 0] + k\mathcal{A}[D_0, 1]\leq t)\\
        \geq& \sum_{k=0}^N P(N_{1,1}=k)P((N-k)\mathcal{A}[D_1, 0] + k\mathcal{A}[D_1, 1]\leq t)
        \\=& P(\sum_{i=1}^N\mathcal{A}[D_1, \tilde{X}^1_i]\leq t).
        \end{aligned}
\end{equation*}        
The first inequality is due to fact that the map $k\to P((N-k)\mathcal{A}[D_0, 0] + k\mathcal{A}[D_0, 1]\leq t)$ is non-increasing in $k$. To see this, let $S_k = (N-k)\mathcal{A}[D_0, 0] + k\mathcal{A}[D_0, 1]$. Then $S_{k+1} = S_k + (\mathcal{A}[D_0, 1]-\mathcal{A}[D_0, 0])$. Thus,
\begin{equation*}
        \begin{aligned}
         P(S_{k+1}\leq t) &=  \sum_{0\leq m\leq t}P(S_k=m)P(\mathcal{A}[D_0, 1]-\mathcal{A}[D_0, 0]\leq t-m|S_k=m)\\ &\leq \sum_{0\leq m\leq t}P(S_k=m) = P(S_k\leq t).   
        \end{aligned}
\end{equation*}    
Therefore, for any $k<k'$
        \begin{align*}
            P((N-k)\mathcal{A}[D_0, 0] + k\mathcal{A}[D_0, 1]\leq t)\geq  P((N-k')\mathcal{A}[D_0, 0] + k'\mathcal{A}[D_0, 1]\leq t).
        \end{align*}              
In words, the mixing distribution of $N_{1,1}$ assigns more weight to higher values of $k$, whose corresponding probabilities are lower or equal. The second inequality is due to the fact that for any fixed $k$ 
\begin{equation*}
\begin{aligned}
(N-k)A[D_0, 0] + k\mathcal{A}[D_0, 1]<_{FOSD} (N-k)\mathcal{A}[D_1, 0] + kA[D_1, 1]
  \end{aligned}
 \end{equation*} 
Finally, since for each $k$, $(N-k)\mathcal{A}[D_0, 0] + k\mathcal{A}[D_0, 1]<_{FOSD} (N-k)\mathcal{A}[D_1, 0] + k\mathcal{A}[D_1, 1]$, there exists $t_0$ such that 
\begin{equation*}
    \begin{aligned}
       p((N-k)\mathcal{A}[D_0, 0] + k\mathcal{A}[D_0, 1]\leq t_0)> p((N-k)\mathcal{A}[D_1, 0] + k\mathcal{A}[D_1, 1]\leq t_0).
    \end{aligned}
\end{equation*}
Hence,
\[
 P(\sum_{i=1}^n\mathcal{A}[D_0, \tilde{X}^0_i]\leq t_0)> P(\sum_{i=1}^n\mathcal{A}[D_1, \tilde{X}^1_i]\leq t_0),
\]
and the domination is strict.
\end{proof}
As we show next, based on the last result, we can show that the population and finite sample e-statistics are powered against the alternative. We start with the population version $e[D,\vec{\tilde{X}}]$:

\paragraph{Under label-shift regime, $e[D,\vec{\tilde{X}}]$ has non-trivial power against the alternative:}
\begin{proof}
Let $\tilde{X}_0\sim P^\textup{null}_X, \tilde{X}_1\sim P^\textup{alt}_X$. By the above proposition,  $\mathcal{A}[D_0, \tilde{X}_0]<_{FOSD}  \mathcal{A}[D_1, \tilde{X}_1]$. The result directly follows from the fact that $\psi(\cdot)$ is strictly increasing, since in this case $\mathbb{E}_{H_0}[\psi((\mathcal{A}[D_0, \tilde{X}_0])]<\mathbb{E}_{H_1}[\psi((\mathcal{A}[D_1, \tilde{X}_1])]$.
\end{proof}
Next we show that the finite-sample e-statistic $\breve{e}[D,\vec{\tilde{X}}]$ is also powered against the alternative under S
label-shift setting:
\paragraph{Under label-shift regime, $\breve{e}[D,\vec{\tilde{X}}]$ has non-trivial power against the alternative:}
\begin{proof}
Let $(D^0,\vec{\tilde{X}}^0)=(D,\vec{\tilde{X}})$, and define 
\begin{align*}
    \breve{e}^j[D^j, \vec{\tilde{X}}^j]&= \frac{(M+1)\sum_{k=1}^N\psi(\mathcal{A}[D^j, \tilde{X}^j_k])}{\left(\sum_{j=1}^N\psi(\mathcal{A}[D^0, \tilde{X}^0_j])+ \sum_{i=1}^M\sum_{j=1}^N\psi(\mathcal{A}[D^i,\tilde{X}^i_j])\right)}, \\ 
    j&=0,\ldots, M.
\end{align*}
For $j\neq 0$, by law of total expectation 
\begin{equation*}
 \mathbb{E}[ \breve{e}^j[D^j, \vec{\tilde{X}}^j]] = \\ \mathbb{E}_{(D^1,\vec{\tilde{X}}^1),\ldots, (D^M, \vec{\tilde{X}}^M)}[\mathbb{E}_{(D^0,\vec{\tilde{X}}^0)}\left[ \breve{e}^j[D^j, \vec{\tilde{X}}^j]|(D^1,\vec{\tilde{X}}^1)\ldots, (D^M, \vec{\tilde{X}}^M)\right]   
\end{equation*}
 Next, note that for fixed $(D^1,\vec{\tilde{X}}^1),\ldots, (D^M, \vec{\tilde{X}}^M)$, the inner integrand of the RHS can be written as follows:
 \begin{equation*}
\begin{aligned}
&\mathbb{E}_{(D^0,\vec{\tilde{X}}^0)}\left[\frac{C_0}{\sum_i\psi(\mathcal{A}[D^0, \tilde{X}^0_i]) + C_1}\right] = \\   
&\mathbb{E}_{(D^0,\vec{\tilde{X}}^0)}\left[\frac{C_0}{\psi(1)\sum_i\mathcal{A}[D^0, \tilde{X}^0_i] +\psi(0)(N-\sum_i\mathcal{A}[D^0, \tilde{X}^0_i])+ C_1}\right],
\end{aligned}
\end{equation*}
where $C_0, C_1$ are positive constants. Since $\psi(\cdot)$ is strictly increasing, so is 
\[
g(s) = \psi(1)s +\psi(0)(N-s), \hspace{0.05in} s=0,\ldots, N.
\]
Hence
\[
    g^*(s)=\frac{C_0}{g(s)+C_1}
\]
is strictly decreasing in s. Next, let $(D^*, \vec{X}^*)\sim P^\textup{null}(D,\vec{\tilde{X}})$ independent of $(D^1, \vec{\tilde{X}}^1), \ldots, (D^M, \vec{\tilde{X}}^M)$.  

By Proposition ~\ref{prop_sum_of_FOSD_A[D,X]_Label_shift}, $\sum_{j=1}^N \mathcal{A}[D^*, X^*_j]<_{FOSD} \sum_{j=1}^N \mathcal{A}[D^0, \tilde{X}^0_j]$. Thus,
 \begin{equation*}
  \mathbb{E}_{(D^0,\vec{\tilde{X}}^0)}\left[\frac{C_0}{g(\sum_{j=1}^N\mathcal{A}[D^0, \tilde{X}^0_j]) +  C_1}\right]  <  \mathbb{E}_{(D^*,\vec{X}^*)}\left[\frac{C_0}{g(\sum_{j=1}^N\mathcal{A}[D^*. X^*_j])+ C_1}\right],  
 \end{equation*}
Therefore
\begin{equation*}
\mathbb{E}[\breve{e}^j[D^j, \vec{\tilde{X}}^j]]   <
\mathbb{E}_{(D^1,\vec{\tilde{X}}^1),\ldots, (D^M, \vec{\tilde{X}}^M)}\mathbb{E}_{(D^*,\vec{\tilde{X}}^*)}\left[\breve{e}^j[D^j, \vec{\tilde{X}}^j] \mid (D^1,\vec{\tilde{X}}^1)\ldots, (D^M, \vec{\tilde{X}}^M)]\right],
\end{equation*}
 where the expectation on the LHS is with respect to the observed data $(D^0, \vec{\tilde{X}}^0)$ and the $M$ datasets $(D^1,\vec{\tilde{X}}^1)\ldots, (D^M, \vec{\tilde{X}}^M)$.
 
Importantly, $(D^1,\vec{\tilde{X}}^1)\ldots, (D^M, \vec{\tilde{X}}^M)$ and $(D^*,\vec{X}^*)$ are all i.i.d. copies drawn from the null, and thus are exchangeable. Therefore, by Theorem ~\ref{validity_of_unconditional_finite_sample_e_statistic} 
\begin{equation*}
  \mathbb{E}_{(D^1,\vec{\tilde{X}}^1),\ldots, (D^M, \vec{\tilde{X}}^M)}\mathbb{E}_{(D^*,\vec{X}^*)}\left[\breve{e}^j[D^j, \vec{\tilde{X}}^j]\mid(D^1,\vec{\tilde{X}}^1)\ldots, (D^M, \vec{\tilde{X}}^M)]\right] = 1.
\end{equation*}
Thus,
\[
 \mathbb{E}[\breve{e}^j[D^j, \vec{\tilde{X}}^j]]<1, \hspace{0.07in} 1\leq j\leq M
\]
Finally, since $\sum_{j=0}^M \mathbb{E}[\breve{e}^j[D^j, \vec{\tilde{X}}^j]]=M+1$, and $\{\mathbb{E}[\breve{e}^j[D^j, \vec{\tilde{X}}^j]\}_{j=1}^M$ are all equal,
\begin{align*}
    \mathbb{E}[\breve{e}^0[D^0, \vec{\tilde{X}}^0]] = (M+1)-M\mathbb{E}[\breve{e}^j[D^1, \vec{\tilde{X}}^1]]> (M+1)-M\cdot1 &=1, 
\end{align*}
as required.
\end{proof}
Finally, it remains to show that the conditional imputed e-statistic has also non-trivial power against the null,i.e., that $\mathbb{E}_{H_1(\mathcal{P}_1))}[e[D_t, \vec{\tilde{X}}]\mid \mathcal{F}_{t-1]}]>1$:

\paragraph{Under label-shift regime, $\breve{e}[D_t,\vec{\tilde{X}}_t]$ has non-trivial power against the alternative:}
\begin{proof}
This follows directly from the results above as given the historical data $K_t$ is fixed, and we already showed that $\mathbb{E}_{H_1(\mathcal{P}_1)}[e(D, \vec{\tilde{X}})]>1$ for any fixed positive and strictly increasing function $K$. Thus we are back to the unconditional case, and the proof is complete.
\end{proof}

\subsection{Proofs for Subsection ~\ref{subsec: power_CS}}
\label{sec:power_of_imputed_statistics_CS}
In this section, we will prove the following:

\begin{theorem}
\label{non_trivial_power_cs_formal}
Take $\mathcal{A}[D, \tilde{x}]$ as in~\eqref{the_posterior_classifier} and assume that $P(X=1|Y=0)<P(X=1|Y=1)$. Then, under the concept shift regime, $\mathbb{E}_{H_1(\mathcal{P}_2^+)}[e[D,\tilde{X}]]>1$. Furthermore, letting $(D^1,\vec{\tilde{X}}^1),\ldots, (D^M, \vec{\tilde{X}}^M)$ be M i.i.d data sets, drawn from the null, independently of $(D, \vec{\tilde{X}})$. Then both $\mathbb{E}_{H_1(\mathcal{P}_2^+)}[\breve{e}[D, \vec{\tilde{X}}]]>1$ and $\mathbb{E}_{H_1(\mathcal{P}_2^+)}[\breve{e}[D_t, \vec{\tilde{X}}_t]]>1$,
where the expectation is taken with respect to $(D, X)$ and  $(D^1,\vec{\tilde{X}}^1)\ldots, (D^M, \vec{\tilde{X}}^M)$.     
\end{theorem}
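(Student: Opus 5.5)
The plan is to follow the label-shift argument (Propositions~\ref{prop_sum_of_FOSD_A[D,x]_Label_shift} and~\ref{prop_sum_of_FOSD_A[D,X]_Label_shift}), adapted to concept shift and the Bayes classifier~\eqref{the_posterior_classifier}. The central claim to establish is
\[
\sum_{i=1}^N \mathcal{A}[D_0,\tilde X^0_i] \;<_{FOSD}\; \sum_{i=1}^N \mathcal{A}[D_1,\tilde X^1_i],
\]
where $D_0$ is drawn from the null, $D_1$ from an alternative in $H_1(\mathcal{P}_2^+)$, and the unlabeled samples come from $P_X$, which is common to both under concept shift. Granting this, the three conclusions follow. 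For $e[D,\tilde{X}]$, apply the strictly increasing map $s\mapsto \psi(1)s+\psi(0)(N-s)$. For $\breve{e}[D,\vec{\tilde X}]$, reuse the exchangeability/total-expectation argument: show $\mathbb{E}[\breve e^j]<1$ for $j\ge1$, and then $\sum_j \breve e^j=M+1$ forces $\mathbb{E}[\breve e^0]>1$. For $\breve e[D_t,\vec{\tilde X}_t]$, condition on $\mathcal{F}_{t-1}$ so that $K_t$ is fixed.

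The FOSD claim reduces in two steps.

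\textbf{Step 1 (conditioning on $\vec{\tilde X}$).} Since $P_X$ is fixed, $\vec{\tilde X}^0\overset{d}{=}\vec{\tilde X}^1$. Condition on a realization with $N_0$ zeros and $N_1$ ones. The sum then equals $B_0+B_1$, where, given $D$, $B_x\sim \mathrm{Bin}(N_x,\hat p_x)$ independently and $\hat p_x=\hat P_D(Y=1\mid X=x)$.

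\textbf{Step 2 (monotone coupling).} A Binomial with success probability $\hat p$ is FOSD-increasing in $\hat p$. So it suffices to show that $(\hat p_0,\hat p_1)$ under $D_1$ stochastically dominates it under $D_0$ coordinatewise, via a coupling.

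For the coupling, take $\hat p_x$ to be the empirical frequency of $Y=1$ among the labeled points with $X=x$, with a fixed convention (for instance, the null value or a prior) when that cell is empty. Couple $D_0$ and $D_1$ so that they share the same $X$-values, which is legitimate because $P_X$ is unchanged. For each point, use a common uniform $U_i$ and set $Y_i=\mathds{1}\{U_i<\theta_{Y\mid X_i}\}$. Under $H_1(\mathcal{P}_2^+)$ we have $\theta_{Y\mid x}\ge\theta^{null}_{Y\mid x}$ for every $x$, so $Y^1_i\ge Y^0_i$ pointwise and hence $\hat p_x^{(1)}\ge \hat p_x^{(0)}$ pointwise. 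Extend the coupling to the Bernoulli draws of $\mathcal{A}$ by common uniforms as well; this makes each prediction monotone, so the sums are pointwise ordered and weak FOSD follows. Mixing over $\vec{\tilde X}$ preserves weak dominance, as in Proposition~\ref{prop_sum_of_FOSD_A[D,X]_Label_shift}.

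\textbf{Main obstacle: strictness.} I would fix $x^*$ with $\theta_{Y\mid x^*}>\theta^{null}_{Y\mid x^*}$. With positive probability all of the following happen together:
\begin{itemize}
\item the labeled sample contains some point with $X=x^*$;
\item for that point, $U$ falls in $[\theta^{null}_{Y\mid x^*},\theta_{Y\mid x^*})$, so $\hat p_{x^*}$ strictly increases;
\item $\vec{\tilde X}$ contains some $x^*$ (requires $P_X(x^*)>0$ and non-degeneracy);
\item the prediction uniform for that unlabeled point falls between the two posterior estimates.
\end{itemize}
On this event the coupled sums differ strictly, while they are ordered everywhere. Hence $\mathbb{E}[\phi(\cdot)]$ is strictly ordered for strictly monotone $\phi$, and in particular for the strictly decreasing $g^*$ used in the $\breve e$ argument.

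I would need to check that the stated assumption $P(X=1|Y=0)<P(X=1|Y=1)$ plays no essential role here beyond non-degeneracy. If the estimator instead uses Bayes' rule with $P_{X|Y}$, the monotonicity in Step 2 must be re-verified, and that is where I expect the difficulty.
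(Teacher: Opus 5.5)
Your proof is correct and follows essentially the paper's route. The core is strict FOSD of $\sum_i \mathcal{A}[D,\tilde X_i]$ via a coupling of the labeled $Y$'s on shared $X$-values, followed by the strictly monotone transform, the exchangeability argument for $\breve e$, and conditioning on $\mathcal{F}_{t-1}$; the paper's Lemma~\ref{FOSD_of_sums_CS} differs only in marginalizing over $\tilde X$ to get $\mathrm{Bin}(N,\hat p)$ with $\hat p=\sum_x P_X(x)\hat p_x$, whereas you condition on $\vec{\tilde X}$ and also couple the classifier's Bernoulli draws. Your explicit positive-probability event handles strictness more carefully than the paper does, and your closing hedge is unnecessary: the paper likewise takes $\hat p_x$ to be the empirical conditional frequency in $D$, and the assumption $P(X=1\mid Y=0)<P(X=1\mid Y=1)$ is never used in the concept-shift argument.
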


To prove Theorem ~\ref{non_trivial_power_cs}, the following propositions will be useful:
\begin{proposition}
\label{mixture_of_bernoulli_FOSD_order}
Let $Z_0 \sim Ber(P_0), \hspace{0.04in} Z_1 \sim Ber(\mathcal{P}_1)$, where $P_0, P_1$ are random success probabilities. Then $P_0\leq_{FOSD}\mathcal{P}_1 \Rightarrow Z_0\leq_{FOSD}Z_1$.
\end{proposition}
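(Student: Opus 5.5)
The plan is to reduce the comparison to the laws of $P_0$ and $P_1$ by conditioning. A Bernoulli variable with a random success probability is again Bernoulli, with parameter equal to the mean of that probability. Concretely, the model is $Z_0\mid P_0 \sim Ber(P_0)$ and $Z_1 \mid P_1\sim Ber(P_1)$, where I read the statement's $Ber(\mathcal{P}_1)$ as $Ber(P_1)$. By the tower property,
\[
P(Z_k=1)=\mathbb{E}[P(Z_k=1\mid P_k)]=\mathbb{E}[P_k],\qquad k\in\{0,1\},
\]
so marginally $Z_0\sim Ber(\mathbb{E}[P_0])$ and $Z_1\sim Ber(\mathbb{E}[P_1])$.

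First I compare the means of $P_0$ and $P_1$. Both take values in $[0,1]$. The identity map $u\mapsto u$ is not bounded on $\mathbb{R}$, so I will instead use the bounded nondecreasing function $\varphi(u)=\min\{\max\{u,0\},1\}$, which agrees with the identity on $[0,1]$. Applying the characterization of FOSD from the background section (for bounded nondecreasing $\varphi$) to $P_0\le_{FOSD}P_1$ gives $\mathbb{E}[P_0]=\mathbb{E}[\varphi(P_0)]\le \mathbb{E}[\varphi(P_1)]=\mathbb{E}[P_1]$. Alternatively, one can use $\mathbb{E}[P]=\int_0^1 (1-F_P(s))\,ds$ together with $F_{P_0}\ge F_{P_1}$ pointwise.

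Second, I check FOSD between two Bernoulli laws directly from the CDFs. For $Ber(q)$, the CDF equals $0$ for $t<0$, equals $1-q$ for $0\le t<1$, and equals $1$ for $t\ge 1$. Hence $F_{Z_0}(t)\ge F_{Z_1}(t)$ for all $t$ holds exactly when $1-\mathbb{E}[P_0]\ge 1-\mathbb{E}[P_1]$, which is the mean inequality from the first step. This gives $Z_0\le_{FOSD}Z_1$.

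I will also remark on the strict case, since the later power theorem likely needs it. If $P_0<_{FOSD}P_1$, then the CDFs differ on a set of positive Lebesgue measure in $[0,1]$. The integral formula for the mean then gives $\mathbb{E}[P_0]<\mathbb{E}[P_1]$, and so $Z_0<_{FOSD}Z_1$ strictly.

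There is no real obstacle here. The only points needing care are that the conditional model is what defines $Z_k$, and that the mean comparison uses a bounded test function, which is justified because the success probabilities lie in $[0,1]$.
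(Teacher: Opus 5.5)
Your proof is correct and follows the same route as the paper: the tower property gives $Z_k\sim Ber(\mathbb{E}[P_k])$ marginally, and applying the FOSD ordering of $P_0,P_1$ to the (clamped) identity gives $\mathbb{E}[P_0]\le\mathbb{E}[P_1]$. You additionally spell out two steps the paper leaves implicit---that the test function is bounded, and the CDF check that a larger Bernoulli parameter yields FOSD---and your strict-case remark is also sound, since right-continuity of the CDFs turns a single point of strict inequality into an interval of positive length in $[0,1)$.
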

\begin{proof}
By the law of total expectation:
\[
P(Z_i=1) = \mathbb{E}_{P_i}[Z_i|P_i=p] = \mathbb{E}_{P_i}[p], \hspace{0.05in} i \in \{0,1\}.
\]
Next, let $\psi(p)$ be any real-value non-decreasing function. Then, since $P_0\leq_{FOSD}\mathcal{P}_1$, we have $\mathbb{E}_{P_0}[\psi(p)] \leq \mathbb{E}_{P_1}[\psi(p)]$. In particular, taking $\psi(p)=p$ we get that $P(Z_0=1)\leq P(Z_1=1)$. The last inequality then implies the desired result.
\end{proof}
\begin{proposition}
\label{FOSD_of_empirical_conditional_proportions}
Fix $n \ge 1$. For $i \in \{0,1\}$, let $\{(Y_{i,j},X_{i,j})\}_{j=1}^n$ be i.i.d.\ draws with 
\[
X_{i,j} \sim \mathrm{Bernoulli}(r), \qquad r \in (0,1),
\]
and let
\[
\theta_i(x) := \Pr(Y_{i,j}=1 \mid X_{i,j}=x).
\]
Assume that for every x, $\theta_0(x) \le \theta_1(x)$ and for $N_i(x)>0$, define $N_i(x) = \sum_{j=1}^n \mathbf 1\{X_{i,j}=x\}$,
{\small \begin{align*}
q_i(x)= 
\begin{cases}
    \frac{1}{N_i(x)}\sum_{j=1}^n \mathbf 1\{X_{i,j}=x,\,Y_{i,j}=1\} & \text{if } N_i(x) > 0\\
    \quad 0 & \text{if } N_i(x) = 0
\end{cases}.
\end{align*}
}
Then for every x, $q_0(x) \;\le_{\mathrm{FOSD}}\; q_1(x)$.

Moreover, if $\theta_0(x) < \theta_1(x)$, the dominance is strict.
\end{proposition}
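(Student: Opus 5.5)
The plan is to condition on the count $N_i(x)$ and reduce to a comparison of binomial proportions. Fix $x$. Since the marginal of $X_{i,j}$ is $\mathrm{Bernoulli}(r)$ for both $i=0$ and $i=1$, the count $N_i(x)$ has the same distribution in both samples, namely $\mathrm{Bin}(n,r)$ or $\mathrm{Bin}(n,1-r)$ according to $x$. By independence of the pairs, conditional on $N_i(x)=m$ with $m>0$, the labels $Y_{i,j}$ attached to the $m$ indices with $X_{i,j}=x$ are i.i.d. $\mathrm{Bernoulli}(\theta_i(x))$. Hence, conditionally,
\[
m\,q_i(x)\mid \{N_i(x)=m\}\sim \mathrm{Bin}(m,\theta_i(x)),
\]
and on $\{N_i(x)=0\}$ we have $q_i(x)=0$ deterministically for both $i$.

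First I would establish the standard fact that $\mathrm{Bin}(m,\theta)$ is FOSD-increasing in $\theta$, and strictly so when $\theta$ increases. The cleanest route is a monotone coupling. Take $U_1,\dots,U_m$ i.i.d. uniform and set $Y^{(i)}_k=\mathbf 1\{U_k\le\theta_i(x)\}$. Then $\sum_k Y^{(0)}_k\le\sum_k Y^{(1)}_k$ pointwise, which gives $F_{0}(t\mid m)\ge F_{1}(t\mid m)$ for all $t$. For strictness when $\theta_0(x)<\theta_1(x)$, look at the threshold just below $1$: $P(q_0(x)<1\mid m)=1-\theta_0(x)^m>1-\theta_1(x)^m=P(q_1(x)<1\mid m)$. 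Equivalently one can write $P(\mathrm{Bin}(m,\theta)\le k)$ as a regularized incomplete beta function, which is strictly decreasing in $\theta$ for $k<m$.

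Next I would mix over $m$. Because $N_0(x)\overset{d}{=}N_1(x)$, I can write
\[
P(q_i(x)\le t)=\sum_{m=0}^n P(N(x)=m)\,P(q_i(x)\le t\mid N(x)=m),
\]
with the same weights for $i=0,1$. The $m=0$ term is identical for both samples. Every $m\ge1$ term satisfies the conditional inequality, so summing yields $q_0(x)\le_{\mathrm{FOSD}}q_1(x)$. Alternatively, coupling $N_0(x)=N_1(x)$ and the uniforms gives $q_0(x)\le q_1(x)$ almost surely, which is immediate.

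For strictness, take $t$ just below $1$, e.g. $t\in(1-1/n,1)$. Then for each $m\ge1$, $\{q_i(x)\le t\}=\{q_i(x)<1\}$, and its conditional probability is $1-\theta_i(x)^m$. The $m\ge1$ terms carry positive weight because $r\in(0,1)$ gives $P(N(x)\ge1)>0$. So the mixture inequality is strict at that $t$.

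The main obstacle is modest: justifying the conditional i.i.d. Bernoulli claim. It needs the observation that conditioning on the set of indices with $X=x$ leaves those indices' $Y$'s independent with success probability $\theta_i(x)$. I would handle it by conditioning on the full vector $(X_{i,1},\dots,X_{i,n})$ rather than only on its count. The other point to treat with care is the $N(x)=0$ convention, which I would dispose of by noting that it contributes equally to both sides. Note also that strictness requires $\theta_1(x)>0$, which is implied by $\theta_0(x)<\theta_1(x)$.
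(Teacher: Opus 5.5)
Your proposal is correct and follows essentially the same route as the paper: condition on $N_i(x)=m$, compare $\mathrm{Bin}(m,\theta_0(x))$ with $\mathrm{Bin}(m,\theta_1(x))$ via the monotone uniform coupling, and mix over $m$ using the fact that the counts have the same law in both samples. Your explicit threshold $t\in(1-1/n,1)$, at which the conditional CDFs equal $1-\theta_i(x)^m$, makes concrete the strictness step that the paper only asserts, and you also correctly note that $N_i(0)\sim\mathrm{Bin}(n,1-r)$ rather than $\mathrm{Bin}(n,r)$.
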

\begin{proof}
Fix $x\in\{0,1\}$ and $m \in \{0, \dots, n\}$, and assume for now that $N_0(x)=N_1(x)=m$, the number of successes is then
\begin{equation*}
K_i(x) = \sum_{j=1}^n \mathbf 1\{X_{i,j}= x,\,Y_{i,j}=1\} \sim \mathrm{Binomial}(m,\theta_i(x)),
\quad q_i(x) = \begin{cases}
    K_i(x)/m & m > 0 \\
    0 & m = 0
\end{cases}
\end{equation*}
Next, let $U_0,\dots,U_m \sim \mathrm{Uniform}(0,1)$ i.i.d. and define:
\[
Z_{0,k}=\mathbf 1\{U_k \le \theta_0(x)\},\qquad 
Z_{1,k}=\mathbf 1\{U_k \le \theta_1(x)\}.
\]
Since $\theta_0(x) \le \theta_1(x)$, we have
\[
K_0(x) = \sum_{k=0}^m Z_{0,k} \le_{FOSD} \sum_{k=0}^m Z_{1,k}=K_1(x) 
\]
Equivalently, since $m$ was fixed,  we have that for every $t \in [0,1]$ and all $m$,
\[
\Pr(q_0(x) \le t \mid N_0=m) \;\ge\; \Pr(q_1(x) \le t \mid N_1=m).
\tag{1}
\]

Next, since $X_{i,j}\sim\mathrm{Bernoulli}(r)$ with the same $r$, we have
\[
N_0 \stackrel{d}{=} N_1 \sim \mathrm{Binomial}(n,r).
\]
Let $w_m=\Pr(N_i=m)$, independent of $i$. Therefore,
\[
\Pr(q_i(x)\le t) = \sum_{m=0}^n w_m\,\Pr(q_i(x) \le t|N_i=m).
\]
Applying (1) term-wise yields $q_0(x) \le_{\mathrm{FOSD}} q_1(x)$. Finally, if $\theta_0(x) < \theta_1(x)$, then $K_0(x) <_{FOSD} K_1(x)$, so for some $t$ the inequality is strict, and we arrive at the desired result.

\end{proof}
\begin{proposition}
\label{FOSD_NB_model}
Let $\mathcal{A}$ be the Baye's classifier. Let $D_0 \sim \prod_{i=1}^nP^\textup{null}_{XY}$ and $D_1 \sim \prod_{i=1}^nP^\textup{alt}_{XY}$, for some $P^\textup{alt}_{XY} \in H_1(\mathcal{P}^+_2)$. Define $N^i_0 = \#\{(x_i, y_i) \in D_i: \hspace{0.05in} x_i=0\}, N^i_1 = \#\{(x_i, y_i) \in D_i: \hspace{0.05in} x_i=1\}$. Assume that $P_X$ is fixed, and let $\tilde{X}, \tilde{X}^*\sim P_X$, then
$\mathcal{A}[D_0,\tilde{X}]<_{FOSD}\mathcal{A}[D_1,\tilde{X}^*]$.
\end{proposition}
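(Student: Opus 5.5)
The plan is to reduce the claim to Proposition~\ref{mixture_of_bernoulli_FOSD_order}. By the definition of the Bayes' classifier~\eqref{the_posterior_classifier}, $\mathcal{A}[D_i,\tilde{x}]$ is a Bernoulli draw with random success probability $\hat{P}_{D_i}(Y=1\mid \tilde{x})$. For binary $X$ this estimate is the empirical conditional proportion $q_i(\tilde{x})$ of Proposition~\ref{FOSD_of_empirical_conditional_proportions}. Here $i=0$ corresponds to $D_0$ and $i=1$ to $D_1$, with the convention $q_i(x)=0$ when $N^i_x=0$.

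Since $\tilde{X}$ and $\tilde{X}^*$ are drawn independently of the data, the relevant random success probabilities are the mixtures
\[
P_0 := q_0(\tilde{X}), \qquad P_1 := q_1(\tilde{X}^*).
\]
I first establish $P_0 \le_{FOSD} P_1$. Both $\tilde{X}$ and $\tilde{X}^*$ have law $P_X$, since $P_X$ is fixed under concept shift. Writing $r=P_X(1)$, for every $t$ we have
\[
\Pr(P_i\le t)=(1-r)\Pr(q_i(0)\le t)+r\Pr(q_i(1)\le t).
\]
Under $H_1(\mathcal{P}_2^+)$ we have $\theta_0(x)\le\theta_1(x)$ for every $x$. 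Proposition~\ref{FOSD_of_empirical_conditional_proportions}, which applies because both samples share the same $X$-marginal $\mathrm{Bernoulli}(r)$, therefore gives $q_0(x)\le_{FOSD} q_1(x)$ for each $x$. Mixing with the common weights $(1-r,r)$ preserves this ordering, so $P_0\le_{FOSD}P_1$.

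Applying Proposition~\ref{mixture_of_bernoulli_FOSD_order} then yields $\mathcal{A}[D_0,\tilde{X}]\le_{FOSD}\mathcal{A}[D_1,\tilde{X}^*]$. For Bernoulli variables this is the same as $\mathbb{E}[P_0]\le\mathbb{E}[P_1]$.

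The step I expect to be the main obstacle is strictness. For Bernoulli laws, strict FOSD is equivalent to $\mathbb{E}[P_0]<\mathbb{E}[P_1]$. I would prove it by a direct computation:
\[
\mathbb{E}[q_i(x)] = \theta_i(x)\,\Pr(N^i_x>0) = \theta_i(x)\bigl(1-(1-P_X(x))^n\bigr).
\]
This holds because, conditional on $N^i_x=m>0$, the count of successes is $\mathrm{Binomial}(m,\theta_i(x))$, and $q_i(x)=0$ when $N^i_x=0$. Hence
\[
\mathbb{E}[P_1]-\mathbb{E}[P_0]=\sum_x P_X(x)\bigl(1-(1-P_X(x))^n\bigr)\bigl(\theta_1(x)-\theta_0(x)\bigr).
\]
Every term is nonnegative, and the term for the $x$ with $\theta_1(x)>\theta_0(x)$ guaranteed by $H_1(\mathcal{P}_2^+)$ is strictly positive, provided $P_X(x)\in(0,1)$, which holds in the non-degenerate setting $r\in(0,1)$.

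Finally, I should check that the posterior estimate $\hat{P}(Y=1\mid\tilde{x})$ in~\eqref{the_posterior_classifier} is exactly this empirical conditional proportion. If it is instead regularized, the same argument still works as long as the estimate is a nondecreasing function of the success count with $N_x$ fixed.
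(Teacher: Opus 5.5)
Your proof is correct. For the weak ordering it uses the same ingredients as the paper: a Bernoulli prediction with random success probability, Proposition~\ref{FOSD_of_empirical_conditional_proportions} at each $x$, and averaging over the common $P_X$. The only structural difference there is the order of mixing. You average over $\tilde{X}$ at the level of the random success probability $q_i(\tilde{X})$ and invoke Proposition~\ref{mixture_of_bernoulli_FOSD_order} once. The paper instead first compares $\mathcal{A}[D_0,x]$ and $\mathcal{A}[D_1,x]$ for each fixed $x$, then averages the success probabilities $P(\mathcal{A}[D_i,x]=1)$ over $P_X$.

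The genuine difference is in how strictness is obtained. The paper gets it from the strict clause of Proposition~\ref{FOSD_of_empirical_conditional_proportions}: strict dominance of the empirical proportion at one $x$ gives a strict gap in means, and the final averaging step then needs $P_X(x)>0$ at that $x$. You instead compute $\mathbb{E}[q_i(x)]=\theta_i(x)\bigl(1-(1-P_X(x))^n\bigr)$ in closed form. FOSD between two Bernoulli variables is just a comparison of their success probabilities $\mathbb{E}[P_0]$ and $\mathbb{E}[P_1]$. So this computation alone proves both the weak and the strict statement, and your FOSD comparison of $P_0$ and $P_1$ is in fact superfluous. It also makes the needed non-degeneracy explicit: only $P_X(x)>0$ is required at the strict coordinate, not $P_X(x)\in(0,1)$.

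What the paper's distributional route buys is a statement about the whole law of the success probability, not just its mean. That kind of comparison becomes necessary for sums of $N>1$ predictions, as in Lemma~\ref{FOSD_of_sums_CS}, where the mixing law rather than its mean determines the ordering.
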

\begin{proof}
Fix $X=x$, and let $N_0(x) := |\{(x_i,y_i) \in D_0: \hspace{0.05in} x_i=x\}|$. Define $N_1$ similarly. 
First we will show that $\mathcal{A}[D_0,x]\leq_{FOSD} \mathcal{A}[D_1,x]$. To see this, note that for a fixed $D_i$, $\mathcal{A}[D_i=D,x]$ is a Bernoulli variable. Therefore, $\mathcal{A}[D_i,x]$ is a mixture of Bernoulli variables, and thus a Bernoulli variable. Denote its probability of success by ${\hat p_{D_i}(x)}$. By Proposition ~\ref{mixture_of_bernoulli_FOSD_order}, it is enough to show that for a fixed $X=x$, as random variables, $\hat p_{D_0}(x)\leq_{FOSD} \hat p_{D_1}(x)$. However, under the alternative, for each $x: \hspace{0.05in} P^\textup{null}(Y=1\mid X=x)\leq P^{alt}(Y=1\mid X=x)$. Therefore, by Proposition ~\ref{FOSD_of_empirical_conditional_proportions}, indeed, $\hat p_{D_0}(x)\leq_{FOSD} \hat p_{D_1}(x)$. Furthermore, for some value of $x$, the inequality is strict. Therefore, for every value of $x$, $P(\mathcal{A}[D_0,x]=1)\leq P(\mathcal{A}[D_1,x]=1)$ and for some value of $x$ the inequality is strict. By law of total expectation
\begin{align*}
p_0 = P(\mathcal{A}[D_0,\tilde{X}]=1) &= \sum_{x \in X}P(\tilde{X}=x)P(\mathcal{A}[D_0,x]=1)\\&< \sum_{x \in X}P(\tilde{X}^*=x)P(\mathcal{A}[D_1,x]=1)\\ &=P(\mathcal{A}[D_1,\tilde{X}^*]=1)= p_1,  
\end{align*}
where the second inequality holds due to the fact that $P_X$ is fixed.
\end{proof}
\paragraph{Remark:} It is easy to see that Proposition~\ref{FOSD_NB_model} holds for any $P^0_{XY}(x,y)=P^0_X(x)P^0(y|x)$ and $P^1_{XY}(x,y)=P^1_X(x)P^0(y|x)$ such that $P^0(Y=1|x)<P^1(Y=1|x)$ for all $x$ and  $P^0_X=P^1_X=P_X$. 

We are now ready to prove the that population e-statistic $e[D, \vec{\tilde{X}}]$ has non-trivial power:

\paragraph{Under concept-shift regime, $e[D, \vec{\tilde{X}}]$ has non-trivial power against any $Q\in H_1(\mathcal{P}_2^+)$:}
\begin{proof}
Let $\tilde{X}, \tilde{X}^* \sim P_X$. By the above proposition,  $\mathcal{A}[D_0, \tilde{X}]<_{FOSD}  \mathcal{A}[D_1, \tilde{X}^*]$. The result directly follows from the fact that $\psi(\cdot)$ is strictly increasing, thus $\mathbb{E}_{H_0(\mathcal{P}_2)}[\psi((\mathcal{A}[D_0, \tilde{X}])]<\mathbb{E}_{H_1(\mathcal{P}_2^+)}[\psi((\mathcal{A}[D_1, \tilde{X}^*])]$.
\end{proof}
Our next goal is to prove that the finite-sample e-statistic $\breve{e}[D, \vec{\tilde{X}}]$ has non-trivial power. To facilitate this result, we first generalize Proposition ~\ref{FOSD_NB_model} as follows:
\begin{lemma}
\label{FOSD_of_sums_CS}
 Let $D_0\sim \prod_{i=1}^n P^\textup{null}_{XY},  D_1\sim \prod_{i=1}^n P^\textup{alt}_{XY}$,  and $X_1,\ldots, X_N$ and $\tilde{X}^*_1,\ldots, \tilde{X}^*_N$ be drawn independently from $P_X$. Let $\mathcal{A}[D_i,\tilde{X}_1], \ldots \mathcal{A}[D_i,\tilde{X}_N], \hspace{0.04in} i \in \{0,1\}$, be as defined above. Then
 $\sum_{j=1}^N\mathcal{A}[D_0,\tilde{X}_j]<\sum_{j=1}^N\mathcal{A}[D_1,\tilde{X}^*_j]$ in the FOSD order.
\end{lemma}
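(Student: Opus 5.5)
The plan is to condition on the two training sets and exploit the structure of the Bayes classifier. For a fixed dataset $D$, the predictions $\mathcal{A}[D,\tilde{X}_1],\ldots,\mathcal{A}[D,\tilde{X}_N]$ are obtained by drawing the $\tilde{X}_j$ i.i.d.\ from $P_X$ and, independently, $\mathcal{A}[D,\tilde{X}_j]\sim Ber(\hat p_D(\tilde{X}_j))$. Hence, conditionally on $D$, the predictions are i.i.d.\ Bernoulli with success probability
\[
\pi(D) = \sum_{x\in\{0,1\}} P_X(x)\,\hat p_D(x).
\]
This uses the fact that $P_X$ is the same under null and alternative, which is the concept-shift assumption. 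Consequently $S_i := \sum_{j=1}^N \mathcal{A}[D_i,\tilde{X}^{(i)}_j]$ is, conditionally on $D_i$, distributed as $Bin(N,\pi(D_i))$. Unconditionally it is a mixture of binomials with random success probability $\pi(D_i)$.

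The first step is to show $\pi(D_0)\le_{FOSD}\pi(D_1)$. By Proposition~\ref{FOSD_of_empirical_conditional_proportions}, for each $x$ we have $\hat p_{D_0}(x)\le_{FOSD}\hat p_{D_1}(x)$, strict for the $x$ where $\theta_{Y\mid X=x}$ increases. Since $\pi$ is a nonnegative combination of $\hat p_D(0)$ and $\hat p_D(1)$, I need a joint monotone coupling.

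The natural construction is to couple $D_0$ and $D_1$ through common uniforms. Draw the same covariates $X_{\cdot,j}$ for both datasets, which is legitimate because $P_X$ is fixed. Then set $Y_{0,j}=\mathbf 1\{U_j\le\theta_0(X_j)\}$ and $Y_{1,j}=\mathbf 1\{U_j\le\theta_1(X_j)\}$. Under this coupling $q_0(x)\le q_1(x)$ pointwise for both $x$ simultaneously, so $\pi(D_0)\le\pi(D_1)$ almost surely. This is the same coupling used in the proof of Proposition~\ref{FOSD_of_empirical_conditional_proportions}, now applied to both cells at once. Strictness of the inequality with positive probability follows from Proposition~\ref{FOSD_of_empirical_conditional_proportions}'s strict part, or more directly from Proposition~\ref{FOSD_NB_model}, which gives $\mathbb{E}\pi(D_0)<\mathbb{E}\pi(D_1)$.

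The second step transfers the ordering to the sums. For fixed $t$, the map $p\mapsto P(Bin(N,p)\le t)$ is non-increasing in $p$, and strictly decreasing for $0\le t<N$. Conditioning on the coupled pair $(D_0,D_1)$ then gives
\[
P(S_0\le t)=\mathbb{E}\big[F_{Bin(N,\pi(D_0))}(t)\big]\ge \mathbb{E}\big[F_{Bin(N,\pi(D_1))}(t)\big]=P(S_1\le t).
\]
For strictness, take $t=0$: the strict inequality holds because $(1-p)^N$ is strictly decreasing and $\pi(D_0)<\pi(D_1)$ with positive probability. Equivalently, one can use the mean gap $N\mathbb{E}\pi(D_0)<N\mathbb{E}\pi(D_1)$, which forces the CDFs to differ somewhere.

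The main obstacle is the strictness argument, specifically showing $P(\pi(D_0)<\pi(D_1))>0$ under the coupling. The plan is to exhibit the event that the shifted cell $x^\*$ has $N(x^\*)\ge1$ and some $U_j$ lands in $(\theta_0(x^\*),\theta_1(x^\*)]$ there. This event has positive probability when $P_X(x^\*)>0$.

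A subtler issue is the convention $q=0$ when $N(x)=0$. This is harmless because both datasets share the same covariates under the coupling, so the empty cells coincide and the pointwise inequality is unaffected.
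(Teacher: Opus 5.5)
Your proposal is correct and takes essentially the same route as the paper. Conditionally on $D_i$ the sum is $\mathrm{Bin}(N,\pi(D_i))$, and $\pi(D_0)\le_{FOSD}\pi(D_1)$ follows by fixing a common covariate vector (legitimate since $P_X$ is unchanged) and comparing the label indicators cell by cell, which your common-uniform coupling makes explicit. Your binomial-CDF transfer and the $t=0$ strictness argument are in fact more careful than the paper, which cites the Bernoulli-mixture proposition (stated only for Bernoulli variables) and ends with a weak inequality, leaving strictness implicit.
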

\begin{proof}
First note that given $D_0$, $\mathcal{A}[D_0,\tilde{X}]$ is a Bernoulli variable, with probability of success $\hat{p}_0 = P(X=0)\hat{P}(Y=1|X=0)+P(X=1)\hat{P}(Y=1|X=1)$. Furthermore, given $D_0$, $\mathcal{A}[D_0,\tilde{X}_i],\mathcal{A}[D_0,\tilde{X}_j]$ are independent, for any $i \neq j$. Therefore, given $D_0, \hspace{0.06in}\sum_{j=1}^N\mathcal{A}[D_0,\tilde{X}_j]$ is a binomial variable with probability of success $\hat{p}_0$, and N trials. Similarly, $\sum_{j=1}^N\mathcal{A}[D_1,\tilde{X}^*_j]$ is a binomial variable with probability of success $\hat{p}_1$, and N trials. Hence, $\sum_{j=1}^N\mathcal{A}[D_0,\tilde{X}_j], \hspace{0.04in} \sum_{j=1}^N\mathcal{A}[D_1,\tilde{X}^*_j]$ are mixtures of Binomial variables. By Proposition ~\ref{mixture_of_bernoulli_FOSD_order}, it is enough to show that, as random variables, 
$\hat{p}_0<_{FOSD}\hat{p}_1$. Next, observe that
{\footnotesize
\begin{align*}
\hat{p}_0 = P(\tilde{X}=0)\frac{\sum_j \mathds{1}\{X^0_j=0, Y^\textup{null}_j=1\}}{N^0_0} +P(\tilde{X} = 1)\frac{\sum_j \mathds{1}\{X^0_j=1, Y^\textup{null}_j=1\}}{N^0_1}.    
\end{align*}
}
Similarly,
\begin{align*}
    \hat{p}_1 = P(\tilde{X}^*=0)\frac{\sum_j \mathds{1}\{X^1_j=0, Y^\textup{alt}_j=1\}}{N^1_0} +&P(\tilde{X}^*=1)\frac{\sum_j \mathds{1}\{X^1_j=1, Y^\textup{alt}_j=1\}}{N^1_1},
\end{align*}
where $N^i_0 = \#\{j: X^i_j=0\},N^i_1 = \#\{j: X^i_j=1\}, \hspace{0.05in} i \in \{0,1\}$. Fixing $D^0_x=D^1_x=D_x$, $N^i_0, N^i_1$ are all constants, and all indicators in the numerators are independent. Furthermore, given $D_x$, the indicators are coupled and for each j such that $x_j=0$: $\mathds{1}\{x_j=0, Y^\textup{null}_j=1\}\leq_{FOSD} \mathds{1}\{x_j=0, Y^\textup{alt}_j=1\}$. Similarly, for each j such that $x_j=1$: $\mathds{1}\{x_j=1, Y^\textup{null}_j=1\}\leq_{FOSD} \mathds{1}\{x_j=1, Y^\textup{alt}_j=1\}$. Moreover, by assumption, for either $x=0$ or $x=1$, the dominance is strict. Thus, letting $\psi(\cdot)$ be a non-decreasing function:
\[
    \mathbb{E}[\psi(\hat{p}_0)|D_x]\leq  \mathbb{E}[\psi(\hat{p}_1)|D_x].
\]
Lastly, since $D^0_x\stackrel{d}{=}D^1_x$, we have that

 \begin{align*}
 \mathbb{E}[\psi(\hat{p}_0)]=\mathbb{E}_{D_x}\mathbb{E}[\psi(\hat{p}_0)|D_x]]\leq  \mathbb{E}_{D_x}\mathbb{E}[\psi(\hat{p}_1)|D_x] =\mathbb{E}[\psi(\hat{p}_1)].
 \end{align*}

Since $\psi(\cdot)$ is arbitrary non-decreasing function, we arrive at the desired result. 
\end{proof}
\paragraph{Remark:} It is easy to see that Proposition~\ref{FOSD_of_sums_CS} holds for any $P^0_{XY}(x,y)=P^0_X(x)P^0(y|x)$ and $P^1_{XY}(x,y)=P^1_X(x)P^0(y|x)$ such that $P^0(Y=1|x)<P^1(Y=1|x)$ for all $x$ and  $P^0_X=P^1_X=P_X$. 

We are now ready to prove that under the concept-shift regime, $\breve{e}[D, \vec{\tilde{X}}]$ has non-trivial power.
\paragraph{Under concept-shift $\breve{e}[D, \vec{\tilde{X}}]$ has non-trivial power against any $Q\in H_1(\mathcal{P}_2^+)$:}
\begin{proof}
The proof follows the same arguments as under the label-shift case above.
\end{proof}

Finally, it remains to show that the conditional imputed e-statistic has also non-trivial power against the null, i.e., that $\mathbb{E}_{H_1(\mathcal{P}_2^+))}[e[D_t, \vec{\tilde{X}}_t]\mid \mathcal{F}_{t-1]}]>1$:
\paragraph{Under concept-shift regime, $\breve{e}[D_t,\vec{\tilde{X}}_t]$ has non-trivial power against any $Q\in H_1(\mathcal{P}_2^+)$:}
\begin{proof}
This follows directly from the results above as given the historical data $K_t$ is fixed, and we already showed that $\mathbb{E}_{H_1(\mathcal{P}_2^+)}[e(D, \vec{\tilde{X}})]>1$ for any fixed positive and strictly increasing function $K$. Thus, we are back to the unconditional case, and the proof is complete.
\end{proof}

\section{Extension to Composite Null under Label-Shift}
\label{appendix: composite null label-shift}
In this section we wish to show that in the binary case, $\mathcal{P}_1$ can be extended to composite null testing:
\[
\begin{aligned}
(\mathcal{P}_1^\textup{composite})\quad & H_0:\ \theta_Y \leq \theta_Y^{\textsuperscript{null}}
\;\;\text{versus}\;\;
H_1:\ \theta_Y > \theta_Y^{\textsuperscript{null}} .
\end{aligned}
\]
Recall that when hypothesizing about $P_Y$, for identifiability, we assume a label-shift setting, i.e, $P(X|y)$ is fixed for all $y$. Assume $K$ is of the form $K(\vec{y})= \sum_{i=1}^N \psi(\tilde{y}_i)$, where $\psi(\cdot)$ is a positive increasing function (e.g. $\psi(y_i) = \exp(\gamma \tilde{y}_i)$).

We first update the definition of the population imputed e-statistic to match the composite null setting as follows:
\begin{equation*}
   e[D,\tilde{\vec{X}}] = \frac{K(\tilde{\vec{Y}})}{\mathbb{E}_{P^\textup{null}(D, \vec{\tilde{X}})}[K(\tilde{\vec{Y}})]},  
\end{equation*}
where $P^\textup{null}_{XY} = P_{Y}(;\theta^\textup{null}_Y)P(X|y)$. 
Let $(D, \vec{\tilde{X}})\sim P=P_Y({;\theta_Y})P(X|y) \in H_0(\mathcal{P}^\textup{composite}_1)$. We begin by showing that the population imputed e-statistic, $e[D,\vec{\tilde{X}}]$, is valid when testing a composite null.

\paragraph{Under label-shift regime, $e[D,\vec{\tilde{X}}]$ is valid when testing a composite null}
\begin{proof}
let $(D^*, \vec{\tilde{X}}^*)\sim P^\textup{null}_{XY}, \hspace{0.05in} (D, \vec{\tilde{X}})\sim P \in H_0(\mathcal{P}^\textup{composite}_1)$, and observe that by ~\autoref{prop_sum_of_FOSD_A[D,X]_Label_shift},  
\[
     \sum_{i=1}^N\mathcal{A}[D, \tilde{X}_i]<_{FOSD}  \sum_{i=1}^N\mathcal{A}[D^*, \tilde{X}^*_i].
\]
Since $K$ is strictly increasing, we then have
\[
\mathbb{E}_{P(D, \vec{\tilde{X}})}[K(\tilde{\vec{Y}})]\leq\mathbb{E}_{P^\textup{null}(D^*, \vec{\tilde{X}})}[K(\tilde{\vec{Y}})].
\]
Thus, $\mathbb{E}_{P}\left[\frac{K(\vec{\tilde{Y}})}{\mathbb{E}_{P^\textup{null}(D, \vec{\tilde{X}})}[K(\tilde{\vec{Y}})]}\right]\leq1$, as needed.
\end{proof}
Next we turn to establish the validity of the finite-sample imputed e-statistic in the composite null setting:
\paragraph{Under label-shift regime, $\breve{e}[D,\vec{\tilde{X}}]$ is valid when testing a composite null}
\begin{proof}
Let $(D^0,\vec{\tilde{X}}^0)=(D,\vec{\tilde{X}})$ be the observed data, and let $(D^1, \vec{\tilde{X}}^1), \ldots, (D^M, \vec{\tilde{X}}^M)$ be $M$ i.i.d datasets drawn from $P^\textup{null}$.

Define 
\begin{align*}
    \breve{e}^j[D^j, \vec{\tilde{X}}^j]&= \frac{(M+1)\sum_{k=1}^N\psi(\mathcal{A}[D^j, \tilde{X}^j_k])}{\left(\sum_{j=1}^N\psi(\mathcal{A}[D^0, \tilde{X}^0_j])+ \sum_{i=1}^M\sum_{j=1}^N\psi(\mathcal{A}[D^i,\tilde{X}^i_j])\right)}, \\ 
    j&=0,\ldots, M.
\end{align*}
For $j\neq 0$, by the law of total expectation 
\begin{equation*}
 \mathbb{E}[ \breve{e}^j[D^j, \vec{\tilde{X}}^j]] = \\ \mathbb{E}_{(D^1,\vec{\tilde{X}}^1),\ldots, (D^M, \vec{\tilde{X}}^M)}[\mathbb{E}_{(D^0,\vec{\tilde{X}}^0)}\left[ \breve{e}^j[D^j, \vec{\tilde{X}}^j]|(D^1,\vec{\tilde{X}}^1)\ldots, (D^M, \vec{\tilde{X}}^M)\right]   
\end{equation*}
 Next, note that for fixed $(D^1,\vec{\tilde{X}}^1),\ldots, (D^M, \vec{\tilde{X}}^M)$, the inner integrand of the RHS can be written as follows:
 \begin{equation*}
\begin{aligned}
&\mathbb{E}_{(D^0,\vec{\tilde{X}}^0)}\left[\frac{C_0}{\sum_i\psi(\mathcal{A}[D^0, \tilde{X}^0_i]) + C_1}\right] = \\   
&\mathbb{E}_{(D^0,\vec{\tilde{X}}^0)}\left[\frac{C_0}{\psi(1)\sum_i\mathcal{A}[D^0, \tilde{X}^0_i] +\psi(0)(N-\sum_i\mathcal{A}[D^0, \tilde{X}^0_i])+ C_1}\right],
\end{aligned}
\end{equation*}
where $C_0, C_1$ are positive constants. Since $\psi(\cdot)$ is strictly increasing, so is 
\[
g(s) = \psi(1)s +\psi(0)(N-s), \hspace{0.05in} s=0,\ldots, N.
\]
Hence
\[
    g^*(s)=\frac{C_0}{g(s)+C_1}
\]
is strictly decreasing in s. Next, let $(D^*, \vec{\tilde{X}}^*)\sim P^\textup{null}$ independent of $(D^1, \vec{\tilde{X}}^1), \ldots, (D^M, \vec{\tilde{X}}^M)$.  

By Proposition ~\ref{prop_sum_of_FOSD_A[D,X]_Label_shift}, $\sum_{j=1}^N \mathcal{A}[D^0, \tilde{X}^0_j]<_{FOSD}\sum_{j=1}^N \mathcal{A}[D^*, \tilde{X}^*_j]$. Thus,
 \begin{equation*}
  \mathbb{E}_{(D^*,\vec{\tilde{X}}^*)}\left[\frac{C_0}{g(\sum_{j=1}^N\mathcal{A}[D^*, \tilde{X}^*_j]) +  C_1}\right]  <  \mathbb{E}_{(D^0,\vec{X}^0)}\left[\frac{C_0}{g(\sum_{j=1}^N\mathcal{A}[D^0, X^0_j])+ C_1}\right],  
 \end{equation*}
Therefore
\begin{equation*}
\mathbb{E}[\breve{e}^j[D^j, \vec{\tilde{X}}^j]]   >
\mathbb{E}_{(D^1,\vec{\tilde{X}}^1),\ldots, (D^M, \vec{\tilde{X}}^M)}\mathbb{E}_{(D^*,\vec{\tilde{X}}^*)}\left[\breve{e}^j[D^j, \vec{\tilde{X}}^j] \mid (D^1,\vec{\tilde{X}}^1)\ldots, (D^M, \vec{\tilde{X}}^M)]\right],
\end{equation*}
 where the expectation on the LHS is with respect to the observed data $(D^0, \vec{\tilde{X}}^0)$ and the $M$ datasets $(D^1,\vec{\tilde{X}}^1)\ldots, (D^M, \vec{\tilde{X}}^M)$.
 
Importantly, $(D^1,\vec{\tilde{X}}^1)\ldots, (D^M, \vec{\tilde{X}}^M)$ and $(D^*,\vec{\tilde{X}}^*)$ are all i.i.d. copies drawn from $P^\textup{null}$, and thus are exchangeable. Therefore, by Theorem ~\ref{validity_of_unconditional_finite_sample_e_statistic} 
\begin{equation*}
  \mathbb{E}_{(D^1,\vec{\tilde{X}}^1),\ldots, (D^M, \vec{\tilde{X}}^M)}\mathbb{E}_{(D^*,\vec{\tilde{X}}^*)}\left[\breve{e}^j[D^j, \vec{\tilde{X}}^j]\mid(D^1,\vec{\tilde{X}}^1)\ldots, (D^M, \vec{\tilde{X}}^M)]\right] = 1.
\end{equation*}
Thus,
\[
 \mathbb{E}[\breve{e}^j[D^j, \vec{\tilde{X}}^j]]>1, \hspace{0.07in} 1\leq j\leq M
\]
Finally, since $\sum_{j=0}^M \mathbb{E}[\breve{e}^j[D^j, \vec{\tilde{X}}^j]]=M+1$, and $\{\mathbb{E}[\breve{e}^j[D^j, \vec{\tilde{X}}^j]\}_{j=1}^M$ are all equal,
\begin{align*}
    \mathbb{E}[\breve{e}^0[D^0, \vec{\tilde{X}}^0]] = (M+1)-M\mathbb{E}[\breve{e}^j[D^1, \vec{\tilde{X}}^1]]< (M+1)-M\cdot1 &=1, 
\end{align*}
as required.
\end{proof}
Finally, it remains to show that the conditional e-statistic $\mathbb{E}_{H(\mathcal{P}^\textup{composite}_1)}[\breve{e}[D_t, \vec{\tilde{X}}_t]]$ is valid when testing a composite null, that is $\mathbb{E}[\breve{e}[D_t, \vec{\tilde{X}}_t]\mid \mathcal{F}_{t-1}]\leq1$. However, this is immediate since given the historical data $D_{-t}$, $K_t$ is fixed, and the proof is similar to the (unconditional) finite-sample imputed e-statistic. 

Next, as before, let $S_t = \prod_t \breve{e}[D_t, \vec{\tilde{X}}_t]$. It is readily seen that, assuming a label shift setting, under $H_0(\mathcal{P}^\textup{composite}_1)$,  $\mathbb{E}_{H_0}[S_1]=1$. Thus, under the null, $S_t$ is a test supermartingale: 
\begin{corollary}
Assuming a label shift setting (concept shift), under $H_0(\mathcal{P}_1^\textup{composite}), \mathbb{E}[S_t|S_1,\ldots, S_{t-1}] \leq S_{t-1}$.    
\end{corollary}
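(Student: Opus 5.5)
The plan is to show that $(S_t)$ is a nonnegative supermartingale with respect to the natural filtration $\mathcal{F}_{t-1}=\sigma\bigl((D_1,\vec{\tilde{X}}_1),\ldots,(D_{t-1},\vec{\tilde{X}}_{t-1})\bigr)$, enlarged by the auxiliary null draws used in previous steps. The supermartingale property with respect to $\sigma(S_1,\ldots,S_{t-1})$ then follows by the tower property, since $\sigma(S_1,\ldots,S_{t-1})\subseteq\mathcal{F}_{t-1}$.

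First, write $S_t=S_{t-1}\,\breve{e}[D_t,\vec{\tilde{X}}_t]$. Here $S_{t-1}$ is $\mathcal{F}_{t-1}$-measurable and nonnegative, because $K_t>0$ and every factor is positive. Pulling it out of the conditional expectation gives
\[
\mathbb{E}[S_t\mid\mathcal{F}_{t-1}]=S_{t-1}\,\mathbb{E}\bigl[\breve{e}[D_t,\vec{\tilde{X}}_t]\mid\mathcal{F}_{t-1}\bigr].
\]
So the whole claim reduces to proving $\mathbb{E}[\breve{e}[D_t,\vec{\tilde{X}}_t]\mid\mathcal{F}_{t-1}]\le 1$ under any $P\in H_0(\mathcal{P}_1^{\textup{composite}})$.

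Second, condition on $\mathcal{F}_{t-1}$. By construction, $K_t=\sum_i\psi_t(\cdot)$ is then a fixed function, for example with $\gamma_t$ fixed, and $\psi_t$ is positive and strictly increasing. The fresh batch $(D_t,\vec{\tilde{X}}_t)$ is independent of the past and distributed as $P=P_Y(\cdot;\theta_Y)P(X\mid y)$ with $\theta_Y\le\theta_Y^{\textup{null}}$. The $M$ null datasets drawn at step $t$ are i.i.d.\ from $P^{\textup{null}}$ and independent of everything else. The threshold classifier also depends on the past only through $\hat\tau_x$ and $\gamma$, which are likewise fixed given $\mathcal{F}_{t-1}$.

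We are therefore exactly in the setting of the preceding argument for the unconditional finite-sample statistic $\breve{e}[D,\vec{\tilde{X}}]$ under the composite null. That argument runs as follows. Proposition~\ref{prop_sum_of_FOSD_A[D,X]_Label_shift} gives the FOSD comparison $\sum_j\mathcal{A}[D_t,\tilde{X}_{t,j}]\le_{FOSD}\sum_j\mathcal{A}[D^*,\tilde{X}^*_j]$, where $(D^*,\vec{\tilde{X}}^*)\sim P^{\textup{null}}$. The map $s\mapsto C_0/(g(s)+C_1)$ is decreasing. Exchangeability of the $M$ null copies with $(D^*,\vec{\tilde{X}}^*)$ then yields $\mathbb{E}[\breve{e}^j]\ge1$ for $j\ge1$. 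Finally, the identity $\sum_{j=0}^M\breve{e}^j=M+1$ gives $\mathbb{E}[\breve{e}^0\mid\mathcal{F}_{t-1}]\le1$. The boundary case $\theta_Y=\theta_Y^{\textup{null}}$ gives equality by Proposition~\ref{validity_of_conditional_finite_sample_e_statistic_formal}, so we should state the conclusion with a non-strict inequality throughout.

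The main obstacle is making the conditioning step rigorous. We must check that nothing in $\breve{e}[D_t,\vec{\tilde{X}}_t]$ other than $K_t$ and the classifier's fixed parameters depends on the past, so that the conditional law of the step-$t$ data given $\mathcal{F}_{t-1}$ equals its unconditional law. This is where independence of batches and of the auxiliary null draws across steps is used. A second point to watch is that Proposition~\ref{prop_sum_of_FOSD_A[D,X]_Label_shift} is stated for $p_0<p_1$ strictly. For $\theta_Y<\theta_Y^{\textup{null}}$ it applies with the roles reversed, and for $\theta_Y=\theta_Y^{\textup{null}}$ we handle the case separately by equality in distribution. Taking expectations of $S_t\le$-type bounds with respect to $\sigma(S_1,\ldots,S_{t-1})$ then completes the proof. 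Together with $\mathbb{E}[S_1]\le1$, this shows $S_t$ is a test supermartingale.
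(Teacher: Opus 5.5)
Your proposal is correct and follows the paper's route. You condition on the past so that $K_t$ and the classifier's thresholds are fixed, then rerun the unconditional composite-null argument (FOSD comparison of the summed predictions, the decreasing map $s\mapsto C_0/(g(s)+C_1)$, exchangeability of the null copies, and $\sum_{j}\breve{e}^j=M+1$) to get $\mathbb{E}[\breve{e}[D_t,\vec{\tilde{X}}_t]\mid\mathcal{F}_{t-1}]\le 1$, and pull out $S_{t-1}$; your enlargement of $\mathcal{F}_{t-1}$ by past null draws and the tower-property step down to $\sigma(S_1,\ldots,S_{t-1})$ are details the paper leaves implicit, and you handle them correctly.
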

By Ville's inequality, we get type-I error control. Crucially, as before, validity holds regardless of the choice of $\mathcal{A}$, $N$, $n$, and the accuracy of predictions. 

\section{Extension to Composite Null under Concept-Shift}
\label{appendix: composite null concept-shift}
In this section we wish to show that in the binary case, $\mathcal{P}_2$ can be extended to composite null testing:
\[
\begin{aligned}
(\mathcal{P}^\textup{composite}_2)\quad  H_0:\ & \theta_{Y \mid X} \leq \theta_{Y \mid X}^{\textsuperscript{null}} \ \ \textup{for all $X=x$}
\\
H_1:\ &\theta_{Y \mid X} > \theta_{Y \mid X}^{\textsuperscript{null}} \ \ \textup{for at least one $X=x$}.
\end{aligned}
\]
Recall that when hypothesizing about $P_{Y|X}$, for identifiability, we assume a concept-shift setting, i.e., $P_X$ is fixed. Assume $K$ is of the form $K(\vec{y})= \sum_{i=1}^N \psi(\tilde{y}_i)$, where $\psi(\cdot)$ is a positive increasing function.

We begin by updating the definition of the population imputed e-statistic to match the composite null setting as follows:
\begin{equation*}
   e[D,\tilde{\vec{X}}] = \frac{K(\tilde{\vec{Y}})}{\mathbb{E}_{P^\textup{null}(D, \vec{\tilde{X}})}[K(\tilde{\vec{Y}})]},  
\end{equation*}
where $P^\textup{null}_{XY} = P_XP_{Y|X}(;\theta^\textup{null}_{Y|X})$. 
Let $(D, \vec{\tilde{X}})\sim P=P_XP_Y({;\theta_{Y|X}}) \in H_0(\mathcal{P}^\textup{composite}_2)$. We begin by showing that the population imputed e-statistic, $e[D,\vec{\tilde{X}}]$, is valid when testing a composite null:

\paragraph{Under concept-shift regime, $e[D,\vec{\tilde{X}}]$ is valid when testing a composite null}
\begin{proof}
let $(D^*, \vec{\tilde{X}}^*)\sim P^\textup{null}_{XY}, (D_0, \vec{\tilde{X}}^0)\sim P \in H_0(\mathcal{P}^\textup{composite}_2)$, and observe that by ~\autoref{FOSD_NB_model},  
\[
     \sum_{i=1}^N\mathcal{A}[D_0, \tilde{X}^0_i]<_{FOSD}  \sum_{i=1}^N\mathcal{A}[D^*, \tilde{X}^*_i].
\]
Since $K$ is strictly increasing, we then have
\[
\mathbb{E}_{P(D_0, \vec{\tilde{X}}^0)}[K(\tilde{\vec{Y}})]\leq\mathbb{E}_{P^\textup{null}(D^*, \vec{\tilde{X}}^*)}[K(\tilde{\vec{Y}})].
\]
Thus, $\mathbb{E}_{P}\left[\frac{K(\vec{\tilde{Y}})}{\mathbb{E}_{P^\textup{null}(D, \vec{\tilde{X}})}[K(\tilde{\vec{Y}})]}\right]\leq1$, as needed.
\end{proof}
Next we turn to establish the validity of the finite-sample imputed e-statistic in the composite null setting:
\paragraph{Under concept-shift regime, $\breve{e}[D,\vec{\tilde{X}}]$ is valid when testing a composite null}
\begin{proof}
Let $(D^0,\vec{\tilde{X}}^0)=(D,\vec{\tilde{X}})$ be the observed data, and let $(D^1, \vec{\tilde{X}}^1), \ldots, (D^M, \vec{\tilde{X}}^M)$ be $M$ i.i.d datasets drawn from $P^\textup{null}$.

Define 
\begin{align*}
    \breve{e}^j[D^j, \vec{\tilde{X}}^j]&= \frac{(M+1)\sum_{k=1}^N\psi(\mathcal{A}[D^j, \tilde{X}^j_k])}{\left(\sum_{j=1}^N\psi(\mathcal{A}[D^0, \tilde{X}^0_j])+ \sum_{i=1}^M\sum_{j=1}^N\psi(\mathcal{A}[D^i,\tilde{X}^i_j])\right)}, \\ 
    j&=0,\ldots, M.
\end{align*}
For $j\neq 0$, by the law of total expectation 
\begin{equation*}
 \mathbb{E}[ \breve{e}^j[D^j, \vec{\tilde{X}}^j]] = \\ \mathbb{E}_{(D^1,\vec{\tilde{X}}^1),\ldots, (D^M, \vec{\tilde{X}}^M)}[\mathbb{E}_{(D^0,\vec{\tilde{X}}^0)}\left[ \breve{e}^j[D^j, \vec{\tilde{X}}^j]|(D^1,\vec{\tilde{X}}^1)\ldots, (D^M, \vec{\tilde{X}}^M)\right]   
\end{equation*}
 Next, note that for fixed $(D^1,\vec{\tilde{X}}^1),\ldots, (D^M, \vec{\tilde{X}}^M)$, the inner integrand of the RHS can be written as follows:
 \begin{equation*}
\begin{aligned}
&\mathbb{E}_{(D^0,\vec{\tilde{X}}^0)}\left[\frac{C_0}{\sum_i\psi(\mathcal{A}[D^0, \tilde{X}^0_i]) + C_1}\right] = \\   
&\mathbb{E}_{(D^0,\vec{\tilde{X}}^0)}\left[\frac{C_0}{\psi(1)\sum_i\mathcal{A}[D^0, \tilde{X}^0_i] +\psi(0)(N-\sum_i\mathcal{A}[D^0, \tilde{X}^0_i])+ C_1}\right],
\end{aligned}
\end{equation*}
where $C_0, C_1$ are positive constants. Since $\psi(\cdot)$ is strictly increasing, so is 
\[
g(s) = \psi(1)s +\psi(0)(N-s), \hspace{0.05in} s=0,\ldots, N.
\]
Hence
\[
    g^*(s)=\frac{C_0}{g(s)+C_1}
\]
is strictly decreasing in s. Next, let $(D^*, \vec{\tilde{X}}^*)\sim P^\textup{null}$ independent of $(D^1, \vec{\tilde{X}}^1), \ldots, (D^M, \vec{\tilde{X}}^M)$.  

By Proposition~\ref{FOSD_NB_model}, $\sum_{j=1}^N \mathcal{A}[D^0, \tilde{X}^0_j]<_{FOSD}\sum_{j=1}^N \mathcal{A}[D^*, \tilde{X}^*_j]$. Thus,
 \begin{equation*}
  \mathbb{E}_{(D^*,\vec{\tilde{X}}^*)}\left[\frac{C_0}{g(\sum_{j=1}^N\mathcal{A}[D^*, \tilde{X}^*_j]) +  C_1}\right]  <  \mathbb{E}_{(D^0,\vec{X}^0)}\left[\frac{C_0}{g(\sum_{j=1}^N\mathcal{A}[D^0, X^0_j])+ C_1}\right],  
 \end{equation*}
Therefore
\begin{equation*}
\mathbb{E}[\breve{e}^j[D^j, \vec{\tilde{X}}^j]]   >
\mathbb{E}_{(D^1,\vec{\tilde{X}}^1),\ldots, (D^M, \vec{\tilde{X}}^M)}\mathbb{E}_{(D^*,\vec{\tilde{X}}^*)}\left[\breve{e}^j[D^j, \vec{\tilde{X}}^j] \mid (D^1,\vec{\tilde{X}}^1)\ldots, (D^M, \vec{\tilde{X}}^M)]\right],
\end{equation*}
 where the expectation on the LHS is with respect to the observed data $(D^0, \vec{\tilde{X}}^0)$ and the $M$ datasets $(D^1,\vec{\tilde{X}}^1)\ldots, (D^M, \vec{\tilde{X}}^M)$.
 
Importantly, $(D^1,\vec{\tilde{X}}^1)\ldots, (D^M, \vec{\tilde{X}}^M)$ and $(D^*,\vec{\tilde{X}}^*)$ are all i.i.d. copies drawn from $P^\textup{null}$, and thus are exchangeable. Therefore, by Theorem ~\ref{validity_of_unconditional_finite_sample_e_statistic} 
\begin{equation*}
  \mathbb{E}_{(D^1,\vec{\tilde{X}}^1),\ldots, (D^M, \vec{\tilde{X}}^M)}\mathbb{E}_{(D^*,\vec{X}^*)}\left[\breve{e}^j[D^j, \vec{\tilde{X}}^j]\mid(D^1,\vec{\tilde{X}}^1)\ldots, (D^M, \vec{\tilde{X}}^M)]\right] = 1.      
\end{equation*}
Thus,
\[
 \mathbb{E}[\breve{e}^j[D^j, \vec{\tilde{X}}^j]]>1, \hspace{0.07in} 1\leq j\leq M
\]
Finally, since $\sum_{j=0}^M \mathbb{E}[\breve{e}^j[D^j, \vec{\tilde{X}}^j]]=M+1$, and $\{\mathbb{E}[\breve{e}^j[D^j, \vec{\tilde{X}}^j]\}_{j=1}^M$ are all equal,
\begin{align*}
    \mathbb{E}[\breve{e}^0[D^0, \vec{\tilde{X}}^0]] = (M+1)-M\mathbb{E}[\breve{e}^j[D^1, \vec{\tilde{X}}^1]]< (M+1)-M\cdot1 &=1, 
\end{align*}
as required.
\end{proof}
Finally, it remains to show that the conditional e-statistic $\mathbb{E}[\breve{e}[D_t, \vec{\tilde{X}}_t]]$ is valid when testing a composite null, that is $\mathbb{E}_{H(\mathcal{P}^\textup{composite}_2)}[\breve{e}[D_t, \vec{\tilde{X}}_t]\mid \mathcal{F}_{t-1}]\leq1$. However, this is immediate since given the historical data $D_{-t}$, $K_t$ is fixed, and the proof is similar to the (unconditional) finite-sample imputed e-statistic. 

Next, as before, let $S_t = \prod_t \breve{e}[D_t, \vec{\tilde{X}}_t]$. It is readily seen that, assuming a label shift setting, under $H_0(\mathcal{P}^\textup{composite}_2)$,  $\mathbb{E}_{H_0}[S_1]=1$. Thus, under the null, $S_t$ is a test supermartingale: 
\begin{corollary}
Assuming a label shift setting (concept shift), under $H_0(\mathcal{P}_2^\textup{composite}), \mathbb{E}[S_t|S_1,\ldots, S_{t-1}] \leq S_{t-1}$.    
\end{corollary}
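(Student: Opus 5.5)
The corollary asserts that $S_t=\prod_{i\le t}\breve{e}[D_i,\vec{\tilde{X}}_i]$ is a test supermartingale under $H_0(\mathcal{P}_2^\textup{composite})$. The plan is to reduce this to the one-step conditional validity just established for the concept-shift composite null.

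\textbf{Step 1: isolate the one-step factor.} Write $\mathcal{F}_{t-1}$ for the natural filtration generated by $(D_1,\vec{\tilde{X}}_1),\ldots,(D_{t-1},\vec{\tilde{X}}_{t-1})$ together with the auxiliary null datasets used up to step $t-1$. Since $S_1,\ldots,S_{t-1}$ are $\mathcal{F}_{t-1}$-measurable, $\sigma(S_1,\ldots,S_{t-1})\subseteq\mathcal{F}_{t-1}$. Also $S_t=S_{t-1}\,\breve{e}[D_t,\vec{\tilde{X}}_t]$ with $S_{t-1}\ge 0$ and $\mathcal{F}_{t-1}$-measurable. Pulling out this known factor gives
\[
\mathbb{E}[S_t\mid\mathcal{F}_{t-1}]=S_{t-1}\,\mathbb{E}[\breve{e}[D_t,\vec{\tilde{X}}_t]\mid\mathcal{F}_{t-1}].
\]

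\textbf{Step 2: bound the conditional factor.} Conditional on $\mathcal{F}_{t-1}$, the score $K_t$ (through $\psi$ and, e.g., $\gamma_t$) is a fixed positive strictly increasing function. The fresh batch $(D_t,\vec{\tilde{X}}_t)$ is independent of the past and distributed as some $P\in H_0(\mathcal{P}_2^\textup{composite})$, with $P_X$ fixed by the concept-shift assumption. The step-$t$ null draws are i.i.d. from $P^\textup{null}$ and independent of everything else. So the conditional expectation equals the unconditional expectation of the finite-sample imputed e-statistic with this fixed $K$, and the preceding composite-null validity argument for $\breve{e}[D,\vec{\tilde{X}}]$ applies verbatim. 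That argument runs as follows:
\begin{itemize}
\item Lemma~\ref{FOSD_of_sums_CS} (with the remark extending it to $\theta_{Y\mid X}\le\theta^\textup{null}_{Y\mid X}$) gives the FOSD comparison between the predicted-label sums under $P$ and under $P^\textup{null}$.
\item Because $s\mapsto C_0/(g(s)+C_1)$ is monotone, this yields $\mathbb{E}[\breve{e}^j]\ge1$ for each $j\ge1$.
\item The identity $\sum_j\breve{e}^j=M+1$ then gives $\mathbb{E}[\breve{e}^0]\le 1$.
\end{itemize}
Hence $\mathbb{E}[\breve{e}[D_t,\vec{\tilde{X}}_t]\mid\mathcal{F}_{t-1}]\le 1$ almost surely.

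\textbf{Step 3: tower property.} Combining Steps 1 and 2 gives $\mathbb{E}[S_t\mid\mathcal{F}_{t-1}]\le S_{t-1}$. Conditioning down to the coarser $\sigma$-field,
\[
\mathbb{E}[S_t\mid S_1,\ldots,S_{t-1}]=\mathbb{E}\big[\mathbb{E}[S_t\mid\mathcal{F}_{t-1}]\mid S_1,\ldots,S_{t-1}\big]\le \mathbb{E}[S_{t-1}\mid S_1,\ldots,S_{t-1}]=S_{t-1}.
\]
For the base case, $\mathbb{E}[S_1]\le1$ holds by the unconditional validity result.

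\textbf{Expected obstacle.} The delicate part is Step 2: justifying that the FOSD comparison holds in the composite case where $\theta_{Y\mid X}\le\theta^\textup{null}_{Y\mid X}$ with possible equality at every $x$. In that case the dominance is only weak, not strict. For validity we only need non-strict inequalities, so I will use the weak form of Lemma~\ref{FOSD_of_sums_CS}, which follows from Proposition~\ref{FOSD_of_empirical_conditional_proportions} and Proposition~\ref{mixture_of_bernoulli_FOSD_order} without the strictness clause. I will also check that the conditioning on $\mathcal{F}_{t-1}$ does not disturb the exchangeability of the step-$t$ null draws; it does not, since those draws are generated fresh and independently at step $t$.
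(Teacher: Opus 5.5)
Your proposal is correct and follows the paper's route. Given the past, $K_t$ is fixed, and the conditional factor is bounded by the composite-null validity argument for $\breve{e}$ (FOSD of the predicted-label sums, monotonicity of $s\mapsto C_0/(g(s)+C_1)$, and $\sum_j\breve{e}^j=M+1$); the supermartingale inequality then follows by pulling out $S_{t-1}$ and conditioning down.

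Two of your choices are sound refinements of the paper's argument. First, you use weak inequalities; the paper's strict ones fail at $P=P^{\textup{null}}$, which lies in the composite null. Second, you invoke the sum-level Lemma~\ref{FOSD_of_sums_CS} rather than Proposition~\ref{FOSD_NB_model}. One attribution needs adjusting: the weak FOSD of $\hat p$ comes from that lemma's coupling conditional on the labeled $X$'s, not from the marginal statements of Proposition~\ref{FOSD_of_empirical_conditional_proportions} alone.
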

By Ville's inequality, we get type-I error control. Crucially, as before, validity holds regardless of the choice of $\mathcal{A}$, $N$, $n$, and the accuracy of predictions.

\section{Prediction Powered Inference: Background, Formulation, and Comparisons}
\label{sec:ppi_test_definition}
\subsection{A PPI based Sequential Test} 

To compare our method against a test that leverages both labeled and unlabeled data, we employ PPI to design a test for the marginal of $Y$. Let $\hat{f}_t(x) \in \{0, 1\}$ be a predictor of $Y$ given $X$, learned from historical data. In our experiments, $\hat{f}$ is the bayes classifier \eqref{the_posterior_classifier} or the threshold classifier~\eqref{eq:threshold_classifier}.
Following the construction in \cite{einbinder2025semi} , given a sequence $(X_1, Y_1, \vec{\tilde{X}}_1), \dots, (X_t, Y_t, \vec{\tilde{X}}_t)$ of labeled and unlabeled data, we define the following betting procedure: At each step $t$, we bet $\lambda_t$ of our wealth against the null and receive the payoff:

\begin{equation}
\label{eq:ppi}    
e_{\text{PPI}}[ X_t, Y_t, \vec{\tilde{X}}_t] = 1 + \lambda_t(w_t - \theta_Y^{\textup{null}}),
\end{equation}

where $\lambda_t \in [-1/\ 2, 1/\ 2]$ and $w_t$ is the PPI estimator for the mean of $Y$: 
\[
w_t = Y_t + \epsilon_t \left( \frac{1}{N}\sum_{X\in \vec{\tilde{X}}_t} \hat{f}_t(X) -  \hat{f}_t(X_t) \right).
\]

Thus, the resulting  e-process is $E^{\text{PPI}}_{T} = \prod_{t=1}^T e_{\text{PPI}}[X_t, Y_t, \vec{\tilde{X}}_t]$.

To maximize the power of the PPI process we adaptively set both $\epsilon_t$ and $\lambda_t$ as follows: $\epsilon_t$ is used to reduce the variance of the PPI estimator $w_t$. Indeed, following~\cite{angelopoulos2023ppi++}, we minimize the variance of the PPI estimator by setting $\epsilon_t$ as follows:

\begin{equation}
\label{eq:ppi_epsilon}
\epsilon_t = \frac{\widehat{\text{COV}}(\vec{Y}_{-t}, f_t(\vec{X}_{-t}))}{ \left(1+\frac{1}{N}\right)\widehat{\text{VAR}}(f_t(\tilde{X}_{-t}))},
\end{equation}

where $(\vec{X}_{-t}, \vec{Y}_{-t}), \vec{\tilde{X}}_{-t}$ are the sequences of labeled and unlabeled data observed up to time $t$, and $\widehat{\text{COV}}$ and $\widehat{\text{VAR}}$ are the estimates of the covariance and variance respectively. To set the bets $\{ \lambda_t : t \ge 1\}$ we use online newton step (ONS):

\[
\lambda_t = \min \left\{ \frac{1}{2}, \max \left\{ -\frac{1}{2}, \lambda_{t-1} +  \frac{2}{2 - \log3}\frac{z_t}{a_t} \right\} \right\},
\]
where 
\begin{align*}
z_t &= \frac{w_t} {(1 + w_t \lambda_{t-1} )}\\
a_{i, t} &= a_{t-1} + z_t^2     \\
\end{align*} 
By using ONS, we ensure the exponential growth of the e-process $E^{\text{PPI}}_{T}$ \cite{shekhar2023nonparametric}.
 
The validity of the PPI estimator follows immediately from the fact that $w_t$ is an unbiased estimator of $\mathbb{E}_{H_0(\mathcal{P}_1)}[Y_t]$:
\begin{proposition}
    $e_{\text{PPI}}[ X_t, Y_t, \vec{\tilde{X}}_t]$ is a valid sequential e-value against $H_0(\mathcal{P}_1).$
\end{proposition}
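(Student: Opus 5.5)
The plan is to show two things: that $e_{\text{PPI}}[X_t, Y_t, \vec{\tilde{X}}_t]$ is nonnegative, and that its conditional expectation given $\mathcal{F}_{t-1}$ equals $1$ under $H_0(\mathcal{P}_1)$. Ville's inequality then applies to the product $E^{\text{PPI}}_T$, exactly as in Corollary~\ref{cor: S_t_as_s_martingale}.

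\textbf{Predictability.} The quantities $\hat{f}_t$, $\epsilon_t$ (via \eqref{eq:ppi_epsilon}) and $\lambda_t$ (via ONS) are all computed from data observed up to time $t-1$, so they are $\mathcal{F}_{t-1}$-measurable. One caveat: as written, the ONS update uses $z_t$, which depends on $w_t$. I will read this update with the usual one-step lag, so that $\lambda_t$ is computed from $w_1,\dots,w_{t-1}$. Conditioning on $\mathcal{F}_{t-1}$, these three objects can then be treated as constants.

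\textbf{Unbiasedness.} The fresh data at step $t$ are $(X_t,Y_t)\sim P_{XY}$ and $\vec{\tilde{X}}_t\sim \prod_{i=1}^N P_X$. Under $H_0(\mathcal{P}_1)$ we have $\mathbb{E}[Y_t]=\theta_Y^{\textup{null}}$. Both $X_t$ and each element of $\vec{\tilde{X}}_t$ have the same marginal $P_X$, whether that is the null marginal or not. Hence
\[
\mathbb{E}\Bigl[\tfrac{1}{N}\textstyle\sum_{X\in\vec{\tilde{X}}_t}\hat{f}_t(X)-\hat{f}_t(X_t)\,\Big|\,\mathcal{F}_{t-1}\Bigr]=0 ,
\]
which gives $\mathbb{E}[w_t\mid\mathcal{F}_{t-1}]=\theta_Y^{\textup{null}}$. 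Linearity then yields $\mathbb{E}[e_{\text{PPI}}\mid\mathcal{F}_{t-1}]=1+\lambda_t\cdot 0=1$.

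\textbf{Nonnegativity (main obstacle).} Nonnegativity requires $|\lambda_t(w_t-\theta_Y^{\textup{null}})|\le 1$. Since $|\lambda_t|\le 1/2$, it suffices to have $|w_t-\theta_Y^{\textup{null}}|\le 2$. Because $Y_t,\hat{f}_t\in\{0,1\}$, we get $|w_t-\theta_Y^{\textup{null}}|\le 1+|\epsilon_t|$, so it is enough that $|\epsilon_t|\le 1$. I would enforce this by clipping $\epsilon_t$ to $[-1,1]$, which preserves predictability and leaves unbiasedness untouched. Alternatively, one can clip $\lambda_t$ to $[-1/(1+|\epsilon_t|),\,1/(1+|\epsilon_t|)]$. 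With this, $e_{\text{PPI}}\ge 0$ and the conditional e-value property holds. Validity of the product process then follows by the tower property and Ville's inequality.
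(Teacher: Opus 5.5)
Your proof is correct and follows the paper's route: the rectifier $\frac{1}{N}\sum_{\tilde X\in\vec{\tilde{X}}_t}\hat f_t(\tilde X)-\hat f_t(X_t)$ has zero conditional mean because $X_t\overset{d}{=}\tilde X$ and $\hat f_t,\epsilon_t,\lambda_t$ are $\mathcal{F}_{t-1}$-measurable, so linearity gives $\mathbb{E}_{H_0}[e_{\text{PPI}}\mid\mathcal{F}_{t-1}]=1$. Your two extra points go beyond the paper's proof, which silently assumes the first and omits the second: reading the ONS update with a one-step lag, and clipping $\epsilon_t$ to guarantee nonnegativity. The clipping is a genuine repair, because $\widehat{\text{COV}}$ is computed on labeled data and $\widehat{\text{VAR}}$ on unlabeled data, so $|\epsilon_t|>1$, and hence $e_{\text{PPI}}<0$, is possible under the literal definition.
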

\begin{proof}
For every $t$ and for every $\tilde{X} \in \vec{\tilde{X}}_t$ we have that  $X_t \overset{d}{=} \tilde{X}$. Furthermore, since $\hat{f}_t(X)$ is a function of the historical data only, we have that:
\begin{equation}
\label{eq:ppi_unbiased}
\mathbb{E}_{H_0(\mathcal{P}_1)}\left[ \frac{1}{N}\sum_{\tilde{X}\in \vec{\tilde{X}}_t} \hat{f}_t(\tilde{X}) -  \hat{f}_t(X_t) \mid \mathcal{F}_{t-1} \right] = 0.
\end{equation}

Therefore,
\begin{align*}
    &\mathbb{E}_{H_0(\mathcal{P}_1)}[e_{\text{PPI}}[ X_t, Y_t, \vec{\tilde{X}}_t] \mid \mathcal{F}_{t-1}] = \mathbb{E}_{H_0(\mathcal{P}_1)}[ 1 + \lambda_t(w_t - \theta_0) \mid \mathcal{F}_{t-1}] = \\ 
    & \mathbb{E}_{H_0(\mathcal{P}_1)}\left[ 1 + \lambda_t \left(Y_t + \epsilon_t \left( \sum_{\tilde{X}\in \vec{\tilde{X}}_t} \hat{f}_t(\tilde{X}) -  \hat{f}_t(X_t) \right) - \theta_0 \right)  \mid \mathcal{F}_{t-1} \right] \overset{(1)}{=} \\
    & 1+ \lambda_t\mathbb{E}_{H_0(\mathcal{P}_1)}\left[Y_t  \mid \mathcal{F}_{t-1}\right] + \lambda_t\epsilon_t \mathbb{E}_{H_0(\mathcal{P}_1)}\left[ \sum_{\tilde{X}\in \vec{\tilde{X}}_t} \hat{f}_t(\tilde{X}) -  \hat{f}_t(X_t)  \mid \mathcal{F}_{t-1} \right]  - \lambda_t\theta_0  \overset{(2)}{=} \\
    & 1+ \lambda_t\mathbb{E}_{H_0(\mathcal{P}_1)}\left[Y_t  \mid \mathcal{F}_{t-1} \right] - \lambda_t\theta_0 \overset{(3)}{=} 1,
\end{align*}

Where, $(1)$ holds since $\lambda_t$ and $\epsilon_t$ are functions of the historical data only, $(2)$ stems from~\eqref{eq:ppi_unbiased}, and $(3)$ is true since the labeled samples $Y_t$ are i.i.d and under the null $\lambda_t\mathbb{E}_{H_0(\mathcal{P}_1)}\left[Y_t \right] = \theta_0$.

Thus, $\mathbb{E}_{H_0(\mathcal{P}_1)}[e_{\text{PPI}}[ X_t, Y_t, \vec{\tilde{X}}_t] \mid \mathcal{F}_{t-1}] = 1$ and the proof is complete.

\end{proof}

\subsection{Analyzing the Benefit of PPI}
\label{sec:comparing_to_ppi}
To understand the benefit of using PPI compared to solely using labeled data, we compare PPI to a sequential test that uses the same e-statistic as PPI but with $\epsilon=0$.

\begin{figure}[h]
\centering
\begin{subfigure}[t]{0.4\textwidth}
    \centering
    \includegraphics[width=\textwidth]{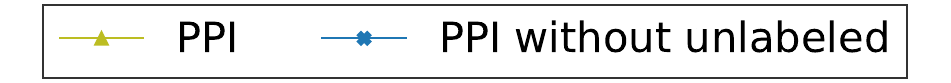}
\end{subfigure}
\\
\begin{subfigure}[t]{0.3\textwidth}
    \centering
    \includegraphics[width=\textwidth]{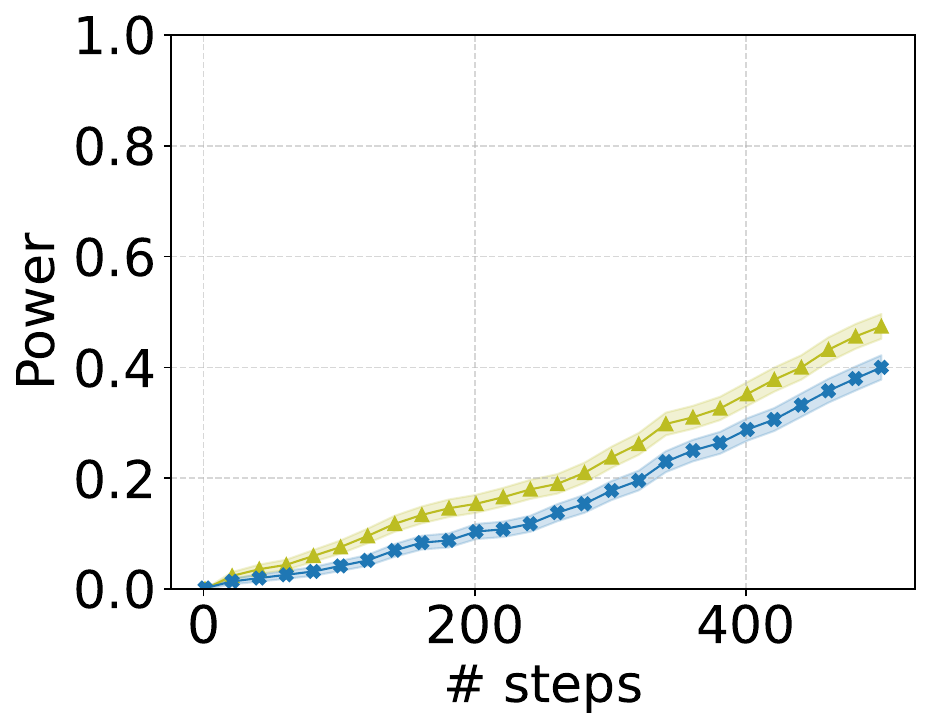}
    \subcaption{$N=30$, low correlation}    \label{fig:ppi_to_unlabeled_low_unlabeled_medium_corr}
\end{subfigure}
\begin{subfigure}[t]{0.3\textwidth}
    \centering
    \includegraphics[width=\textwidth]{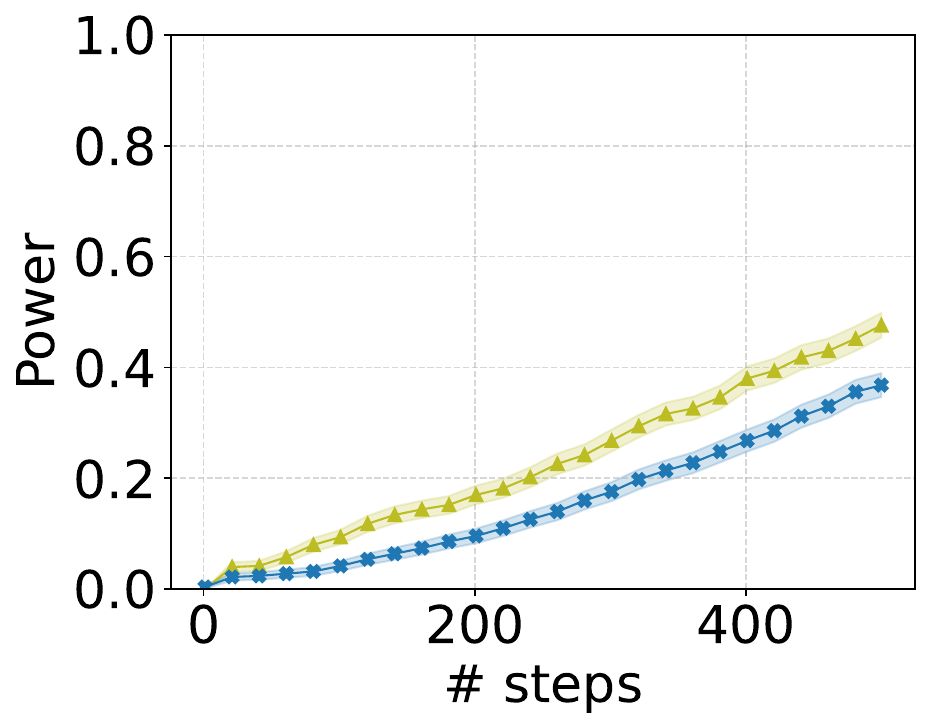}
    \subcaption{$N=135$, low correlation}
    \label{fig:ppi_to_unlabeled_high_unlabeled_medium_corr}
\end{subfigure}
\\
\begin{subfigure}[t]{0.3\textwidth}
    \centering
    \includegraphics[width=\textwidth]{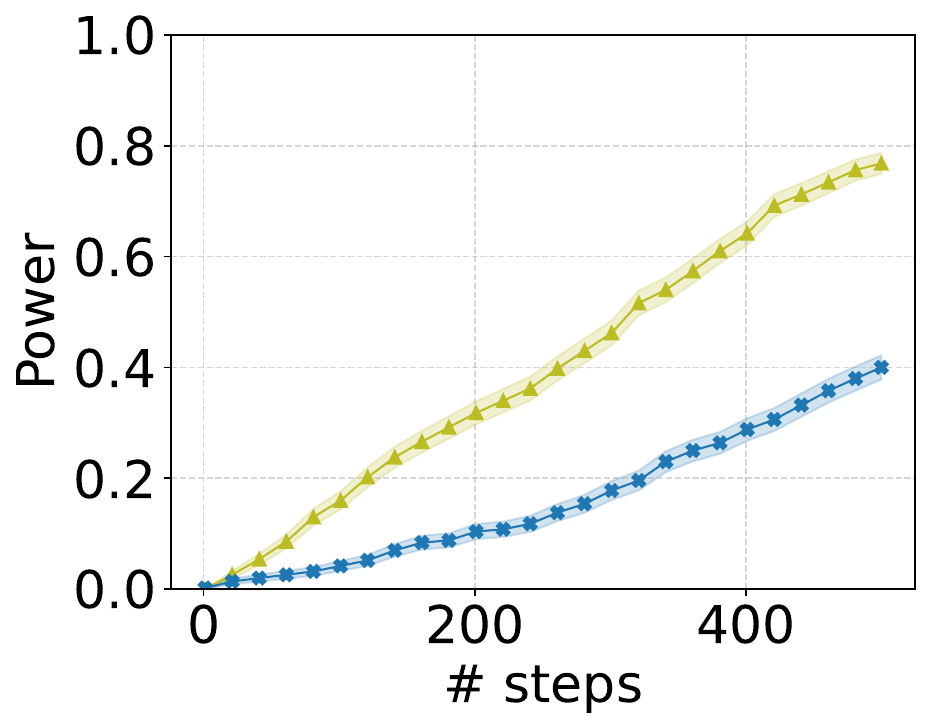}
    \subcaption{$N=30$, high correlation} \label{fig:ppi_to_unlabeled_low_unlabeled_high_corr}
\end{subfigure}
\begin{subfigure}[t]{0.3\textwidth}
    \centering
    \includegraphics[width=\textwidth]{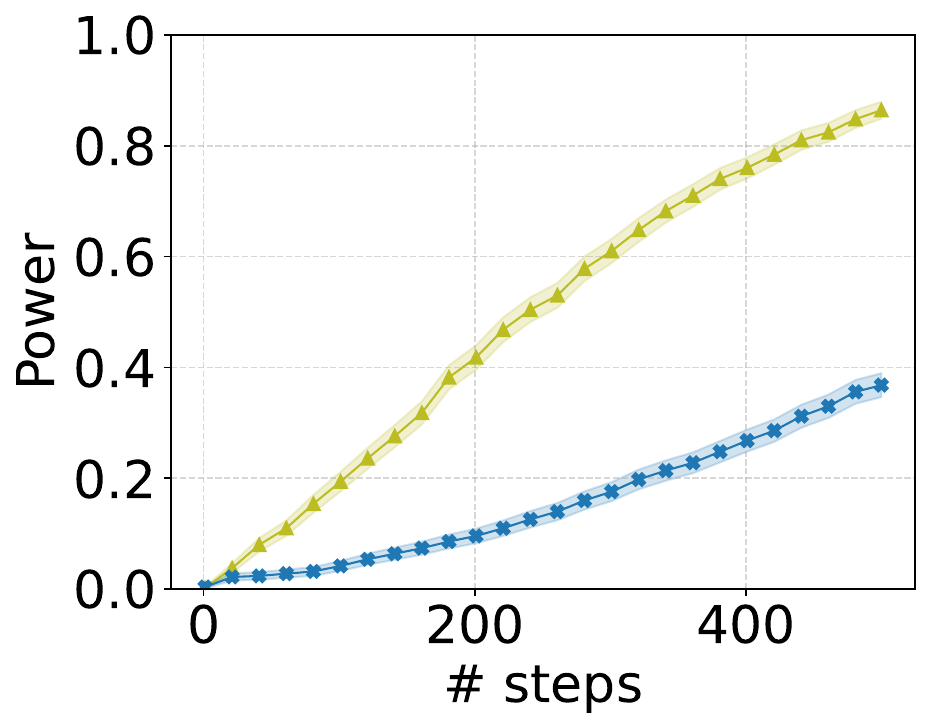}
    \subcaption{$N=135$, high correlation}
    \label{fig:ppi_to_unlabeled_high_unlabeled_high_corr}
\end{subfigure}
\caption{Comparing PPI to only using labeled data for the label shift setting of Section~\ref{sec:simulation_label_shift}.}
\label{fig:ppi_to_unlabeled}
\end{figure}

\subsection{One-Sided PPI}

The PPI e-process, as we defined in~\eqref{eq:ppi}, is a two-sided test. However, our setting is one-sided, and while it is valid to use a two-sided test for a one-sided setup, it places the two-sided test at a disadvantage in terms of power. To simulate a fair comparison, we modify the PPI e-process to be aware of the one-sided setup by forcing $\lambda_t$ to be non-negative, and we refer to this process as the one-sided PPI e-process. To validate our method against the new process, we repeat the simulation setup of Section~\ref{sec:simulation_label_shift}, but now we use the one-sided PPI e-process in the baselines. ~\autoref{fig:ppi_to_imputed} depicts the results. We can see that the one-sided PPI is indeed more powerful than the PPI test. When comparing the convex combination of the baselines that includes the one-sided PPI with the convex combination that also includes the imputed e-process, we can see that the imputed e-process is still able to collect evidence against the null that is not used by the one-sided PPI and ultimately improves the power of the test.

\subsection{Comparison to Our approach}
The major difference between our method and PPI is how the correlation between $X$ and $Y$ affects each test. When the correlation is low, the resulting predictive model $\hat{f}_t$ may be poor in the sense that the covariance of the prediction, $f_t(X)$, and the labels, $Y$, may be low. In this situation, as can be seen from~\eqref{eq:ppi_epsilon}, the PPI test pushes $\epsilon$ to zero, which boils down to a test that utilizes only labeled data. Conversely, the power of the imputed process stems from applying $\mathcal{A}$ on both the null and observed data and comparing the distribution of pseudo labels. Thus, even when the correlation is low, and thus the classifier is of low accuracy, the imputed e-process can still detect the divergence between the null and alternative. Indeed, we can see in~\cref{fig:ppi_to_unlabeled_high_unlabeled_medium_corr,fig:ppi_to_unlabeled_low_unlabeled_medium_corr} that when the correlation is low, the PPI process is only marginally better than a test that only uses labeled data and under the same settings we can see in~\cref{fig:ppi_to_imputed_high_unlabeled_medium_corr,fig:ppi_to_imputed_low_unlabeled_medium_corr} that the imputed e-process is significantly better than both the PPI and one sided PPI e-process.

\begin{figure}
\centering
\begin{subfigure}[t]{\textwidth}
    \centering
    \includegraphics[width=\textwidth]{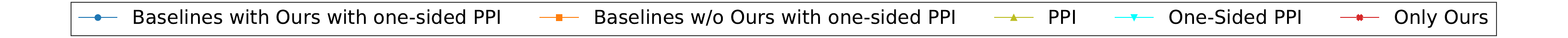}
\end{subfigure}
\\
\begin{subfigure}[t]{0.3\textwidth}
    \centering
    \includegraphics[width=\textwidth]{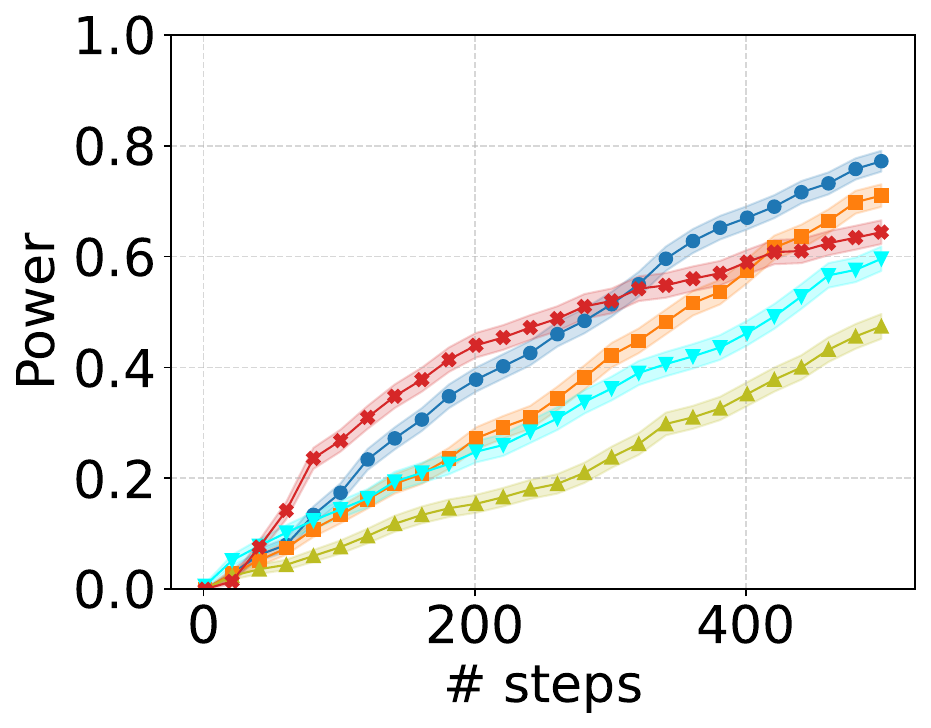}
    \subcaption{$N=30$, low correlation}    \label{fig:ppi_to_imputed_low_unlabeled_medium_corr}
\end{subfigure}
\begin{subfigure}[t]{0.3\textwidth}
    \centering
    \includegraphics[width=\textwidth]{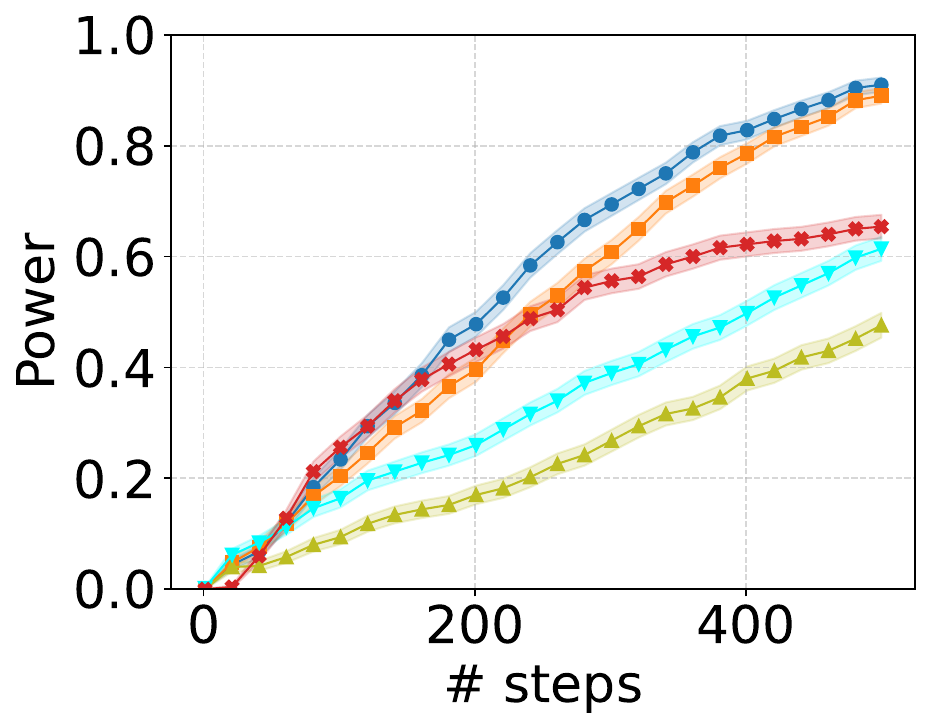}
    \subcaption{$N=135$, low correlation}
    \label{fig:ppi_to_imputed_high_unlabeled_medium_corr}
\end{subfigure}
\\
\begin{subfigure}[t]{0.3\textwidth}
    \centering
    \includegraphics[width=\textwidth]{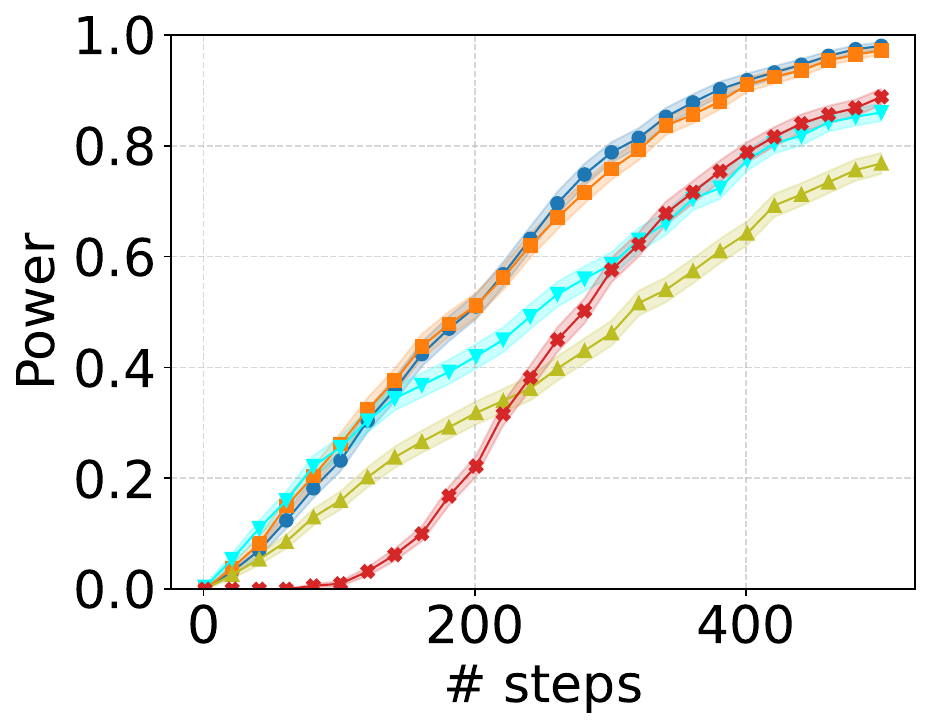}
    \subcaption{$N=30$, high correlation} \label{fig:ppi_to_imputed_low_unlabeled_high_corr}
\end{subfigure}
\begin{subfigure}[t]{0.3\textwidth}
    \centering
    \includegraphics[width=\textwidth]{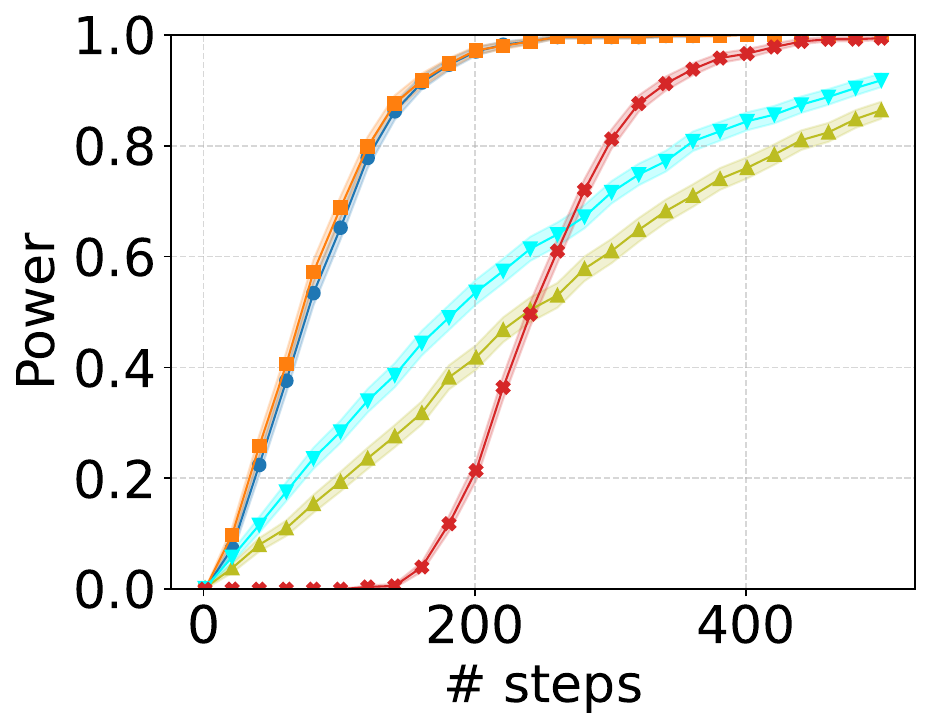}
    \subcaption{$N=135$, high correlation}
    \label{fig:ppi_to_imputed_high_unlabeled_high_corr}
\end{subfigure}
\caption{Comparing the imputed process to PPI and one-sided PPI. We simulate a small and large number of unlabeled points per batch ($N=35$ and $N=135$, respectively), and high correlation ($0.3$) and low correlation ($0.7$) settings.}
\label{fig:ppi_to_imputed}
\end{figure}

\section{Simulations}

\subsection{Label Shift: Simulation Setup}
For the null hypothesis, we set $\theta_Y^{\textsuperscript{null}}=0.5$ and for the alternative, we use $\theta_Y=0.52$. For the low correlation setting we set $P(X=1|Y=0) = 0.35$ and $P(X=1|Y=1) = 0.65$. For the high correlation setting we use $P(X=1|Y=0) = 0.2$ and $P(X=1|Y=1) = 0.9$.

\subsection{Concept Shift: Simulation Setup}
We set $\theta_{Y \mid X=0}^{\textsuperscript{null}} = 0.4$, $\theta_{Y \mid X=1}^{\textsuperscript{null}} = 0.7$, and $\theta_{X}^{\textsuperscript{null}} = 0.5$ for the low correlation setting and $\theta_{Y \mid X=0}^{\textsuperscript{null}} = 0.2$, $\theta_{Y \mid X=1}^{\textsuperscript{null}} = 0.85$, and $\theta_{X}^{\textsuperscript{null}} = 0.5$ for the high correlation setting. To sample from a distribution from the alternative we use: $\theta_{Y \mid X = 0} = \theta_{Y \mid X=0}^{\textsuperscript{null}} + 0.02$ and $\theta_{Y \mid X = 1} = \theta_{Y \mid X=1}^{\textsuperscript{null}} + 0.02$.

\subsection{Hyperparameter Tuning}
\label{sec:hyperparameter-tuning}
We repeat the low $N$, low correlation simulations of Section~\ref{sec:simulation_label_shift} with different values of $M$. \autoref{fig:tuning_M} shows the results. It is evident that increasing $M$ yields a monotonic improvement in the power of the method. In addition, we can see that for $M>32$, the improvement is only asymptomatic.

\subsection{Concept Shift: Non Monotone Hypothesis}
\label{sec:non_monotone_hypothesis}
To demonstrate that the power of the test against the alternative is indeed tied to the choice of $K$, we repeat the same simulation as in Section~\ref{sec:simulations_concept_shift}, but we sample from an alternative that is not right-sided. Specifically, we set $\theta_{Y \mid X=0}^{\textsuperscript{null}} = 0.4$, $\theta_{Y \mid X=1}^{\textsuperscript{null}} = 0.7$, and $\theta_{X}^{\textsuperscript{null}} = 0.5$ for the low correlation setting and $\theta_{Y \mid X=0}^{\textsuperscript{null}} = 0.2$, $\theta_{Y \mid X=1}^{\textsuperscript{null}} = 0.85$, and $\theta_{X}^{\textsuperscript{null}} = 0.5$ for the high correlation setting. To sample from a distribution from the alternative we use: $\theta_{Y \mid X = 0} = \theta_{Y \mid X=0}^{\textsuperscript{null}} + 0.02$ and $\theta_{Y \mid X = 1} = \theta_{Y \mid X=1}^{\textsuperscript{null}} - 0.02$. \autoref{fig:concept_shift_non_monotone} depicts the results. We can observe that indeed our method does not perform well under this setting. Notably, all of the baselines perform poorly as well.

\begin{figure}
\centering
\begin{subfigure}[t]{0.4\textwidth}
    \centering
    \includegraphics[width=\textwidth]{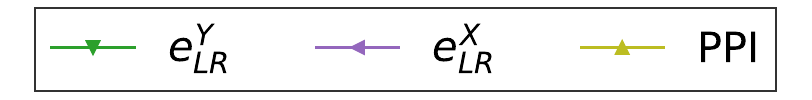}
\end{subfigure}
\\
\begin{subfigure}[t]{0.3\textwidth}
    \centering
    \includegraphics[width=\textwidth]{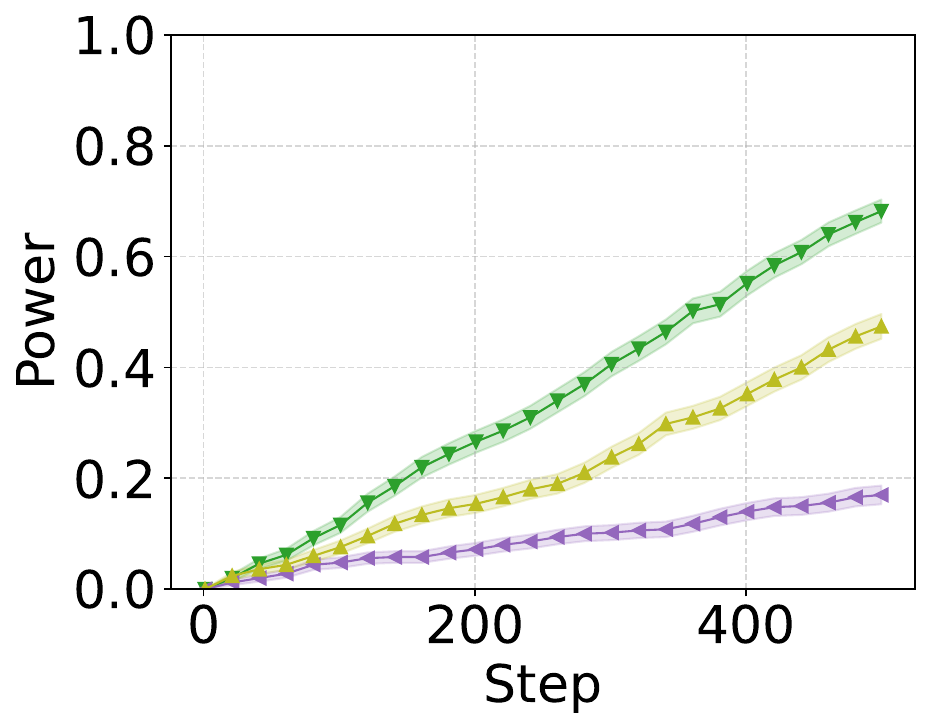}
    \subcaption{$N=30$, low correlation}    \label{fig:label_shift_baselines_low_unlabeled_medium_corr}
\end{subfigure}
\begin{subfigure}[t]{0.3\textwidth}
    \centering
    \includegraphics[width=\textwidth]{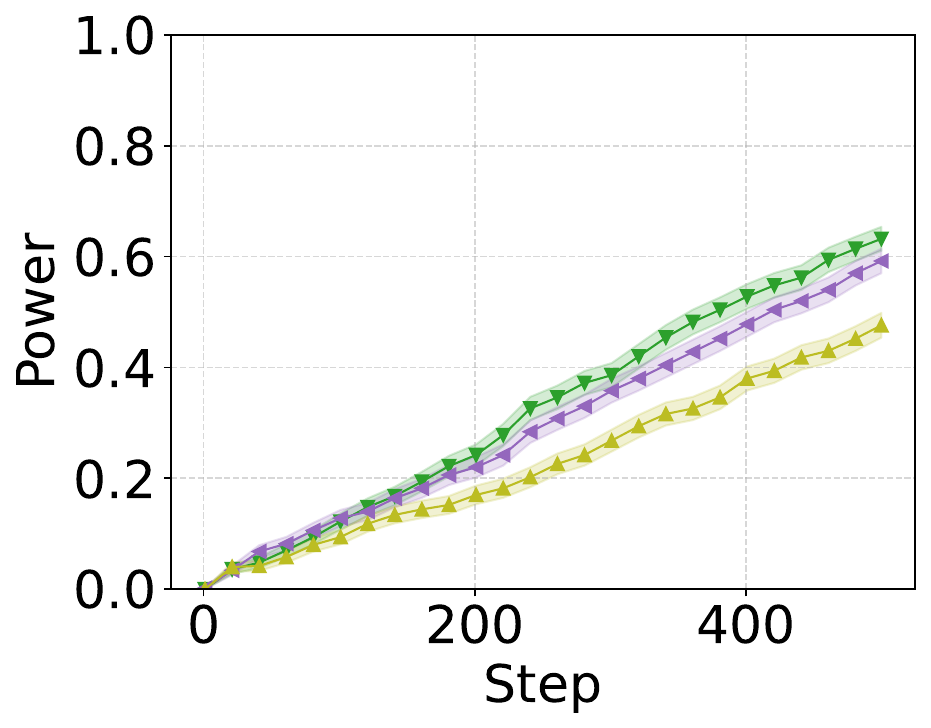}
    \subcaption{$N=135$, low correlation}
    \label{fig:label_shift_baselines_high_unlabeled_medium_corr}
\end{subfigure}
\\
\begin{subfigure}[t]{0.3\textwidth}
    \centering
    \includegraphics[width=\textwidth]{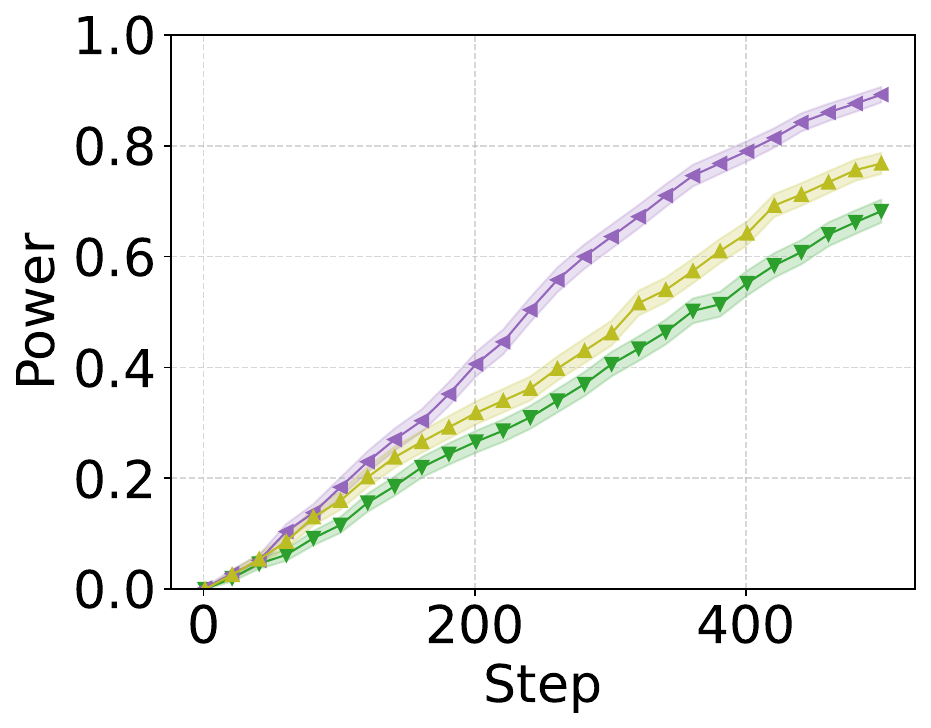}
    \subcaption{$N=30$, high correlation} 
    \label{fig:label_shift_baselines_low_unlabeled_high_corr}
\end{subfigure}
\begin{subfigure}[t]{0.3\textwidth}
    \centering
    \includegraphics[width=\textwidth]{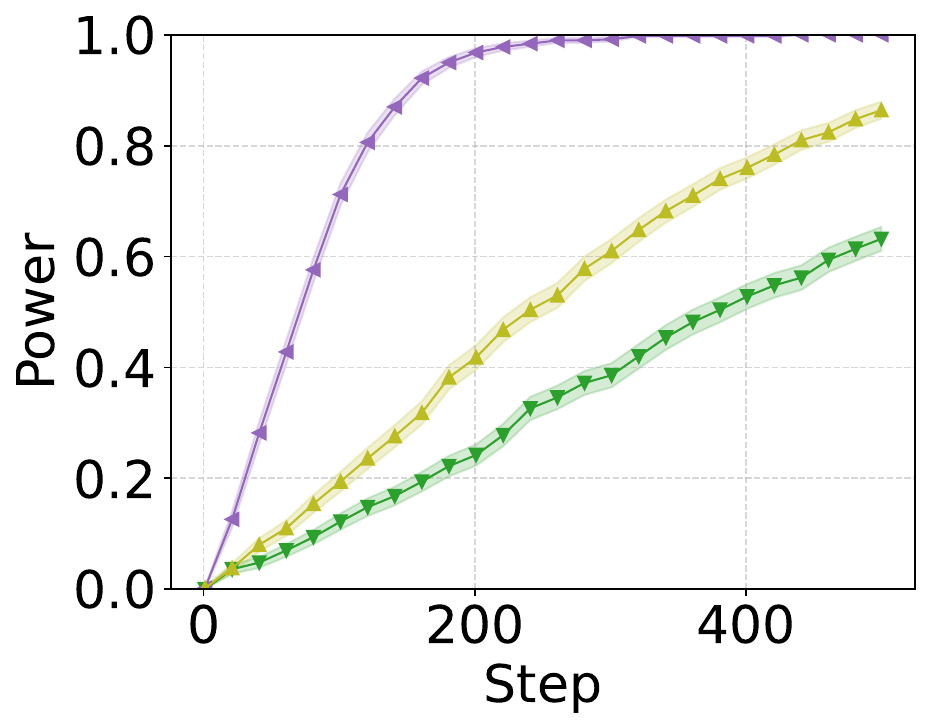}
    \subcaption{$N=135$, high correlation}
    \label{fig:label_shift_baselines_high_unlabeled_high_corr}
\end{subfigure}
\caption{Power curve of each baseline used in the simulations from section~\ref{sec:simulation_label_shift}.}
\label{fig:label_shift_baselines}
\end{figure}

\begin{figure}
\centering
\begin{subfigure}[t]{0.4\textwidth}
    \centering
    \includegraphics[width=\textwidth]{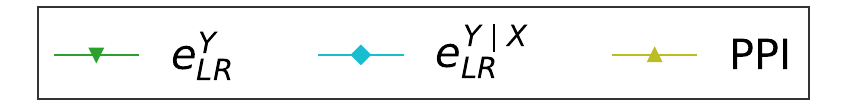}
\end{subfigure}
\\
\begin{subfigure}[t]{0.3\textwidth}
    \centering
    \includegraphics[width=\textwidth]{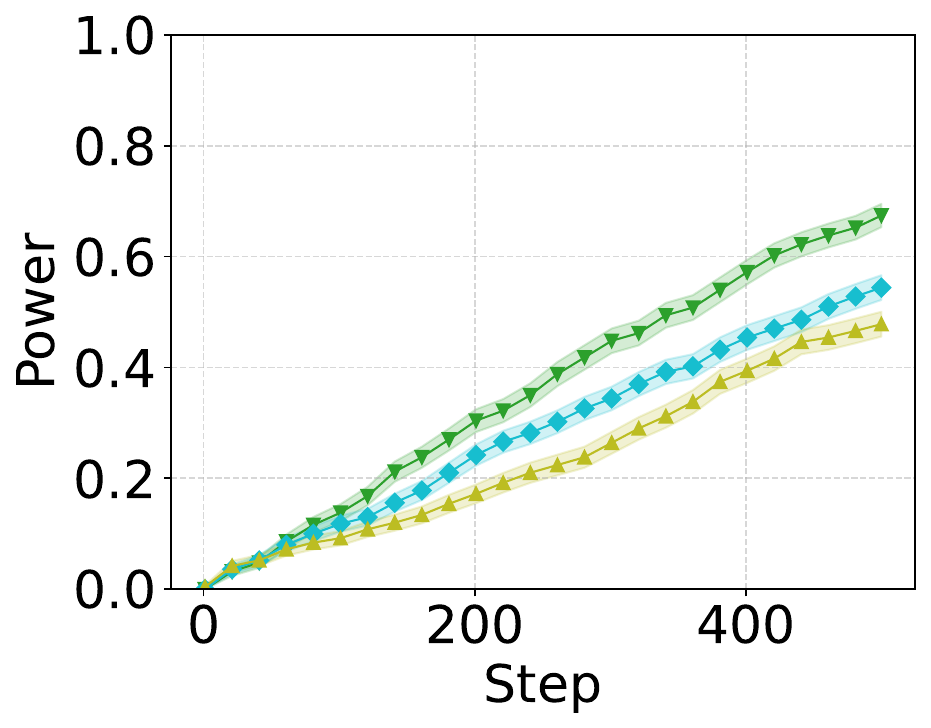}
    \subcaption{$N=135$, low correlation}    
    \label{fig:concept_shift_baselines_low_corr}
\end{subfigure}
\begin{subfigure}[t]{0.3\textwidth}
    \centering
    \includegraphics[width=\textwidth]{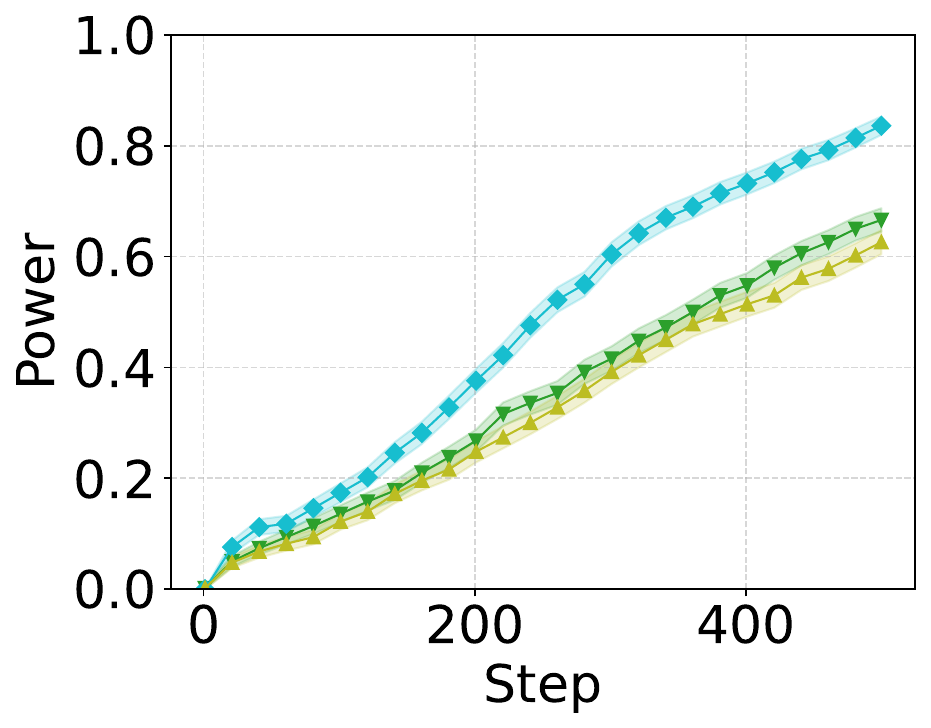}
    \subcaption{$N=135$, high correlation}
    \label{fig:concept_shift_baselines_high_corr}
\end{subfigure}
\caption{Power curve of each baseline used in the simulations from section~\ref{sec:simulations_concept_shift}.}
\label{fig:concept_shift_baselines}
\end{figure}

\begin{figure}
\centering
    \centering
    \includegraphics[width=0.3\textwidth]{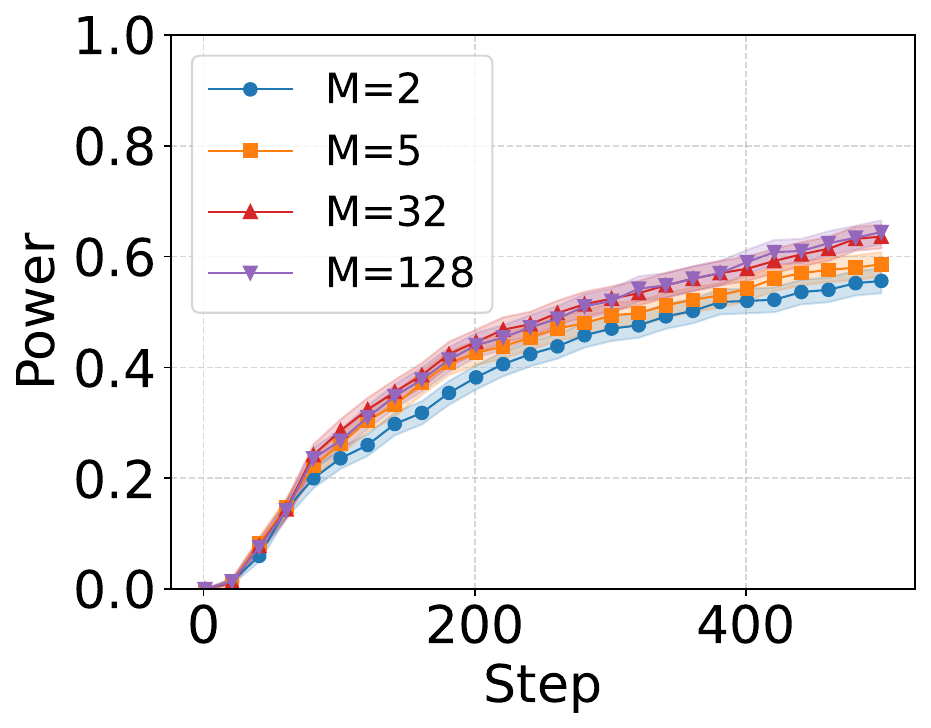}
\caption{The effect of $M$ on the power of the imputed e-process in the label shift setting from section~\ref{sec:simulation_label_shift}.}
\label{fig:tuning_M}
\end{figure}

\begin{figure}
\centering
\begin{subfigure}[t]{0.7\textwidth}
    \centering
    \includegraphics[width=\textwidth]{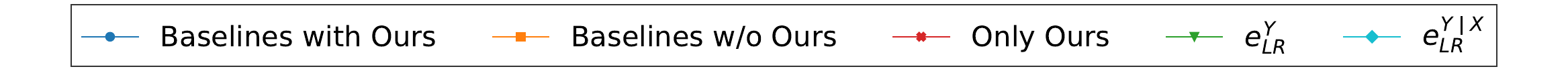}
\end{subfigure}
\\
\begin{subfigure}[t]{0.3\textwidth}
    \centering
    \includegraphics[width=\textwidth]{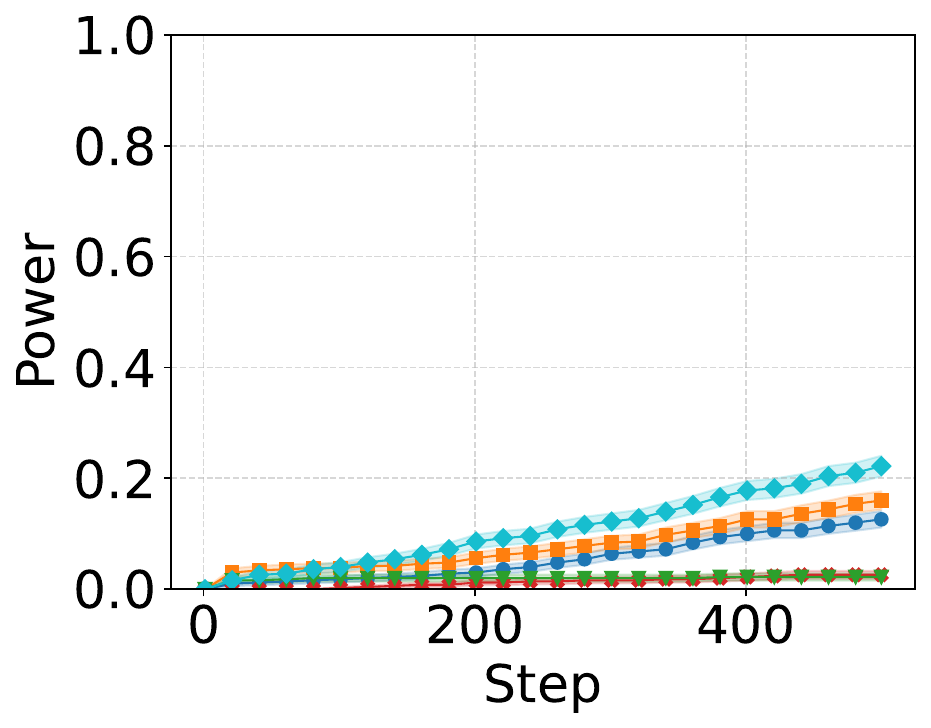}
    \subcaption{$N=135$, low correlation}    
    \label{fig:concept_shift_non_monotone_low_corr}
\end{subfigure}
\begin{subfigure}[t]{0.3\textwidth}
    \centering
    \includegraphics[width=\textwidth]{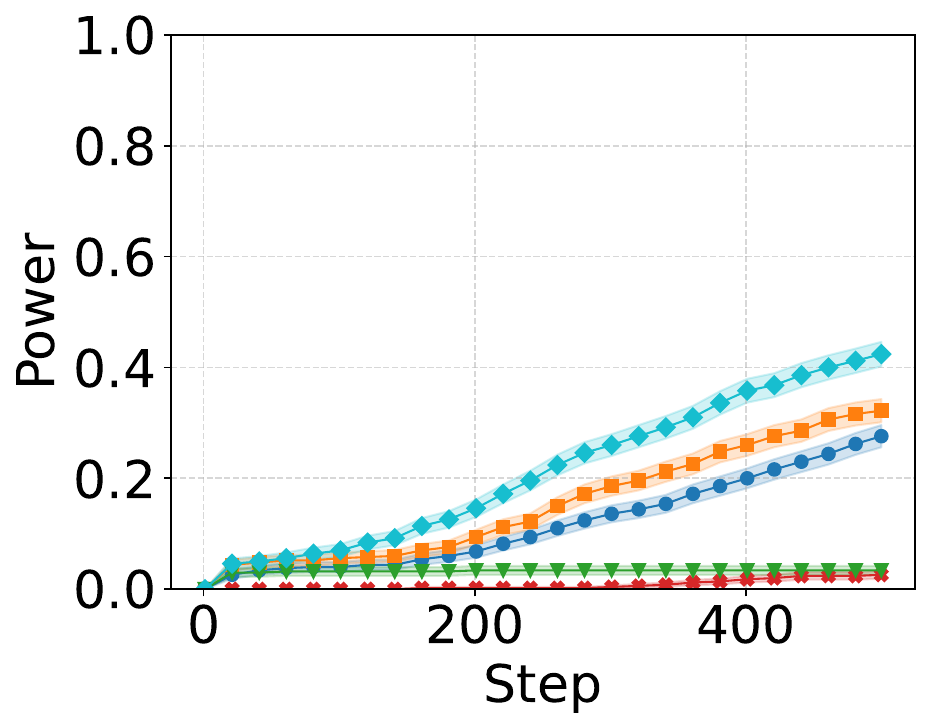}
    \subcaption{$N=135$, high correlation}
    \label{fig:concept_shift_non_monotone_high_corr}
\end{subfigure}
\caption{Power curves for testing an alternative hypothesis which is not right sided, i.e., $\theta_{Y\mid X=0} > \theta^{\textup{null}}_{Y\mid X=0}$ and $\theta_{Y\mid X=1} <  \theta^{\textup{null}}_{Y\mid X=1}$.}
\label{fig:concept_shift_non_monotone}
\end{figure}

\section{LLM Evaluation}
\subsection{Experimental Setup}
\label{sec:llm_eval_setup}

In this section, we provide a full description of the datasets, the parameters, and the setup of the experiments.

\subsection{Data}
\label{llm_eval_setup_data}
GSM8K, which contains 87,900 grade school math problems. MATH~\cite{hendrycksmath2021}, which is a dataset of 12,500 challenging math problems, and AQUA-RAT (Algebra Question Answering with Rationals)~\cite{ling2017program}, which contains about 100,000 algebra questions.

\subsection{Concept Shift}
To sample from the null distribution we use: $\theta_{Y \mid X=0}^{\textsuperscript{null}} = 0.8$, $\theta_{Y \mid X=1}^{\textsuperscript{null}} = 0.5$ and $\theta_{Y}^{\textsuperscript{null}} = 0.55$. To sample from a distribution from the alternative, we use: $\theta_{Y \mid X = 0} = 0.8$, $\theta_{Y \mid X = 1} = 0.535$, and $\theta_{Y} = 0.58$.
After processing the data, we are left with $42425$ samples for model $A$ and $50909$ samples for model $B$.

\subsection{Label Shift}
To sample from the null distribution we use: $\theta_{Y}^{\textsuperscript{null}} = 0.58$ and $\theta_{X}^{\textsuperscript{null}} = 0.55$. To sample from a distribution from the alternative, we use: $\theta_{Y} = 0.594$ and $\theta_{X} = 0.557$.
After processing the data, we are left with $176162$ samples for model $A$ and $238026$ samples for model $B$.

\subsection{Validity}
\label{sec:llm_eval_validity}
We repeat the real-world experiments in~\autoref{sec:simulations}, but now model $A$ and model $B$ are the same. In this setting, the null is true, and we expect that the rejection rate will not exceed $\alpha$. Similarly, we repeat the experiment in~\autoref{sec:multivarite_experiment} but we use the data from the same year for both the null and observed data. \autoref{fig:validity} shows the results. We can see that the false rejection rate of all tests was less than $\alpha$.

\begin{figure}
    \centering
    \includegraphics[width=0.3\textwidth]{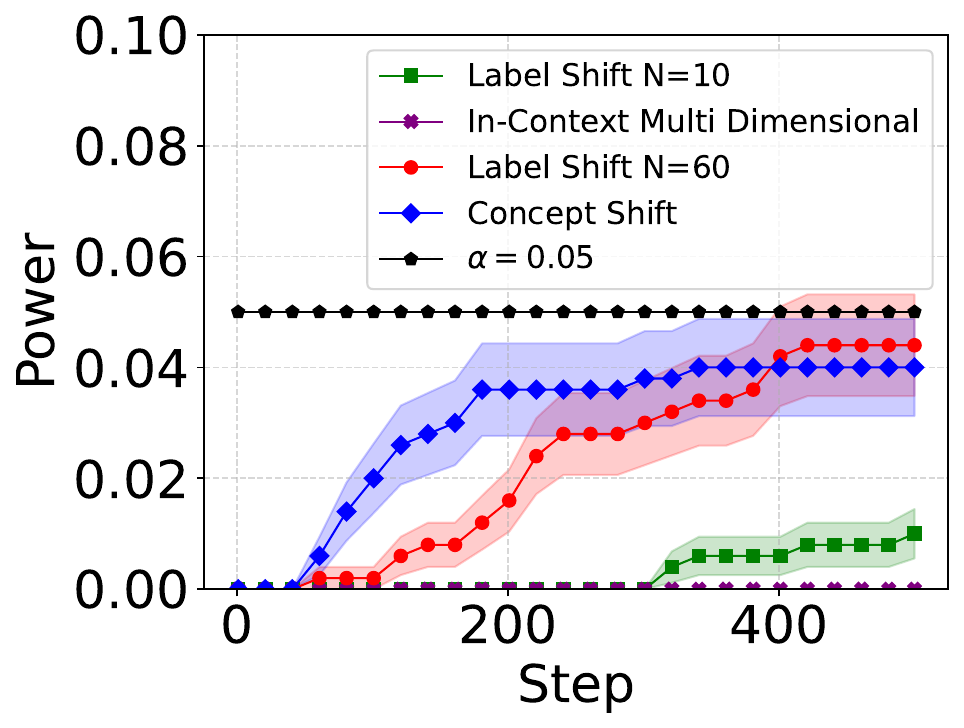}
\caption{False rejection rate for real-world experiments for $\alpha=0.05$}
\label{fig:validity}
\end{figure}

\section{In-Context Learning}
\label{appendix:multi_variat}

Census data was obtained through the folktables interface~\cite{ding2021retiring}.

The full list of features used is:

\begin{itemize}
    \item Sex
    \item Disability
    \item Employment status of parents
    \item Military service
    \item Nativity
    \item Hearing difficulty
    \item Vision difficulty
    \item Cognitive difficulty
    \item Recoded detailed race code
    \item Total person's income
\end{itemize}

\end{document}